\documentclass{article}

\usepackage{url}
\usepackage[T1]{fontenc}
\usepackage[utf8]{inputenc}
\usepackage{microtype}
\usepackage{amsmath,amssymb,amsthm,mathtools}
\usepackage{enumitem}
\usepackage{graphicx}
\graphicspath{{figures/}}
\usepackage{booktabs}
\usepackage{tabularx}
\usepackage{caption}
\usepackage{subcaption}
\usepackage{xcolor}
\usepackage{aliascnt}
\usepackage[
  colorlinks=true,
  linkcolor=blue!55!black,
  citecolor=blue!45!black,
  urlcolor=blue!55!black
]{hyperref}
\usepackage[capitalize,noabbrev,nameinlink]{cleveref}
\usepackage{float}

\newcolumntype{Y}{>{\raggedright\arraybackslash}X}
\newcommand{\R}{\mathbb{R}}
\newcommand{\E}{\mathbb{E}}
\newcommand{\Prob}{\mathbb{P}}
\newcommand{\N}{\mathcal{N}}
\newcommand{\TV}{\mathrm{TV}}
\DeclareMathOperator{\Law}{Law}
\newcommand{\ptar}{\pi_{\mathrm{tar}}}
\newcommand{\Fhat}{\hat{F}}
\newcommand{\grad}{\nabla}
\newcommand{\eps}{\varepsilon}
\newcommand{\Id}{I_d}
\newcommand{\norm}[1]{\left\lVert #1 \right\rVert}
\newcommand{\abs}[1]{\left\lvert #1 \right\rvert}
\newcommand{\inner}[2]{\langle #1, #2 \rangle}
\newcommand{\dd}{\mathrm{d}}
\newcommand{\pihat}{\hat{\pi}}
\newcommand{\Vhat}{\hat{V}}

\newcommand{\Hloc}{\mathcal{H}_{\mathrm{loc}}}
\newcommand{\Exp}{\mathbb{E}}

\theoremstyle{plain}
\newtheorem{theorem}{Theorem}[section]

\newaliascnt{lemma}{theorem}
\newtheorem{lemma}[lemma]{Lemma}
\aliascntresetthe{lemma}

\newaliascnt{corollary}{theorem}
\newtheorem{corollary}[corollary]{Corollary}
\aliascntresetthe{corollary}

\newaliascnt{proposition}{theorem}
\newtheorem{proposition}[proposition]{Proposition}
\aliascntresetthe{proposition}

\newaliascnt{conjecture}{theorem}

\aliascntresetthe{conjecture}

\newaliascnt{assumption}{theorem}
\newtheorem{assumption}[assumption]{Assumption}
\aliascntresetthe{assumption}

\theoremstyle{definition}

\newaliascnt{definition}{theorem}
\newtheorem{definition}[definition]{Definition}
\aliascntresetthe{definition}

\newaliascnt{remark}{theorem}
\newtheorem{remark}[remark]{Remark}
\aliascntresetthe{remark}

\crefname{theorem}{theorem}{theorems}
\Crefname{theorem}{Theorem}{Theorems}
\crefname{lemma}{lemma}{lemmas}
\Crefname{lemma}{Lemma}{Lemmas}
\crefname{corollary}{corollary}{corollaries}
\Crefname{corollary}{Corollary}{Corollaries}
\crefname{proposition}{proposition}{propositions}
\Crefname{proposition}{Proposition}{Propositions}
\crefname{conjecture}{conjecture}{conjectures}
\Crefname{conjecture}{Conjecture}{Conjectures}
\crefname{assumption}{assumption}{assumptions}
\Crefname{assumption}{Assumption}{Assumptions}
\crefname{definition}{definition}{definitions}
\Crefname{definition}{Definition}{Definitions}
\crefname{remark}{remark}{remarks}
\Crefname{remark}{Remark}{Remarks}

\usepackage[dblblindworkshop, final]{neurips_2026}

\workshoptitle{AI for Accelerated Materials Design}

\usepackage[utf8]{inputenc} 
\usepackage[T1]{fontenc}    
\usepackage{hyperref}       
\usepackage{url}            
\usepackage{booktabs}       
\usepackage{amsfonts}       
\usepackage{nicefrac}       
\usepackage{microtype}      
\usepackage{xcolor}         

\title{On the Theoretical Limits of Learning Unconstrained Interatomic Potentials}

\author{%
  Prajwal Ananth\\
  Center for Applied Mathematics\\
  Cornell University\\
  Ithaca, NY
  \And
  Shuwen Yue\thanks{Corresponding author:
    \texttt{shuwen.yue@cornell.edu}.}\\
  R. F. Smith School of Chemical\\
  and Biomolecular Engineering\\
  Cornell University\\
  Ithaca, NY
}

\begin{document}

\maketitle

\begin{abstract}
Machine-learned interatomic potentials (MLIPs) accelerate molecular dynamics (MD) simulations by learning the energies and forces that govern atomic motion. Recent work has shown that relaxing physical inductive biases can improve computational efficiency and predictive accuracy, raising the question of whether energy-and-force matching alone is sufficient to ensure accurate sampling. We show that, for unconstrained hypothesis classes, the relationship between prediction error and sampling accuracy is exponentially ill-conditioned in dimension. In particular, prediction errors that decay exponentially with dimension can coexist with nearly maximal total variation distance between the learned and target distributions. We identify two failure mechanisms. The first is kinetic trapping, where relaxation to an accurate learned equilibrium requires exponentially long times, even when simulations are initialized from training configurations at which energies and forces are matched exactly. The second is equilibrium mismatch, where the learned stationary distribution itself remains far from the target. Under explicit assumptions, we show that equilibrium mismatch can persist even in local, symmetry-preserving models, where small per-atom errors coexist with a distributional discrepancy that approaches its maximum as system size increases. Finally, we establish global energy-control conditions that guarantee equilibrium accuracy and identify additional conditions required for rapid relaxation.
\end{abstract}

\section{Introduction}
\label{sec:introduction}
Machine-learned interatomic potentials (MLIPs) have emerged as a prominent application of scientific machine learning, enabling molecular simulations at a small fraction
of the cost of electronic structure methods. They are  trained to match energies and
forces on configurations drawn from expensive quantum mechanical calculations, and are subsequently deployed in classical molecular dynamics (MD) simulations to sample the equilibrium Boltzmann distribution associated with the underlying potential energy surface. The quantities of scientific interest, including free energies, transport
coefficients, and phase behavior, are functionals of this equilibrium distribution. Consequently, the practical utility of an MLIP depends not only on its accuracy in predicting energies and forces, but also on whether the resulting dynamics sample the correct distribution.

A growing body of empirical work reports that low test-set force and energy errors do not
translate into reliable simulations or accurate physical property predictions.
\citet{fu2023forces} find that force accuracy is poorly correlated with MD stability across
state-of-the-art models, while \citet{mannan2025uniffbench} report a similar ``reality gap''
with respect to experimental measurements. Similar failures have been observed for phonon spectra and
vibrational and thermal properties \citep{loew2025phonons}, particularly for
scalable direct-force models that do not enforce energy conservation
\citep{bigi2025darkside,fu2025smooth}.

These observations raise a natural question: \emph{Does minimizing the energy-and-force matching objective determine the distribution sampled by the induced molecular dynamics?} The question is further sharpened by a recent trend showing that relaxing physically motivated architectural constraints can improve efficiency and modeling capabilities. Whether optimizing an unconstrained learning objective can be sufficient at scale without additional architectural constraints or physical priors is one of the central open questions in the field \citep{creed2025}. Existing evidence, however, cannot
resolve it, for three reasons. First, it is empirical and tied to specific architectures,
datasets, and benchmark tasks, making the observed failures easy to attribute to insufficient data or limited model complexity. Second, it treats energy and force errors as primary evaluation metrics and reaches
mixed conclusions about their correlation with downstream performance. Third, it assesses
simulations through a small collection of low-dimensional observables, which can agree across
trajectories even when the underlying equilibrium distributions differ substantially. Consequently, it remains unclear whether unconstrained energy-and-force matching alone is sufficient to guarantee the correct equilibrium distribution, or whether additional constraints are required.

Our work addresses this gap by studying sampling guarantees and theoretically characterizing when energy-and-force matching ensures accurate sampling. Our main contributions are as follows:
\begin{itemize}[leftmargin=*, itemsep=2pt, topsep=2pt, parsep=0pt, partopsep=0pt]

\item \textbf{Exponential ill-conditioning for sampling accuracy:}
We show that the relationship between prediction error and sampling accuracy is exponentially ill-conditioned in dimension. In particular, arbitrarily small population energy and force errors can coexist with nearly maximal distributional error, establishing a fundamental limitation of unconstrained energy-and-force matching.

\item \textbf{High-dimensional failure mechanisms:}
We identify two sampling failure modes, kinetic trapping (\Cref{thm:kinetic-trapping,thm:data-interpolation}) and equilibrium mismatch (\Cref{thm:equilibrium-mismatch}), and prove that they can persist under exponentially small prediction errors, exact interpolation and initialization at training configurations.

\item \textbf{Roles and limitations of structural priors:}
We extend our analysis to conservative, local, permutation-invariant, and \(E(3)\)-symmetric hypothesis classes, characterizing the sampling guarantees and limitations of each structural prior (\Cref{tab:assumption-hierarchy}).

\item \textbf{Sufficient conditions for accurate sampling:}
We identify global energy-control conditions (\Cref{thm:sampling-guarantees}) that guarantee accurate equilibrium sampling and stronger conditions needed for rapid relaxation.

\end{itemize}

To our knowledge, this work provides the first formal analysis of the robustness of MLIP-driven MD simulations to energy-and-force matching errors. Our results are adversarial/worst-case in nature. They do not imply that any particular MLIP fails. Rather, they show that the learning objective does not preclude such failures.  Because these failures are established through explicit constructions, they reflect a limitation of energy-and-force matching rather than of any particular architecture or dataset. Our results therefore suggest that additional constraints beyond energy-and-force matching may be important for reliable MD sampling. While our work is primarily theoretical, we include a small set of controlled experiments on an Einstein crystal and FCC argon in \Cref{app:experimental-details} to illustrate the theory.

\section{Related work} Force matching is a standard route from reference calculations to
machine-learned potentials and underpins modern
MLIPs \citep{behler2021,zhang2018deepmd,schutt2021painn,batzner2022,batatia2022,musaelian2023,unke2021}. It is closely related to score matching because the score of a Boltzmann distribution is proportional to its force field. This connects our question to robustness analyses of score-based generative models, where a sharp contrast is known: diffusion models are robust to appropriately controlled $L^2$ errors in their time-dependent scores \citep{chen2023sampling,benton2024,li2024accel,liang2025}, whereas \citet{cao2026} recently showed that overdamped Langevin dynamics need not be robust to $L^2$ score error.

Our \Cref{thm:kinetic-trapping,thm:data-interpolation} extend their results (Theorems 1 and 7) to second-order (kinetic) Langevin dynamics used in molecular simulations. The proofs require a different, hypoelliptic exit-time argument because inertia fundamentally changes the short-time dynamics (see \Cref{rem:overdamped}). Our setting also requires control of energy errors and constrains the learned force field to be the negative gradient of a scalar potential. Because MLIPs are trained on reference energies and forces, we additionally establish control of the corresponding empirical errors.

Beyond the kinetic extension, our analysis establishes several additional results. \Cref{thm:equilibrium-mismatch} strengthens the asymptotic separation from $1-\eps$, for a constant $\eps$ fixed in advance, to $1-e^{-\Omega(d)}$, while preserving conservativity and a well-defined Gibbs equilibrium. \Cref{thm:sampling-guarantees} provides global energy-control conditions for equilibrium accuracy and additional conditions for rapid relaxation. Under explicit assumptions, \Cref{app:local-invariant-models} extends the equilibrium-mismatch result to a local, $E(3)$-invariant hypothesis class. The kinetic failures we construct are metastability phenomena \citep{bovier2004,bovier2005,menz2014}, induced by estimation error rather than inherited from the target potential.

\section{Background}
\subsection{Setup and notation}
\label{sec:setup-and-notation}
We consider a system of $N$ atoms with $d=3N$ configurational degrees of freedom. Positions and
velocities are $x,v\in\R^d$, and $\norm{\cdot}$ denotes the Euclidean norm. Throughout, we adopt
unit atomic masses. Heterogeneous masses can be absorbed by the coordinate rescaling
$x\mapsto M^{1/2}x$. We write $\Id$ for the identity
matrix, $A\succ0$ ($A\succeq0$) for positive (semi-)definiteness, $\lambda_{\min}(A)$ for the
smallest eigenvalue of a symmetric matrix, and $\N(m,\Sigma)$ for the Gaussian with mean $m$ and
covariance $\Sigma$.

The reference potential energy is $V:\R^d\to\R$ with force field $F=-\grad V$, and at
inverse temperature $\beta$ the target Boltzmann distribution is $\ptar\propto e^{-\beta V}$.
A machine-learned
interatomic potential (MLIP) is a surrogate $\Vhat_\theta$ with associated force field
$\Fhat_\theta=-\grad\Vhat_\theta$, obtained by minimizing the joint energy-and-force-matching
objective
\begin{equation}
\label{eq:energy-force-matching}
\theta \in \arg\min_\theta\ \E_{x\sim\ptar}
\Big[\,\abs{\Vhat_\theta(x)-V(x)}^2 + \lambda\,\norm{\Fhat_\theta(x)-F(x)}^2\,\Big],
\end{equation}
where $\lambda>0$ balances the two loss components. We suppress the subscript $\theta$ when the
parameterization is immaterial. \Cref{eq:energy-force-matching} is the ideal population objective with respect to the target distribution. Our analysis considers surrogates with arbitrarily small population loss, including constructions that exactly interpolate the empirical training data.

The Boltzmann distribution of the learned potential is $\pihat\propto e^{-\beta\Vhat}$, and we
write $\pihat_T$ for the position marginal of the distribution of $(x_t,v_t)$ at time $T$ under dynamics driven by $\Fhat$ from a specified initialization. Here $\pihat_T$
denotes the position distribution at time $T$, whereas $\pihat$ denotes the stationary
distribution induced by the learned potential. We measure sampling error by the total variation
distance between position marginals,
$\TV(\rho,\rho')=\sup_A\abs{\rho(A)-\rho'(A)}\in[0,1]$, and approximation error by the
$L^p$ norm,
$$\norm{\Fhat-F}_{L^p}=\big(\E_{x\sim\ptar}\norm{\Fhat(x)-F(x)}^p\big)^{1/p}$$ for $p\ge1$.
Note that the training objective \eqref{eq:energy-force-matching} controls the case $p=2$, while our results hold for every finite $p\ge1$.

\subsection{Ensemble sampling}
\label{sec:ensemble-sampling}
The thermodynamic state of an MD simulation is set by the ensemble it samples. We focus on the widely used canonical ($NVT$) ensemble, in which thermodynamic properties of interest are functionals of the equilibrium measure \citep{lelievre2016}. Many thermostat
algorithms exist \citep{leimkuhler2015}. We focus on the standard Langevin thermostat for clarity. The same mechanisms are expected to arise more broadly across ergodic $NVT$ and $NPT$ sampling schemes, although we do not formalize those extensions here. Under
Langevin thermostat and our unit-mass convention, the dynamics are governed by 
\begin{equation}
\label{eq:kinetic-langevin}
\dd x_t = v_t\,\dd t, \qquad
\dd v_t = F(x_t)\,\dd t - \gamma v_t\,\dd t + \sqrt{2\gamma\beta^{-1}}\,\dd B_t,
\end{equation}
where $B_t$ is a standard Brownian motion and
$\gamma>0$ is the friction coefficient.

Under mild regularity conditions, \eqref{eq:kinetic-langevin} is ergodic with unique stationary distribution given by the Gibbs measure on phase space, whose position marginal is $\ptar$ and whose velocity marginal is an independent Maxwell-Boltzmann distribution $\N(0,\beta^{-1}\Id)$. Consequently, long-time averages of observables converge to their canonical ensemble averages. In practice, however, simulations are  finite and the relevant questions are therefore whether the learned dynamics converge to the correct equilibrium and, if so, how rapidly they do so. These are quantified by the stationary distribution induced by the learned force field and its mixing time, defined as the time required for the distribution of $(x_t,v_t)$ to approach stationarity to a prescribed accuracy.

\subsection{Harmonic crystal potential}
\label{sec:harmonic-crystal}
To isolate failures arising from the learning objective rather than from the complexity of the underlying potential energy surface, we center our analysis on the harmonic crystal and then generalize our results to a broader class of target potentials.

The harmonic crystal is the simplest nontrivial model of a crystalline solid. It is smooth, strongly convex, and exactly solvable. It is obtained by expanding a smooth potential to second order about a stable equilibrium $\mu$:
\begin{equation}
\label{eq:harmonic-crystal}
V(x)=\tfrac12(x-\mu)^\top K(x-\mu), \qquad F(x)=-K(x-\mu),
\end{equation}
where $K=\grad^2V(\mu)\succ0$ is the force-constant matrix and its eigenpairs are the normal modes (phonons). The Boltzmann distribution is Gaussian, $\ptar=\N(\mu,(\beta K)^{-1})$, and is therefore strongly log-concave. Classical results imply that Langevin dynamics with arbitrary initialization converges to equilibrium in time polynomial in the dimension \citep{bakryemery1985,villani2009hypocoercivity,bakry2014analysis,bobkov1999isoperimetric}.

The isotropic case $K=k\Id$ is the Einstein model of a solid, in which every atom vibrates independently in an identical harmonic well \citep{einstein1907}. In this case, $\ptar$ factorizes into independent Gaussians $\bigotimes_i\N(\mu_i,(\beta k)^{-1})$. Harmonic systems are among the regimes in which learned force fields are generally expected to perform well. Observing sampling failures even in this setting suggests that similar failures may persist in more complicated potentials.

\section{Results}
We separate three questions that are often grouped under the term ``MD accuracy.'' First, does controlling population approximation error under the target distribution control the learned dynamics at a finite simulation time? Second, is the stationary distribution of the learned dynamics close to the target? Third, which architectural constraints and inductive biases are necessary to control either of these quantities and what failure modes they eliminate?

\begin{table}[!b]
\centering
\small
\caption{Error-to-sampling structure of the main results. ``Population error'' is measured with respect to the target distribution. The table distinguishes predictive accuracy, finite-time dynamics, and stationary-distribution accuracy.}
\label{tab:result-roadmap}
\begin{tabularx}{\textwidth}{@{}p{0.1\textwidth}YYY@{}}
\toprule
\textbf{Result} & \textbf{Setting} & \textbf{Error / Control condition} & \textbf{Physical conclusion} \\
\midrule
\Cref{thm:kinetic-trapping} & Harmonic target; 

displaced initialization & Exponentially small population error & Kinetic trapping for $e^{\Omega(d)}$ time \\
\Cref{thm:data-interpolation} & Harmonic target; training-data initialization & Exponentially small population error + Exact interpolation  & Kinetic trapping for $e^{\Omega(d)}$ time \\
\Cref{thm:equilibrium-mismatch} & Broader class of confining targets; arbitrary initialization & Exponentially small population error & Incorrect stationary distribution  at distance $1-e^{-\Omega(d)}$ \\
\Cref{thm:sampling-guarantees} & General targets with global energy control & Uniform or one-sided energy bounds & Stationary accuracy; uniform control also ensures relaxation. \\
\Cref{thm:local-equilibrium-mismatch} & Local, invariant site-energy models & Per-atom error $e^{-\Omega(\nu)}$ & Global equilibrium error $1-e^{-\Omega(N)}$ \\
\bottomrule
\end{tabularx}
\vspace{-2mm}
\end{table}

We answer the first two questions negatively under unconstrained energy-and-force matching and then study how additional structure changes these conclusions. \Cref{sec:non-equilibrium-initialization,sec:data-based-initialization} establish kinetic trapping, including under exact interpolation and initialization from the training set. \Cref{sec:general-targets} establishes equilibrium mismatch for a broader class of targets and arbitrary initialization. \Cref{app:local-class-failure} shows that locality and symmetry can potentially reduce the scale of hidden errors but do not prevent their amplification with system size. Finally, \Cref{sec:sufficient-conditions} separates conditions for accurate stationary distributions from those required for rapid relaxation.

\subsection{Kinetic trapping under exponentially small energy and force error}
\label{sec:non-equilibrium-initialization}
Our first result provides a simplified setting for \cref{thm:data-interpolation} and isolates finite-time dynamical failure. It establishes a lower bound on the total-variation distance between the distribution generated from a displaced initialization and the equilibrium distribution of an Einstein crystal.

\begin{theorem}\label{thm:kinetic-trapping}
Fix a constant $\xi>5$. Let $V(x)=\tfrac12\norm{x-\mu}^2$ with $\norm{\mu}=\xi\sqrt d$, so that $\ptar=\N(\mu,\Id)$. There
exists $\Vhat\in C^\infty(\R^d)$, with $\Fhat:=-\grad\Vhat$, and $\sup_x\norm{\grad^2\Vhat(x)}_{\rm op}\le H$
for a $d$-independent constant $H$, such that:
\begin{enumerate}[label=\emph{(\roman*)}, leftmargin=*] 
\item For every $p\ge1$, $\exists\, d_0(p,\xi)$ such that for $d\ge d_0(p,\xi)$,
\[
\E_{x\sim\ptar}\big[\abs{\Vhat(x)-V(x)}^p\big]^{1/p}\le e^{-\Omega(d)},\qquad
\E_{x\sim\ptar}\big[\norm{\Fhat(x)-F(x)}^p\big]^{1/p}\le e^{-\Omega(d)}
\]

\item There are constants $c,d_1>0$, depending only on $\xi$, such that for all $d\ge d_1$ and all $T\le e^{cd}$,
the distribution $\pihat_T$ induced by Langevin dynamics with $\Fhat$, initialized at
$x_0,v_0\stackrel{\text{iid}}{\sim}\N(0,\Id)$, satisfies
\[
\TV\big(\pihat_T,\ptar\big)\ge1-e^{-\Omega(d)}
\]
\end{enumerate}
Moreover $\Fhat$ is Lipschitz with a constant independent of $d$, so
\eqref{eq:kinetic-langevin} has a unique strong solution.
\end{theorem}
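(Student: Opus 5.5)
We construct $\Vhat$ by gluing: keep $\Vhat=V$ near the target mass, and create a spurious confining well near the origin, where the initialization lives. Set $r_1=\tfrac{\xi}{2}\sqrt d$ and let $\chi\in C^\infty(\R^d)$ be a radial cutoff with $\chi(x)=1$ for $\norm{x}\le r_1$ and $\chi(x)=0$ for $\norm{x}\ge r_1+1$. Take derivative bounds on $\chi$ that are uniform in $d$, which is possible since $\chi(x)=\eta(\norm{x}-r_1)$ for a fixed one-dimensional profile $\eta$. Define
\[
\Vhat(x)=\chi(x)\,\tfrac12\norm{x}^2+\bigl(1-\chi(x)\bigr)V(x)+C\chi(x),
\]
and choose the constant $C$ so that the origin well is separated from the target basin by a barrier of height $\Omega(d)$. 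A cleaner variant uses $\min$-type smoothing of the two quadratics $\tfrac12\norm{x}^2$ and $\tfrac12\norm{x-\mu}^2+\text{const}$. The smoothing must be done in a band where the mixing terms $\nabla\chi\,(V_1-V_2)$ have bounded Hessian. A Hessian bound uniform in $d$ therefore needs the two pieces to agree to $O(1)$ in $C^2$ on the transition band. I would arrange this by interpolating in the one-dimensional coordinate $s=\langle x,\mu\rangle/\norm{\mu}$. Both quadratics share the Hessian $\Id$, so their difference is affine, namely $-\langle x,\mu\rangle+\tfrac12\norm{\mu}^2$, and depends only on $s$. We can then write $\Vhat=\tfrac12\norm{x}^2+g(s)$ with $g(s)=\tfrac12\xi^2 d-\xi\sqrt d\,s$ for $s\ge s^*$. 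Choose $g$ as a smooth one-dimensional function with $|g''|\le H$ that creates a barrier at $s\approx\xi\sqrt d/2$ of height $\Theta(d)$. This is possible because $g$ has length scale $\sqrt d$ and slope $O(\sqrt d)$, so $g''=O(1)$ suffices. It then follows that $\grad^2\Vhat=\Id+g''(s)\hat\mu\hat\mu^\top$ is bounded and $\Fhat$ is globally Lipschitz.

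\textbf{Part (i).} $\Vhat=V$ exactly on $\{s\ge s^*\}$ with $s^*\le \tfrac{3}{4}\xi\sqrt d$. Under $\ptar$, $s\sim\N(\xi\sqrt d,1)$, so $\Prob(s<s^*)\le e^{-\xi^2 d/32}$. On the complement, $|\Vhat-V|$ and $\norm{\Fhat-F}$ grow at most polynomially, as $O(d+\norm{x}^2)$. Applying Cauchy--Schwarz gives
\[
\E\bigl[|\Vhat-V|^p\bigr]^{1/p}\le \mathrm{poly}(d)\,e^{-\xi^2d/(64p)},
\]
which is $e^{-\Omega(d)}$ once $d\ge d_0(p,\xi)$.

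\textbf{Part (ii).} Let $A=\{x:s(x)\ge \tfrac{3}{4}\xi\sqrt d\}$. Then $\ptar(A)\ge1-e^{-\Omega(d)}$, so it suffices to show $\Prob(x_T\in A)\le e^{-\Omega(d)}$ for all $T\le e^{cd}$. The dynamics of $(s_t,\dot s_t)$ decouple: $s$ is a one-dimensional kinetic Langevin process in the potential $\tfrac12s^2+g(s)$, started from $s_0\sim\N(0,1)$ and $\dot s_0\sim\N(0,1)$. This reduction to one dimension is what makes the hypoelliptic exit argument tractable. I would use a Lyapunov/energy argument for the total energy $E_t=\tfrac12\dot s_t^2+U(s_t)$, where $U(s)=\tfrac12s^2+g(s)$. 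Crossing to $A$ requires $E_t\ge U_{\max}=\Theta(d)$. Since $\ptar$'s phase-space Gibbs measure for $U$ is stationary, I would compare to stationarity. The initial law has a density ratio with respect to the stationary measure $\propto e^{-\beta E}$ restricted to the origin well that is bounded by $O(1)$. Hence, by a union bound over unit time intervals, the probability of reaching energy $U_{\max}$ by time $T$ is at most $O(T)\,e^{-U_{\max}+O(1)}\cdot\mathrm{poly}(d)$. A Freidlin--Wentzell/Eyring--Kramers style bound gives the same. Taking $c<U_{\max}/d$ then gives the claim.

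\textbf{Main obstacle.} The main obstacle is making this exit-time bound rigorous for the underdamped process: a per-unit-time crossing estimate uniform in the starting point. I would first try the stationarity-plus-union-bound argument, applied to the process reflected or stopped at the barrier top. Before running it, I would verify that the well's quasi-stationary mass places probability $e^{-\Omega(d)}$ near the barrier.
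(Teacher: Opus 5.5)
You take a genuinely different route from the paper, and it works once the step you flag is filled in.

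\textbf{How the two routes differ.} The paper glues an isotropic stiff well $\tfrac{\alpha}{2}\norm{x}^2$ ($\alpha$ a large constant) into $V$ through a radial cutoff between radii $4\sqrt d$ and $b_\xi\sqrt d$. It gets trapping from a $d$-dimensional exit-time lemma for linear kinetic Langevin: Dudley chaining over $[0,1]\times\mathbb{S}^{d-1}$, Borell--TIS, and a union bound over unit windows. There, confinement comes from the stiffness shrinking the thermal radius to $\sqrt{d/\alpha}$, not from an explicit barrier.

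You instead use that $V-\tfrac12\norm{x}^2$ is affine in $s=\inner{x}{\hat\mu}$ and make the rank-one modification $\Vhat=\tfrac12\norm{x}^2+g(s)$. This buys three things. The Hessian bound $\grad^2\Vhat=\Id+g''(s)\hat\mu\hat\mu^\top$ is immediate. Part (i) reduces to a Gaussian tail bound in $s$. The $s$-coordinate decouples, so part (ii) becomes a one-dimensional Arrhenius problem. Your construction also works for every fixed $\xi>0$, whereas the paper's geometry needs $\xi>5$.

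What the paper's heavier lemma buys is reuse. Its scale invariance lets the same isotropic trap be placed in balls of radius $0.1\sqrt d$ around every training point in \Cref{thm:data-interpolation}. There the process must be confined in all directions, and a single-direction barrier would not suffice.

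\textbf{Construction.} Discard your first sketch with a cutoff band of width $1$: there $(V_1-V_2)\grad^2\chi=O(d)$. What makes both your $g$ and the paper's cutoff work is a transition of width $\Theta(\sqrt d)$. Your own $g$ differs from the affine branch by $O(d)$ in value and $O(\sqrt d)$ in slope, and this is harmless for exactly that reason.

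\textbf{Part (ii).} A bounded density ratio alone does not give the union bound. You still need a per-window crossing estimate under (sub)stationarity. For example, Rice's formula gives an upcrossing rate $e^{-U(s_b)}/Z$ at the barrier top $s_b$ under the full Gibbs measure. Combined with the ratio $\tfrac{Z}{2\pi}e^{\sup g}$ of the initial law, the factors of $Z$ cancel and leave a rate $e^{-(U(s_b)-\sup g)}$ per unit time.

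The simplest way to close this step, though, is a specific choice of $g$. Take $g'(s)=-\xi\sqrt d\,\phi(s/\sqrt d)$, with a fixed smooth step $\phi$ vanishing on $(-\infty,a]$ and equal to $1$ on $[b,\infty)$, where $0<a<b\le\tfrac34\xi$. Then, until $s_t$ first reaches $a\sqrt d$, the $s$-dynamics is exactly a linear one-dimensional kinetic OU process with unit stiffness, whose stochastic part has variance at most $1$. Borell--TIS on each unit window plus a union bound gives $\Prob(\sup_{t\le T}s_t\ge a\sqrt d)\le e^{-\Omega(d)}$ for all $T\le e^{cd}$, for any $c<a^2/8$. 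This needs no Kramers, reflection, or quasi-stationarity argument.
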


\Cref{thm:kinetic-trapping} shows that, in high dimensions, the distribution induced by MD can remain far from $\ptar$ over every polynomial time horizon despite exponentially small energy and force errors. The construction (\Cref{app:kinetic-trapping-proof}) introduces a smooth basin far from $\mu$, where $\ptar$ assigns mass $e^{-\Omega(d)}$, and interpolates back to the target potential wherever $\ptar$ places appreciable mass. The perturbation is therefore invisible to the population matching objective while substantially altering the resulting dynamics. Importantly, $\Vhat$ is smooth and conservative, its force is globally Lipschitz, and its Hessian bound is independent of dimension. These properties ensure regularity and well-posed dynamics, but the theorem shows that they do not control the global wells and barriers that determine sampling. In particular, such hidden basins can coexist with force errors that decay exponentially with dimension.

The freedom exploited by the construction reflects a practical feature of MLIP training. Reference data can be generated only for configurations on which the electronic structure calculation remains well defined and computationally tractable. Over much of configuration space such calculations are ill-defined: strongly overlapping atoms and dissociated fragments prevent SCF convergence or demand treatments beyond standard single-reference methods. The Boltzmann density is strictly positive throughout, so these regions may be absent from the data while still having nonzero probability under the target distribution.

It is tempting to dismiss such regions as dynamically irrelevant. For a diatomic, this intuition is
correct: the trajectory is confined by a dissociation barrier far exceeding $k_BT$, and no
reasonable initialization escapes it. Two effects break this intuition in high dimensions.
First, that barrier is a property of $V$, whereas the simulation is integrated on $\Vhat$, which
\eqref{eq:energy-force-matching} constrains only where $\ptar$ places mass. Second, independently of reachability, low $\ptar$ density does not imply low $\pihat$-mass. For a target concentrating on a shell of radius $\Theta(\sqrt d)$, the region beyond radius $R$ may have $\ptar$-mass $e^{-\Omega(d)}$ while its volume scales as $R^{\Theta(d)}$. Suppressing $\pihat$-mass in such regions therefore requires an energy penalty that grows with $d$, which an $L^2$ bound does not guarantee. This energy-entropy competition is what \Cref{thm:sampling-guarantees} rules out, through a one-sided lower bound $\Vhat\ge V-\delta$ that holds
globally rather than only on the equilibrium region.

\textbf{Proof Sketch:}
The proof is based on a hypoelliptic exit-time bound. The technical core is \Cref{lem:kinetic-exit-time} in \Cref{app:kinetic-exit}. Consider the linear kinetic Langevin dynamics
\[\dd x_t=v_t\,\dd t,\qquad \dd v_t=(-\alpha x_t-\gamma v_t)\,\dd t+\sqrt{2\gamma}\,\dd B_t,\]
initialized with $\|x_0\|\le\rho_x\sqrt{d}$ and $\|v_0\|\le\rho_v\sqrt{d}$. The lemma shows that the exit time from the ball $\{\|x\|\le r\sqrt{d}\}$ exceeds $e^{c_\star d}$ with probability at least $1 - Ce^{-c_\star d}$, provided
\[\alpha\ge\alpha_\star=\Theta\!\left(\frac{1+\gamma^2\rho_x^2+\rho_v^2}{(r-\rho_x)^2}\right).\]
A key feature is that the effect of the initial velocity enters only through $\rho_v/\sqrt{\alpha}$, so increasing the stiffness suppresses ballistic excursions. Moreover, although $\alpha_\star$ grows as the margin $r-\rho_x$ shrinks, the exponent $c_\star=\alpha_\star(r-\rho_x)^2/64$ remains unchanged. This scale invariance allows the same lemma to be applied at the much smaller radii required for \Cref{thm:data-interpolation}.

Since $\|x_0\|,\|v_0\|\le1.1\sqrt{d}$ with probability $1-e^{-\Omega(d)}$, taking $r=4$ and $\rho_x=\rho_v=1.1$ confines the trajectory to $\|x_t\|\le4\sqrt{d}$ for every $T\le e^{c_\star d}$. In contrast, $\ptar$ concentrates outside this ball because $\|\mu\|=\xi\sqrt{d}$ with $\xi>5$. Consequently, the distribution induced by the dynamics remains at total variation distance $1-e^{-\Omega(d)}$ from $\ptar$ throughout this time horizon. Meanwhile, the energy and force errors remain exponentially small because $\Vhat$ differs from $V$ only inside $\|x\|\le b_\xi\sqrt{d}$, where $b_\xi<\xi-1$, a region whose $\ptar$ mass is $e^{-\Omega(d)}$. The complete proof is in \Cref{app:kinetic-trapping-proof}.

\subsection{Kinetic Trapping with exact interpolation and initialization from training data}
\label{sec:data-based-initialization}

A standard practice is to initialize MD from a configuration within the training set. We show that even when the same samples are used both to fit an exactly interpolating $\Fhat$ and to initialize MD, the failure mode of \Cref{thm:kinetic-trapping} persists.

\begin{figure}[!t]
    \centering
    \begin{subfigure}[t]{0.48\textwidth}
        \centering
        \includegraphics[width=0.9\linewidth]{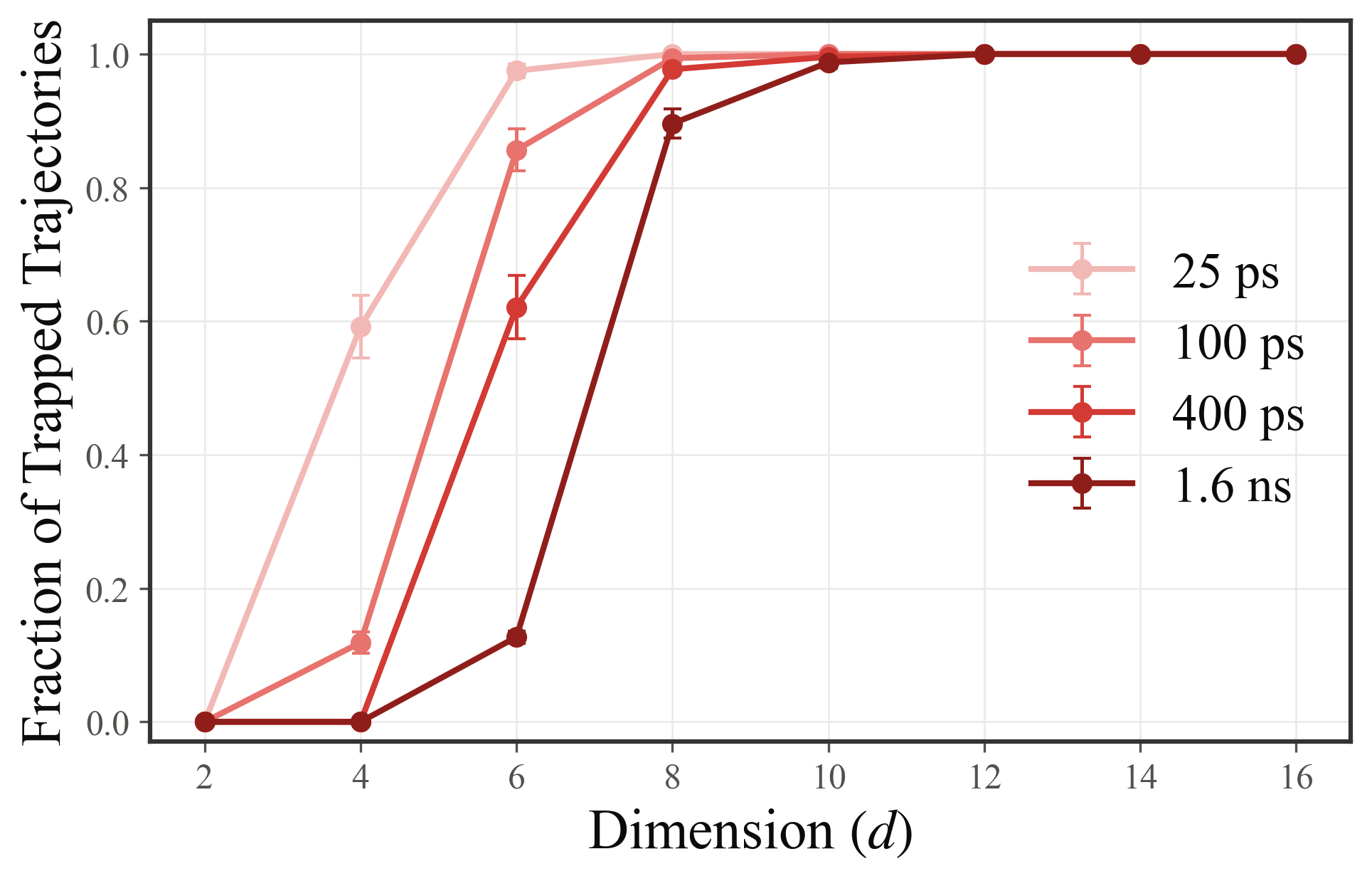}
        \caption{Kinetic trapping in the Einstein crystal}
        \label{fig:einstein-escape-times}
    \end{subfigure}
    \hfill
    \begin{subfigure}[t]{0.48\textwidth}
        \centering
        \includegraphics[width=0.9\linewidth]{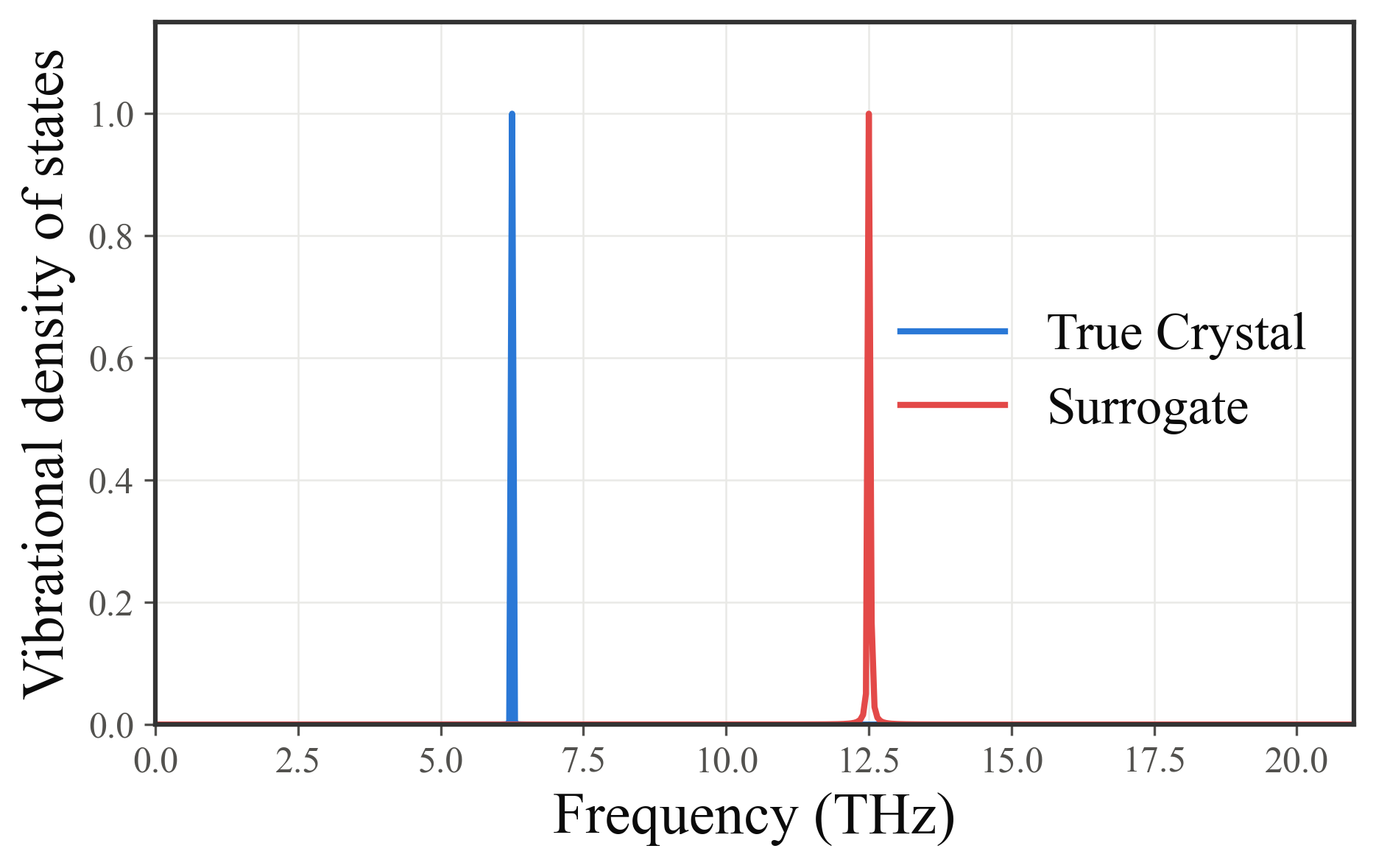}
        \caption{Phonon vibrational density of states}
        \label{fig:einstein-phonon-density}
    \end{subfigure}
    \caption{Illustration of kinetic trapping mechanism underlying \Cref{thm:kinetic-trapping,thm:data-interpolation}  (details in \Cref{app:einstein-crystal-experiment}). (a) Escape time grows as $e^{\Omega(d)}$ with an $\mathcal{O}(1)$ rate per degree of freedom. The fully trapped plateau extends to arbitrarily large $d$. (b) The true Einstein crystal shows a single sharp line at the analytic $\nu_E=6.25\,\mathrm{THz}$; the trapped surrogate's line is at $12.5\,\mathrm{THz}$.}
    \label{fig:einstein-crystal-mechanism}
    \vspace{-2mm}
\end{figure}
\begin{theorem}
\label{thm:data-interpolation}
Let $\mu\in\R^d$, $V(x)=\tfrac12\norm{x-\mu}^2$, so that $\ptar=\N(\mu,\Id)$, and let
$x_1,\dots,x_n$ with $n=\mathrm{poly}(d)$ be i.i.d.\ samples from $\ptar$. With probability
$1-e^{-\Omega(d)}$ over the draw of the training set, there exists $\Vhat\in C^\infty(\R^d)$,
depending on $x_1,\dots,x_n$, with $\Fhat:=-\grad\Vhat$ and
$\sup_x\norm{\grad^2\Vhat(x)}_{\rm op}\le H$ for a $d$-independent constant $H$, such that:
\begin{enumerate}[label=\emph{(\roman*)}, itemsep=1pt,topsep=1pt, leftmargin=*]
\item $\Vhat(x_i)=V(x_i)$ and $\Fhat(x_i)=F(x_i)$ for every $i=1,\dots,n$; that is, $\Vhat$
attains zero empirical energy-and-force matching loss on the training set.
\item For every $p\ge1$, $\exists\, d_0(p)$ such that for $d\ge d_0(p)$,
\[
\E_{x\sim\ptar}\big[\abs{\Vhat(x)-V(x)}^p\big]^{1/p}\le e^{-\Omega(d)},\qquad
\E_{x\sim\ptar}\big[\norm{\Fhat(x)-F(x)}^p\big]^{1/p}\le e^{-\Omega(d)}
\]
\item There are constants $c,d_1>0$ such that for all $d\ge d_1$ and all $T\le e^{cd}$,
the distribution $\pihat_T$ induced by Langevin dynamics with $\Fhat$, initialized at
$x_0\sim\frac1n\sum_i\delta_{x_i}$ and $v_0\sim\N(0,\Id)$, satisfies
\[
\TV\big(\pihat_T,\ptar\big)\ge1-e^{-\Omega(d)}
\]
\end{enumerate}
\end{theorem}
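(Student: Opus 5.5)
We build $\Vhat$ from many small stiff traps, one centred at each training point. Each trap agrees with $V$ to first order at its centre, so the interpolation conditions hold exactly. Since $n=\mathrm{poly}(d)$ and each trap is supported on a ball of radius $r\sqrt d$ with $r$ a small constant, the traps are invisible to the population objective. The dynamics started at a training point are then confined to a ball around it for time $e^{cd}$, by \Cref{lem:kinetic-exit-time} used at small radius; the scale invariance of $c_\star$ noted in the sketch of \Cref{thm:kinetic-trapping} is exactly what allows this. Throughout we take $\mu=0$ without loss of generality.

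\textbf{Step 1: training-set geometry.} With probability $1-e^{-\Omega(d)}$ the training points satisfy two properties. First, every $\norm{x_i}$ lies in $[0.9\sqrt d,1.1\sqrt d]$. Second, every pair satisfies $\norm{x_i-x_j}\ge 1.3\sqrt d$. Both follow from Gaussian norm concentration together with a union bound over $n^2=\mathrm{poly}(d)$ pairs, since $x_i-x_j\sim\N(0,2\Id)$ has norm near $\sqrt{2d}$. Fix small constants $0<\rho<r<0.5$. The balls $B_i=\{\norm{x-x_i}\le 2r\sqrt d\}$ are then disjoint.

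\textbf{Step 2: the construction.} Let $\chi_i$ be a smooth cutoff equal to $1$ on $\{\norm{x-x_i}\le r\sqrt d\}$ and $0$ outside $B_i$. Take $\chi_i(x)=\phi(\norm{x-x_i}/\sqrt d)$ with $\phi$ fixed, so that $\norm{\grad\chi_i}\lesssim d^{-1/2}$ and $\norm{\grad^2\chi_i}\lesssim d^{-1}$. Inside the ball of radius $r\sqrt d$ we replace $V$ by the local potential
\[ W_i(x)=V(x_i)+\inner{x_i}{x-x_i}+\tfrac{\alpha}{2}\norm{x-x_i}^2, \]
which matches $V$ and $\grad V$ at $x_i$. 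We then set $\Vhat=V+\sum_i\chi_i(W_i-V)$. The difference $W_i-V=\tfrac{\alpha-1}{2}\norm{x-x_i}^2$ is of order $d$ on $B_i$, its gradient is of order $\sqrt d$, and its Hessian is $O(1)$. Expanding the Hessian of $\chi_i(W_i-V)$ by the product rule, each cross term is therefore $O(1)$, which gives a $d$-independent $H$. Part (i) is immediate from the construction.

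On the inner ball the force is $-x_i-\alpha(x-x_i)$. In the shifted variable $y=x-x_i$ this is linear Langevin dynamics with a constant drift $-x_i$ of norm about $\sqrt d$. To absorb the drift we recentre the trap at $z_i=x_i-x_i/\alpha$, writing $W_i=c+\tfrac\alpha2\norm{x-z_i}^2$. This still matches $V$ and $\grad V$ at $x_i$. The initial offset is $\norm{x_i-z_i}\le 1.1\sqrt d/\alpha$, so the condition $\rho_x\sqrt d$ with $\rho_x=1.1/\alpha$ is small once $\alpha$ is large. The initial velocity satisfies $\norm{v_0}\le 1.1\sqrt d$ with high probability.

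\textbf{Step 3: trapping.} Apply \Cref{lem:kinetic-exit-time} around $z_i$ with radius $r/2$, $\rho_x=1.1/\alpha$ and $\rho_v=1.1$. We must choose $\alpha\ge\alpha_\star=\Theta\bigl((1+\gamma^2\rho_x^2+\rho_v^2)/(r/2-\rho_x)^2\bigr)$. This is a self-consistent choice, because $\rho_x$ decreases as $\alpha$ grows. The lemma keeps the trajectory inside $\bigcup_i\{\norm{x-x_i}\le r\sqrt d\}$, which has $\ptar$-mass at most $n\,e^{-\Omega(d)}$, for all $T\le e^{cd}$ with probability $1-e^{-\Omega(d)}$. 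Taking $A$ to be this union gives $\pihat_T(A)\ge1-e^{-\Omega(d)}$ while $\ptar(A)\le e^{-\Omega(d)}$, which is (iii).

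\textbf{Step 4: population error.} We must show that the $\ptar$-mass of $\bigcup B_i$ is $e^{-\Omega(d)}$. For a fixed point $x_i$ with $\norm{x_i}\approx\sqrt d$, a fresh sample $x\sim\N(0,\Id)$ has $\norm{x-x_i}^2\approx 2d$, so landing within $2r\sqrt d$ with $2r<1$ is a large deviation of probability $e^{-\Omega(d)}$. This is uniform over the high-probability training set, and a union bound over $n$ points keeps it $e^{-\Omega(d)}$. The errors $\abs{\Vhat-V}$ and $\norm{\Fhat-F}$ are polynomial in $d$ on $B_i$ and zero elsewhere, so Hölder's inequality gives $L^p$ errors of order $\mathrm{poly}(d)\,e^{-\Omega(d)/p}=e^{-\Omega(d)}$ for $d\ge d_0(p)$, which is (ii).

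\textbf{Main obstacle.} I expect the main difficulty to be the interplay in Steps 2 and 3. The trap must match the gradient $-x_i$, of size $\sqrt d$, at a point where it must also be nearly centred, while the cutoff transition has to keep the Hessian $O(1)$. The planned fix is the recentring at $z_i$ with $\alpha$ large, which makes the offset $\rho_x$ small relative to the radius. One must also check that no constant in \Cref{lem:kinetic-exit-time} blows up as a result.
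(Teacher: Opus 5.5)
Your proposal is correct and is essentially the paper's proof. The paper uses the same wells $U_i=\tfrac{\alpha}{2}\|x-w_i\|^2+c_i$ with $w_i=x_i-(x_i-\mu)/\alpha$, which is your $z_i$; this is simply your first $W_i$ after completing the square, so no actual recentring is needed. It cuts the wells off on disjoint balls of radius $\Theta(\sqrt d)$ and applies \Cref{lem:kinetic-exit-time} at small radius with a fixed $\rho_x=0.05$ secured by $\alpha\ge 40$; your $\alpha$-dependent $\rho_x=1.1/\alpha$ also works, since $\alpha$ is a $d$-independent constant. The paper bounds each ball's $\ptar$-mass via the projection $\langle x-\mu,x_i-\mu\rangle/\|x_i-\mu\|\sim\N(0,1)$ rather than your norm large-deviation estimate.

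One small fix: with pairwise separation $1.3\sqrt d$, the balls of radius $2r\sqrt d$ are disjoint only when $r<0.325$, not for all $r<0.5$, so simply take $r$ smaller.
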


As before, the uniform Hessian bound makes $\Fhat$ globally Lipschitz, so \eqref{eq:kinetic-langevin} has a unique strong solution. The full proof is given in \Cref{app:data-initialization-proof}.

\Cref{thm:data-interpolation} strengthens \Cref{thm:kinetic-trapping} in two ways: the initialization is a training
configuration drawn from $\ptar$, and the learned potential fits the training data exactly in both energy and force. The
failure therefore cannot be detected by the standard energy and force metrics, with an empirical loss driven
to zero and an exponentially small test error coexisting with dynamics that remains far from the target over
any $\mathrm{poly}(d)$ horizon. The proof exploits the fact that in high dimensions each training
configuration occupies a neighborhood of exponentially small $\ptar$-measure, so the surrogate can
place a basin around every sample while still matching energies and forces exactly there.
The $\mathrm{poly}(d)$ basins together carry probability $e^{-\Omega(d)}$, leaving the population
errors exponentially small while trajectories launched from the data remain confined. 

\subsection{Equilibrium mismatch for arbitrary initialization}
\label{sec:general-targets}

\Cref{thm:kinetic-trapping,thm:data-interpolation} establish finite-horizon failures. One might therefore suspect that they arise from initialization or a limited simulation budget, and that alternative initializations or longer trajectories would recover $\ptar$. This is indeed true for the preceding constructions. By \Cref{prop:one-sided-equilibrium-control}, both learned potentials induce accurate stationary distributions, so the failure is purely kinetic. We next establish a stronger failure mode in which the learned stationary distribution is itself incorrect. This failure applies to a broader class of target distributions, holds for arbitrary initialization, and persists as $t\to\infty$. Throughout this section, we assume that the target satisfies \Cref{ass:confinement,ass:target-concentration} and take $\inf V=0$ without loss of generality.

\Cref{ass:confinement} is the standard Lyapunov condition under which \eqref{eq:kinetic-langevin} is
hypoelliptic and geometrically ergodic with the phase-space Gibbs measure as its unique invariant
measure \citep{hormander1967,mattingly2002}. \Cref{ass:target-concentration} says the target concentrates on a ball
of the physical radius $O(\sqrt d)$, true for every strongly log-concave target and more generally
under a dimension-free log-Sobolev inequality, and that the energy is extensive on that scale, true
whenever $\grad^2V$ is bounded.

\begin{theorem}
\label{thm:equilibrium-mismatch}
Let $\ptar\propto e^{-\beta V}$ satisfy \Cref{ass:confinement,ass:target-concentration} and fix $\kappa_2>0$. There
exists $\Vhat\in C^\infty(\R^d)$, with $\Fhat:=-\grad\Vhat$ and
$\sup_x\norm{\grad^2\Vhat(x)}_{\rm op}\le L+H'$ for a $d$-independent constant $H'$, such that:
\begin{enumerate}[label=\emph{(\roman*)}, itemsep=1pt,topsep=1pt, leftmargin=*]
\item For every $p\ge1$, $\exists\,d_0(p)$ such that for $d\ge d_0(p)$,
\[
\E_{x\sim\ptar}\big[\abs{\Vhat(x)-V(x)}^p\big]^{1/p}\le e^{-\Omega(d)},\qquad
\E_{x\sim\ptar}\big[\norm{\Fhat(x)-F(x)}^p\big]^{1/p}\le e^{-\Omega(d)}
\]
\item Langevin dynamics \eqref{eq:kinetic-langevin} driven by $\Fhat$ is geometrically ergodic, its
unique stationary distribution is the phase-space Gibbs measure of $\Vhat$, whose position marginal
is $\pihat\propto e^{-\beta\Vhat}$, and for all $d$ larger than a constant,
\[
\TV\big(\pihat,\ptar\big)\ \ge\ 1-e^{-\Omega(d)}\]
\item Consequently, for \emph{every} initial law with
$\E\big[V(x_0)+\norm{x_0}^2+\norm{v_0}^2\big]<\infty$,
\[
\lim_{t\to\infty}\TV\big(\pihat_t,\ptar\big)\ =\ \TV\big(\pihat,\ptar\big)\ \ge\ 1-e^{-\Omega(d)}
\]
\end{enumerate}
Moreover $\Vhat=V$ outside a ball of radius $2\theta\sqrt d$ carrying $\ptar$-mass $e^{-\Omega(d)}$,
and $\norm{\Vhat-V}_\infty=O(d/\beta)$.
\end{theorem}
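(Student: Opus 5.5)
Define $\Vhat=V-\phi$, where $\phi\ge0$ is a smooth bump supported in the shell $\{\theta\sqrt d\le\|x\|\le 2\theta\sqrt d\}$. Here $\theta$ is the concentration radius from \Cref{ass:target-concentration}, chosen so that $\ptar(\|x\|\ge\theta\sqrt d)\le e^{-c_1 d}$, and $V\le \kappa_1 d/\beta$ on the ball of radius $2\theta\sqrt d$ (energy extensivity). Concretely, take $\phi(x)=A\,\chi(\|x\|/\sqrt d)$ with $\chi$ a fixed smooth profile. It equals $1$ on $[1.2\theta,1.8\theta]$, vanishes outside $[\theta,2\theta]$, and has derivatives bounded independently of $d$. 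The depth is $A=(\kappa_1+\kappa_2+\log 2)\,d/\beta$. Because of the $1/\sqrt d$ scaling, $\|\grad^2\phi\|_{\rm op}=O(A/d)=O(1/\beta)$, which is $d$-independent. Hence $\sup\|\grad^2\Vhat\|\le L+H'$. Also $\Vhat=V$ outside the $2\theta\sqrt d$ ball and $\|\Vhat-V\|_\infty=A=O(d/\beta)$. This is where $\kappa_2$ enters: it sets the excess depth of the well.

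For part (i), $\Vhat-V$ and $\Fhat-F$ vanish on $\{\|x\|<\theta\sqrt d\}$ and are bounded by $A$ and $\|\grad\phi\|_\infty=O(A/\sqrt d)=O(\sqrt d/\beta)$ respectively. Therefore
\[
\E_{\ptar}|\Vhat-V|^p\le A^p e^{-c_1 d},
\]
and similarly for the forces. Taking $p$-th roots gives $O(d)\,e^{-c_1 d/p}=e^{-\Omega(d)}$ for $d\ge d_0(p)$.

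For part (ii), let $S=\{1.2\theta\sqrt d\le\|x\|\le1.8\theta\sqrt d\}$. We bound the ratio $Z_{\hat\pi}(S^c)/Z_{\hat\pi}(S)$. On $S$ we have $\Vhat\le \kappa_1 d/\beta - A$, so
\[
\int_S e^{-\beta\Vhat}\ge e^{\beta A-\kappa_1 d}\,|S|.
\]
Since $|S|=c_d(\theta\sqrt d)^d(1.8^d-1.2^d)$, it is comparable to the volume of the ball $B_\theta$ of radius $\theta\sqrt d$ up to a factor $e^{\Omega(d)}$. Off $S$ we use $\Vhat\ge V-A\mathbf 1_{\mathrm{shell}}$, which gives
\[
\int_{S^c}e^{-\beta\Vhat}\le Z+e^{\beta A}\int_{\mathrm{shell}\setminus S}e^{-\beta V}.
\]
This must be compared against the main term. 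I expect this to be the main obstacle: it is an energy–entropy estimate, and it needs a lower bound on $Z=\int e^{-\beta V}$ as well as control of the thin transition layer. For $Z$, I use $V\ge0$ to get $Z\le$ the relevant quantity on bounded regions, and concentration to get $Z\ge \tfrac12\int_{B_\theta}e^{-\beta V}\ge\tfrac12 |B_\theta|e^{-\kappa_1 d}$. The upper side is the subtle one. I would instead compare directly: $\hat\pi(S^c)\le\frac{Z+e^{\beta A}\ptar(\text{shell})Z}{e^{\beta A-\kappa_1 d}|S|}$. Then I bound $Z\le e^{O(d)}|B_\theta|$ using concentration, $V\ge0$, and the tail assumption. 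The requirement is that $\beta A$ beat these $O(d)$ constants, so $\kappa_1+\kappa_2$ may need to be enlarged by a fixed constant factor. In addition, the shell term carries the factor $\ptar(\mathrm{shell})\le e^{-c_1d}$, and the $e^{\beta A}$ cancels against the numerator. The conclusion is $\hat\pi(S)\ge1-e^{-\Omega(d)}$ while $\ptar(S)\le e^{-c_1 d}$. Hence $\TV\ge\hat\pi(S)-\ptar(S)\ge1-e^{-\Omega(d)}$.

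For geometric ergodicity and part (iii), note that $\Vhat$ agrees with $V$ outside a compact set and is smooth. So the Lyapunov function of \Cref{ass:confinement} for $V$ still works for $\Vhat$ after adjusting constants on a compact set, and hypoellipticity is unaffected. Harris' theorem (as in \citep{mattingly2002}) then gives convergence of the phase-space law in weighted TV for any initial law with $\E[V(x_0)+\|x_0\|^2+\|v_0\|^2]<\infty$. Position marginals converge in TV, and continuity of $\TV(\cdot,\ptar)$ yields the limit identity.
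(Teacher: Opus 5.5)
Your construction, a radial shell well of depth $A=\Theta(d/\beta)$ and width $\Theta(\sqrt d)$, is fine for part (i), for the Hessian bound, and for the ergodicity argument. The total-variation step in part (ii), however, fails because of the test set $S$ you chose. After the factor $e^{\beta A}$ cancels, the transition-layer contribution to your bound on $\pihat(S^c)$ is
\[
\frac{e^{\kappa_1 d}\,Z\,\ptar(\mathrm{shell}\setminus S)}{|S|}\ \le\ \frac{2\,e^{(\kappa_1-c_1)d}}{1.8^d-1.2^d}.
\]
This is $e^{-\Omega(d)}$ only if $\kappa_1<c_1+\log 1.8$. Nothing guarantees that: $\kappa_1$ is an extensivity constant and $c_1$ a concentration rate, and enlarging $\kappa_2$ does not touch this term.

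In fact the intermediate claim $\pihat(S)\ge1-e^{-\Omega(d)}$ is false. Take $V=\tfrac12\norm{x}^2$ and $\beta=1$, so concentration forces $\theta>1$. In the variable $r=\norm{x}/\sqrt d$, the radial density of $\pihat$ is proportional to $\exp(d f(r)-\log r)$, where $f(r)=\log r-\tfrac{r^2}{2}+a\chi(r)$ and $a=A/d$ is fixed. On $S$, $f$ is decreasing because $1.2\theta>1$, so its maximum there is at $r=1.2\theta$, where $f'=1/r-r<0$. Since $\chi$ is smooth and $\equiv1$ on $[1.2\theta,1.8\theta]$, we have $1-\chi(1.2\theta-\epsilon)=o(\epsilon)$. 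Hence $f$ is strictly larger just inside the inner transition layer, by a $d$-independent margin. So $\pihat$ concentrates in $\mathrm{shell}\setminus S$ and $\pihat(S)=e^{-\Omega(d)}$, whatever $\kappa_2$ is.

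The missing idea is to test on the set where $\Vhat=V$, not on the flat bottom of the well. Then
\[
\pihat(\mathrm{shell}^c)=\widehat Z^{-1}\int_{\mathrm{shell}^c}e^{-\beta V}\le Z\,e^{\kappa_1 d-\beta A}/|S|
\]
has no $e^{\beta A}$ in the numerator, so the depth choice makes it $e^{-\Omega(d)}$, while $\ptar(\mathrm{shell}^c)\ge1-e^{-c_1 d}$. This is exactly the paper's argument, with $B_{\rm out}^c$ as the test set.

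Separately, a shell around the concentration point cannot give the final clause of the statement. The only ball of radius $2\theta\sqrt d$ containing your shell is the centered one, and it carries almost all of $\ptar$. The paper instead uses a localized well $\Vhat=V-\Delta\chi$ supported in $B_{\rm out}=B(x_\star,2\theta\sqrt d)$, with $\norm{x_\star-\bar x}=(R+2\theta)\sqrt d$. Then $B_{\rm out}$ avoids the interior of $B(\bar x,R\sqrt d)$, while the flat core $B(x_\star,\theta\sqrt d)$ lies inside $B(\bar x,(R+3\theta)\sqrt d)$. That is exactly where \Cref{ass:target-concentration} supplies $V\le\Lambda d$.

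Note also that in that assumption the concentration radius is $R$, not $\theta$. So your extensivity bound on the larger ball is not assumed directly, although it can be recovered from $V\ge0$, $V(\bar x)\le\Lambda d$ and the Hessian bound $L$. With the paper's geometry, a volume comparison gives $\ptar(B_{\rm in})\ge\tfrac12(\theta/R)^de^{-\beta\Lambda d}$, and hence $\pihat(B_{\rm out}^c)\le e^{-\kappa_2 d}$ in two lines.
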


The construction (\Cref{app:equilibrium-mismatch-proof}) subtracts from $V$ a smooth well of depth $\Delta$, supported on a ball of radius $2\theta\sqrt d$ placed just beyond the region where $\ptar$ concentrates. Two competing requirements determine $\Delta$. The well must be deep enough that $e^{\beta\Delta}$ compensates for the exponentially small prior mass that $\pihat$ would otherwise assign to the ball, requiring $\beta\Delta=\Omega(d)$. At the same time, it must remain shallow and wide enough that the induced curvature, $\Delta/(\theta\sqrt d)^2$, stays independent of $d$, which is achieved by taking $\beta\Delta=O(d)$ and a width of $\Theta(\sqrt d)$. These requirements are compatible, yielding a confining potential with a well-defined Boltzmann equilibrium that is not the target equilibrium.

\begin{remark}
\label{rem:per-atom-scaling}
The construction is extensive rather than divergent. The energy offset is $\Delta=O(d/\beta)$ in total, which is $O(k_BT)$ per atom, and the force perturbation is $O(\sqrt d)$ in norm, which is $O(1)$ per atom. Consequently, a model that is accurate to $e^{-\Omega(d)}$ on the sampled basin but underestimates the energy by only a few $k_BT$ per atom in an unsampled region can nevertheless place all equilibrium mass essentially there.
\end{remark}
\begin{remark}
\label{rem}
The construction is conservative, and by the confinement properties established in \Cref{thm:equilibrium-mismatch}, the learned dynamics has a well-defined Gibbs equilibrium. Non-conservative direct-force models can exhibit additional failures, including rotational drift and non-gradient escape mechanisms. For example, the overdamped construction of \citet{cao2026} introduces a constant drift within a narrow cone that transports the process to infinity. Because this field is not a gradient, that construction does not apply to conservative models. These mechanisms may help explain empirical failures of non-conservative models recently observed by \citet{bigi2025darkside} and \citet{fu2025smooth}. However, \Cref{thm:kinetic-trapping,thm:equilibrium-mismatch} show that conservative models with well-defined equilibria need not have the correct equilibrium distribution or mix efficiently.
\end{remark}

All of our main results are stated for smooth, conservative potentials without architectural constraints such as locality or symmetry, thereby emphasizing the need for additional structure. Standard MLIPs impose additional structure through finite-range locality, permutation invariance, and $E(3)$-invariant energies with equivariant forces. It remains unclear whether these inductive biases are sufficient to rule out the failure mechanisms identified here, whether further constraints are required, and how sampling guarantees change under these architectural constraints.

\Cref{app:local-invariant-models} shows that the equilibrium-mismatch mechanism of \Cref{thm:equilibrium-mismatch} can persist under these inductive biases when a competing structure is locally recognizable, meaning that its local atomic environments can be distinguished from those typical of the target distribution within the model's finite receptive field. Under explicit concentration and bounded-overlap assumptions, we construct a smooth, finite-cutoff, size-extensive site-energy model with a \(2L\)-hop receptive field whose per-atom energy and force errors are $e^{-\Omega(\nu)}$, while its stationary distribution satisfies $\TV(\pihat,\ptar)\ge 1-e^{-\Omega(d)}$.
Here, $\nu\asymp z^{L-1}$ in the bounded-degree regime, where $z$ is the local coordination number. Compared with the unconstrained construction of \Cref{thm:equilibrium-mismatch}, locality could potentially raise the error that can be hidden from $e^{-\Omega(d)}$ to $e^{-\Omega(\nu)}$. However, it does not prevent an undetected local bias from accumulating extensively when a competing structure, such as a distorted lattice or competing polymorph, is locally recognizable and differs from the target by only $O(1)$ free energy per atom.

\begin{table}[!b]
\vspace{-2mm}
\centering
\small
\caption{Summary of guarantees provided by common assumptions and the aspects of sampling they leave uncontrolled.}
\label{tab:assumption-hierarchy}
\begin{tabularx}{\textwidth}{@{}p{0.27\textwidth}YY@{}}
\toprule
\textbf{Assumption} & \textbf{Provides} & \textbf{Does not control} \\
\midrule
Small $L^p$ energy/force error
& Accuracy on typical configurations
& Existence of equilibrium \\

Conservative force + Confinement
& Scalar potential and Gibbs equilibrium
& Equilibrium accuracy and mixing time \\

Smoothness + bounded Hessian
& Lipschitz force and well-posed dynamics
& Global wells, barriers, sampling \\

Locality, permutation and $E(3)$ symmetry
& Exact transformation laws, extensivity and a potential  receptive-field safeguard
& Accumulation of residual local bias with system size \\

One-sided global control + on-target accuracy
& Equilibrium accuracy
& High barriers and kinetic trapping \\

Uniform two-sided control + functional inequality
& Equilibrium accuracy and exponential relaxation
& A strong global condition not supplied by ordinary matching \\
\bottomrule
\end{tabularx}
\end{table}

Because $\Vhat$ is expressed as a sum of atomic energies, the trained model can be evaluated at system sizes different from those used during training, as is common practice in MLIP applications. As shown in \Cref{app:local-invariant-models}, the per-atom error bounds remain independent of system size, whereas the equilibrium discrepancy approaches one exponentially fast in $N$. Consequently, the model can exhibit unchanged per-atom validation error as system size increases while its equilibrium distribution becomes increasingly inaccurate. This separation highlights the need for caution when deploying models trained on smaller systems for inference and sampling in larger systems (\Cref{rem:system-size-transfer}).

\subsection{Sufficient Conditions}
\label{sec:sufficient-conditions}
The negative results reveal two distinct requirements for accurate sampling. Thermodynamic accuracy requires the learned invariant distribution to be close to the target, while dynamical accuracy additionally requires that this distribution be reached on the simulation timescale. The following theorem gives sufficient conditions for each.
\begin{theorem}
\label{thm:sampling-guarantees}
Let $\pihat\propto e^{-\beta\Vhat}$ and $\ptar\propto e^{-\beta V}$.
\begin{enumerate}[leftmargin=*,itemsep=1pt,topsep=1pt,label=\emph{(\roman*)}]
\item If $\norm{\Vhat-V}_\infty\le\delta$, then
$\tfrac{\dd\pihat}{\dd\ptar}\in[e^{-2\beta\delta},e^{2\beta\delta}]$ and
$\TV(\pihat,\ptar)\le1-e^{-2\beta\delta}\le2\beta\delta$.
\item Suppose (a) $\Vhat\ge V-\delta$
everywhere, and (b) there is a set $S$ with $\ptar(S)\ge1-\eta$ and $\Vhat\le V+\delta'$
on $S$. Then 
$\tfrac{\dd\pihat}{\dd\ptar}\le\tfrac{e^{\beta(\delta+\delta')}}{1-\eta}
\quad \text{and} \quad \TV(\pihat,\ptar)\le 1-e^{-\beta(\delta+\delta')}+\eta$.
\item Suppose in addition that $\ptar$ satisfies a log-Sobolev inequality with
constant $\rho>0$ and $\sup_x\norm{\grad^2\Vhat(x)}_{\rm op}\le H$. Under (i), $\pihat$ satisfies a log-Sobolev inequality with constant at least
$\rho\,e^{-2\beta\delta}$. Consequently, Langevin dynamics driven by $\Fhat = -\nabla\Vhat$ converges exponentially to its invariant distribution $\pihat$, with a hypocoercive rate depending on $\rho e^{-2\beta\delta}$, $H$, $\gamma$, and $\beta$. Under condition (ii) alone, no analogous uniform relaxation guarantee follows.
\end{enumerate}
\end{theorem}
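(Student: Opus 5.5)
The plan is to handle the two equilibrium statements (i)--(ii) by a direct computation with the Radon--Nikodym derivative. Part (iii) then reduces to a bounded-perturbation argument for functional inequalities followed by an appeal to a standard hypocoercivity theorem. Write $Z=\int e^{-\beta V}$, $\hat Z=\int e^{-\beta\Vhat}$ and $\psi:=\Vhat-V$. Then
\[
r(x):=\frac{\dd\pihat}{\dd\ptar}(x)=\frac{e^{-\beta\psi(x)}}{\E_{\ptar}[e^{-\beta\psi}]},
\]
since $\hat Z/Z=\E_{\ptar}[e^{-\beta\psi}]$. For total variation I will use the identity $\TV(\pihat,\ptar)=\int(1-r)_+\,\dd\ptar$.

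For (i), the hypothesis $\abs{\psi}\le\delta$ gives $e^{-\beta\psi}\in[e^{-\beta\delta},e^{\beta\delta}]$ pointwise. The normalizer $\E_{\ptar}[e^{-\beta\psi}]$ lies in the same interval, so $r\in[e^{-2\beta\delta},e^{2\beta\delta}]$. Hence $(1-r)_+\le1-e^{-2\beta\delta}$, and the elementary inequality $1-e^{-u}\le u$ gives the final bound.

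For (ii), the numerator of $r$ is at most $e^{\beta\delta}$ everywhere by (a). The denominator is bounded below by restricting to $S$ and using (b):
\[
\E_{\ptar}[e^{-\beta\psi}]\ge\E_{\ptar}[\mathbf 1_S\,e^{-\beta\psi}]\ge(1-\eta)e^{-\beta\delta'}.
\]
This yields the upper bound on $r$. For the TV bound, I bound $r$ from below on $S$. There the numerator is at least $e^{-\beta\delta'}$ by (b), and the denominator is at most $e^{\beta\delta}$ by (a), so $r\ge e^{-\beta(\delta+\delta')}$ on $S$. Splitting the TV integral over $S$ and $S^c$ gives
\[
\TV\le(1-e^{-\beta(\delta+\delta')})\,\ptar(S)+\ptar(S^c)\le1-e^{-\beta(\delta+\delta')}+\eta.
\]

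For the log-Sobolev claim in (iii), I invoke the Holley--Stroock perturbation lemma. Since $\pihat=e^{-\beta\psi}\ptar/\E_{\ptar}[e^{-\beta\psi}]$ with $\mathrm{osc}(\beta\psi)\le2\beta\delta$, the LSI constant degrades by at most the factor $e^{-2\beta\delta}$, giving $\rho e^{-2\beta\delta}$. For completeness I would reprove this via the variational formula $\mathrm{Ent}_\mu(f^2)=\inf_{t>0}\int(f^2\log(f^2/t)-f^2+t)\,\dd\mu$, whose integrand is nonnegative, combined with the two-sided density bounds from (i). The LSI implies a Poincaré inequality with the same constant. Together with $\sup\norm{\grad^2\Vhat}_{\rm op}\le H$, this puts us in the setting of Villani's entropic hypocoercivity theorem, or of the $L^2$ hypocoercivity of Dolbeault--Mouhot--Schmeiser. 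Either gives exponential convergence of the kinetic Langevin semigroup to the Gibbs measure of $\Vhat$, at a rate that is an explicit function of $\rho e^{-2\beta\delta}$, $H$, $\gamma$, $\beta$.

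I expect the main bookkeeping obstacle to be here. One must match the paper's normalization of the LSI constant (larger is better) and the scaling by $\beta$ to the hypotheses of the chosen hypocoercivity theorem. If needed, I would rescale $x\mapsto\sqrt\beta\,x$ to reduce to $\beta=1$, and state the rate only qualitatively as a function of these quantities.

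Finally, the negative assertion that (ii) alone yields no uniform relaxation guarantee is proved by counterexample. I would take the surrogate constructed in \Cref{thm:kinetic-trapping}. It agrees with $V$ outside a region of exponentially small $\ptar$-mass and is bounded below compatibly with $V$, which, as noted after \Cref{thm:data-interpolation}, is exactly the one-sided control of \Cref{prop:one-sided-equilibrium-control}. So it satisfies (a) and (b) with small $\delta,\delta',\eta$ and has bounded Hessian. Yet by \Cref{thm:kinetic-trapping}(ii) it has $\TV(\pihat_T,\ptar)\ge1-e^{-\Omega(d)}$ for all $T\le e^{cd}$. Combined with $\TV(\pihat,\ptar)$ being small by part (ii), this forces $\TV(\pihat_T,\pihat)$ to be close to one up to exponential times. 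That rules out any relaxation rate depending only on $\delta,\delta',\eta,H,\gamma,\beta$ and the LSI constant of $\ptar$.
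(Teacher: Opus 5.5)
Your arguments for (i), (ii), and the positive part of (iii) (Holley--Stroock followed by Villani's hypocoercivity) are the same as the paper's. One small caution if you reprove Holley--Stroock yourself: bound $\sup r/\inf r=e^{\beta(\sup(\Vhat-V)-\inf(\Vhat-V))}\le e^{2\beta\delta}$ directly, since the normalizer cancels. Combining the separate bounds $r\in[e^{-2\beta\delta},e^{2\beta\delta}]$ from (i) would only give $\rho e^{-4\beta\delta}$.

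The genuine gap is the counterexample for the last sentence of (iii). The surrogate of \Cref{thm:kinetic-trapping} does not satisfy hypothesis (a) with any small $\delta$. On the core $\{\norm{x}\le4\sqrt d\}$ it equals $\frac{\alpha}{2}\norm{x}^2$, so $\Vhat(0)=0$ while $V(0)=\frac12\norm{\mu}^2=\frac12\xi^2d$. Hence $\inf(\Vhat-V)\le-\frac12\xi^2 d$, and (a) forces $\delta=\Omega(d)$, which makes the bound in (ii) vacuous; the trap is itself a deep well below $V$. \Cref{prop:one-sided-equilibrium-control} says exactly this (``the condition $\Vhat\ge V$ fails near the origin'') and proves $\TV(\pihat,\ptar)$ small for that construction by a direct partition-function estimate instead. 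Having an accurate equilibrium is not the same as meeting (a)--(b), so your example does not show that (ii) fails to imply relaxation.

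The paper uses the surrogate of \Cref{thm:data-interpolation} instead. Each $U_i-V$ is a quadratic with Hessian $(\alpha-1)\Id\succ0$, and its critical point solves $(\alpha-1)x=\alpha w_i-\mu=(\alpha-1)x_i$. Also $U_i(x_i)-V(x_i)=0$ by the exact-fit choice of $c_i$, so $U_i\ge V$. Since $\chi_i\in[0,1]$ with disjoint supports, $\Vhat\ge V$ everywhere. This gives (a) with $\delta=0$ and (b) with $S=A$, $\delta'=0$, $\eta=\ptar(A^c)=e^{-\Omega(d)}$. Moreover $\ptar=\N(\mu,\Id)$ has LSI constant $1$ and $H=O(\alpha)$ is $d$-independent, while \Cref{thm:data-interpolation}(iii) keeps $\TV(\pihat_T,\ptar)\ge1-e^{-\Omega(d)}$ for $T\le e^{cd}$. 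With this example your triangle-inequality conclusion goes through unchanged.

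Alternatively, you could repair your own example by adding the constant $\frac12(\xi+b_\xi)^2d$ to $U$. This makes $U\ge V$ on the modified ball while leaving the core force $-\alpha x$ and the Hessian bound intact, but you would have to make and justify that modification explicitly.
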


Part (i) imposes a uniform error bound and thereby rules out both failure modes. Such a guarantee is stronger than what is typically obtained from unconstrained matching: an $L^p$ bound under $\ptar$ need not provide meaningful control in regions where $\ptar$ places very little mass, including regions that may appear dynamically irrelevant under the target distribution. Part (ii) weakens this requirement in the direction relevant for practice, requiring accuracy only on the support of the target and replacing global accuracy with a global lower bound $\Vhat \ge V-\delta$. This condition rules out the mechanism in \Cref{thm:equilibrium-mismatch}, which constructs a well of depth $\Delta=\Omega(d/\beta)$ below the true potential: preventing wells deeper than $\delta$ limits the equilibrium mass that can accumulate off-support. However, it provides no control over barriers and therefore no guarantee on relaxation.

Part (ii) can provide a design target. Enforcing a lower bound on the predicted energy, for example through a repulsive baseline or a coercive residual parameterization $\widehat V=V_0+f_\theta$, rules out wrong-equilibrium failures without sacrificing on-support accuracy. It does not require extrapolation, but only that $\widehat V$ does not fall arbitrarily below $V$ in regions where the target places little mass.

In practice, constructing such coercive baselines with limited knowledge of the true potential energy surface from classical physics remains an important direction for future work. Physical priors, such as short-range ZBL repulsion, can provide local protection against unphysical energy collapse when atoms approach very small separations and have been incorporated into recent MLIPs \citep{tompa2026finetuning,zhang2026constructing} for this purpose. However, such priors address only a specific class of off-support failures and do not provide the global one-sided energy control required by our result. In particular, the failure mechanisms constructed here need not involve atomic collisions or anomalously short interatomic distances, and can therefore remain invisible to purely short-range repulsive corrections. Developing broadly applicable priors that provide the required energy control without imposing detailed knowledge of the true potential  remains an open problem.

\section{Conclusion}
\label{sec:conclusion}
We have shown that the relationship between energy-and-force matching error and sampling accuracy is exponentially ill-conditioned in dimension. Smooth, conservative surrogates with dimension-independent Hessian bounds can interpolate the training data exactly, achieve exponentially small population energy and force errors, and still induce dynamics whose distribution remains far from the target, either because the target equilibrium cannot be reached in sub-exponential time (kinetic trapping) or because the learned equilibrium distribution itself is incorrect (equilibrium mismatch). Equilibrium mismatch can persist under locality, permutation invariance, and $E(3)$-equivariance, although the local construction may incur a receptive-field-dependent approximation error. A one-sided global lower bound on the learned energy rules out the equilibrium-mismatch mechanism, while rapid relaxation additionally requires control of barriers and global geometry. Together, our impossibility results for unconstrained MLIPs provide theoretical support for imposing structure beyond energy-and-force matching and identify design targets for such constraints. Our results are existential and worst case and therefore do not establish how frequently these failure modes arise in practice. Characterizing their prevalence under realistic training procedures and data distributions is an important direction for future work.

\section*{Acknowledgments}
We acknowledge startup funding from Cornell University and computing resources provided by the Cornell University Center for Advanced Computing (CAC). 

\medskip
{
\small
}
\bibliographystyle{plainnat}

\bibliography{references}
\appendix

\crefalias{section}{appendix}
\numberwithin{equation}{section}

\section{Experiments}
\label{sec:experiments}
\label{app:experimental-details}

We include a small set of controlled simulations to illustrate the sampling failures predicted by our theory.  These simulations are not intended to reproduce realistic MLIP training pipelines or support claims about practical model performance. Their purpose is to provide numerical demonstrations of the limitations of unconstrained energy-and-force matching. Although our
theory concerns worst-case constructions, simulation collapse has been empirically observed in state-of-the-art
MLIPs and reported in the literature \citep{fu2023forces, mannan2025uniffbench}, suggesting that related failure mechanisms may arise
in practice (see \Cref{fig:forces}). In realistic MLIP training pipelines, however, isolating and verifying the precise mechanism
responsible for a given simulation failure is generally impractical, since multiple sources of model and sampling
error may interact.

We first construct an Einstein-crystal model that directly exhibits the kinetic-trapping mechanism of \Cref{thm:kinetic-trapping,thm:data-interpolation}. We then study a harmonic argon crystal and show that neural networks trained by energy-and-force matching can produce inaccurate molecular dynamics despite small held-out force errors. These experiments are intended as finite-time illustrations of the phenomena captured by the theory rather than numerical realizations of the theorem constructions.

\subsection{Einstein crystal: explicit construction}
\label{app:einstein-crystal-experiment}

We begin with an Einstein crystal for which both the equilibrium distribution and the relevant thermodynamic observables are available analytically. We use an fcc Al lattice with lattice constant $a_0=4.05$ \AA{} and a $2\times2\times2$ supercell containing $N=32$ atoms ($d=3N=96$). Each atom is tethered to its lattice site $s_i$ by an isotropic harmonic potential with force-constant matrix $K=kI$, where $k=m\omega_E^2$. We set the Einstein temperature to
$\theta_E=h\nu_E/k_B=300$ K and the atomic mass to $m=26.98$ u, which gives
$k\simeq4.3$ eV/\AA$^2$ and $\nu_E=\omega_E/(2\pi)=6.25$ THz. The corresponding classical predictions are
$\langle u^2\rangle=3k_BT/k=0.0180$ \AA$^2$, $C_v=3k_B$ per atom, and a single vibrational-density-of-states (VDOS) peak at $\nu_E$.

To illustrate kinetic trapping, we modify the surrogate potential by introducing an additional basin with local stiffness $\alpha=4k$. The basin is centered at $\mu$, with $\lVert\mu-s\rVert=6$ \AA{}, and is gated by a $C^1$ smoothstep that is equal to one within $r_a=2.0$ \AA{} and vanishes beyond $r_c=4.0$ \AA{}. A trajectory initialized inside this basin remains trapped there, reproducing the mechanism underlying \Cref{thm:kinetic-trapping,thm:data-interpolation}.

We generate Langevin trajectories with ASE at $T=300$ K. Initial velocities are drawn from the Maxwell--Boltzmann distribution with center-of-mass motion removed. We use a $3$ fs timestep, $10$ ps of equilibration, and $40$ ps of NVT sampling with friction $0.02$ fs$^{-1}$. The VDOS is estimated from the velocity power spectrum of a subsequent $20$ ps NVE trajectory integrated with velocity-Verlet using a $2$ fs timestep.

We report the mean-square displacement
\(
\langle u^2\rangle
=
\langle\lVert r_i-\bar r_i\rVert^2\rangle_{i,t},
\)
the structural error
\(
\left[
N^{-1}\sum_i\lVert\bar r_i-s_i\rVert^2
\right]^{1/2},
\)
the heat capacity
\(
C_v=
\frac{\mathrm{Var}(E_{\mathrm{tot}})}
{k_BT^2N},
\)
and the VDOS,
\[\mathrm{VDOS}(\nu)
\propto
\sum_i|\widehat{\mathbf v}_i(\nu)|^2.
\]
True MD reproduces the analytic predictions, whereas the trapped surrogate gives an incorrect mean-square displacement, mean structure, and phonon frequency. The heat capacity remains close to $3k_B$ per atom because the classical harmonic heat capacity does not depend on the spring stiffness. The numerical comparison is given in \Cref{tab:einstein-observables}.

\begin{table}[!t]
\centering
\small
\caption{Analytic predictions and molecular-dynamics estimates for the Einstein-crystal construction. The surrogate trajectory is kinetically trapped (\Cref{fig:einstein-crystal-mechanism}). Its heat capacity remains apparently correct even though its displacement statistics and mean structure are incorrect.}
\label{tab:einstein-observables}
\begin{tabular}{@{}lccc@{}}
\toprule
Physical property & Analytic & True MD & Surrogate MD \\
\midrule
Debye--Waller $\langle u^2\rangle$ (\AA$^2$) & 0.0180 & 0.0180 & \textbf{0.0045} \\
Structure error (\AA) & 0 & 0.004 & \textbf{1.06} \\
Heat capacity $C_v$ ($k_B$/atom) & 3.00 & 3.03 & \textbf{3.03} \\
\bottomrule
\end{tabular}
\end{table}

We also examine how the trapping behavior changes with dimension. We consider
$d\in\{2,\ldots,16\}$ independent Einstein oscillators with spring constant
$k=4.3$ eV/\AA$^2$, mass $m=26.98$ u, and temperature $T=300$ K. A
near-marginal well with stiffness $\alpha=0.066k$ is introduced along one
collective direction, with its offset and gate radii scaled proportionally to
$L_0\sqrt d$, where $L_0=\sqrt{k_BT/k}$ is the thermal length per degree of
freedom.

For each of four seeds, $M=120$ walkers are initialized inside the basin and
propagated for up to $1800$ ps using BAOAB Langevin dynamics
($\Delta t=3$ fs, $\gamma=0.02$ fs$^{-1}$). A walker is considered to have
escaped once $\lVert u-u_t\rVert$ exceeds the outer gate radius, where $u_t$
is the basin center. \Cref{fig:einstein-escape-times} reports the fraction of
walkers whose escape time exceeds $t\in\{25,100,400,1600\}$ ps. At each fixed
observation time, the trapped fraction increases toward one with dimension,
while longer observation times shift the onset of trapping to larger $d$.
This behavior is consistent with the increasing escape times predicted by the
theoretical construction.

\subsection{FCC Argon}
\label{app:fcc-argon-experiment}

We next consider a harmonic face-centred-cubic argon crystal. This setting is useful because the target equilibrium distribution is known exactly, allowing us to compare trajectories generated by a learned potential against a closed-form reference.

We model argon using a Lennard--Jones potential with
$\sigma=3.40$ \AA{} and $\epsilon=0.0104$ eV, smoothly truncated at
$8.5$ \AA{}. Minimizing the Lennard--Jones energy under periodic boundary
conditions gives a relaxed lattice constant $a_0=5.26$ \AA{} and a
nearest-neighbor distance of $3.72$ \AA{}. All simulations are performed at
$T=30$ K ($0.36T_{\mathrm{melt}}$), where the Lindemann ratio is approximately
$5\%$.

The force-constant matrix $K$ is obtained by central finite differences at the
relaxed geometry, symmetrized, and adjusted to satisfy the acoustic sum rule.
After removing the three translational zero modes, the system has $3N-3$
vibrational modes with an upper band edge of approximately
$\nu_{\max}=2.0$ THz. For the harmonic potential
\[
V(u)=\tfrac12u^\top Ku,
\]
the equilibrium distribution on the vibrational subspace is
\(
\ptar=\mathcal N(0,k_BT K^+).
\)
We draw both the training and test configurations independently from this
distribution.

\paragraph{Surrogate model and training:}
The learned potential $\Vhat_\theta(u)$ is represented by an unconstrained
scalar-output $\tanh$ MLP with eight width-$512$ hidden blocks and skip
connections. We deliberately impose no symmetry, locality, or coercivity.
For any fixed network with finite weights, the smooth $\tanh$ activations give
spatially bounded derivatives, although we do not impose a Hessian bound that
is uniform across system sizes. We use Xavier initialization with the $\tanh$ gain, skip connections, and
gradient clipping at norm $1$. Forces are obtained by automatic
differentiation,
\(
\Fhat_\theta=-\nabla\Vhat_\theta.
\)

We minimize the empirical energy-and-force matching objective in
\eqref{eq:energy-force-matching}, normalizing the force term by the force
scale and including a per-degree-of-freedom energy anchor with weight $0.02$.
The network is trained with Adam for $3\times10^4$ steps using a batch size
of $4096$ and a cosine-annealed learning rate initialized at
$2\times10^{-3}$.

\paragraph{Molecular dynamics:}
We integrate \eqref{eq:kinetic-langevin} with BAOAB
\citep{leimkuhler2015}, using $\Delta t=10$ fs and
$\gamma=10^{-3}$ fs$^{-1}$. For each trained model, $256$ walkers are
initialized independently from $\ptar$ and propagated for $25$ ns, with
observables computed over the second half of each trajectory.

We consider system sizes
$N\in\{16,32,\ldots,128\}$ while keeping the training density fixed at
$n/N=625$. Thus, the number of training configurations increases from
$n=10^4$ at $N=16$ to $n=8\times10^4$ at $N=128$. We separately study the
effect of additional data at $N=128$, increasing the training set to
$n=5\times10^5$. Network initialization, training samples, and Langevin noise
are varied independently across four random seeds.

\paragraph{Sampling diagnostics:}
To quantify the discrepancy between the learned dynamics and the target, we
project MD configurations onto the exact phonon modes and compare their
empirical covariance with the target covariance
$\operatorname{diag}(k_BT/\lambda_j)$. We denote the resulting
Gaussian moment-based KL diagnostic by $\widehat{\mathrm{KL}}$. If the exact
moments were known, the corresponding Gaussian divergence would lower-bound
the KL divergence to the Gaussian target. Because we estimate the moments from
finite, temporally correlated trajectories, $\widehat{\mathrm{KL}}$ need not
retain this lower-bound property. We therefore use it only as a
finite-trajectory diagnostic and do not assume that the learned dynamics
converge to a stationary distribution.

We also compute the Debye--Waller factor $\langle u^2\rangle$ using the same
per-atom mean-square-displacement estimator as in the Einstein-crystal
experiment. Although the trajectories themselves are generated using
classical MD, we estimate the heat capacity from the phonon spectrum using
the quantum harmonic expression
\[
C_v=\frac{k_B}{N}\sum_j
\frac{x_j^2e^{x_j}}{(e^{x_j}-1)^2},
\qquad
x_j=\frac{h\nu_j}{k_BT},
\]
where $\{\nu_j\}$ denotes either the true or learned phonon spectrum. We use
this spectrum-based estimate because argon is partially quantum at $30$ K,
so its heat capacity is below the classical value of $3k_B$ per atom.

As a reference, a well-trained surrogate at $N=32$ gives
$\widehat{\mathrm{KL}}=0.004$, while $\langle u^2\rangle$, $C_v$, and
$\nu_{\max}$ agree with the analytic values to within $0.5\%$.

\begin{figure}[!t]
\centering
\begin{subfigure}[t]{0.48\textwidth}
\centering
\includegraphics[width=0.9\linewidth]{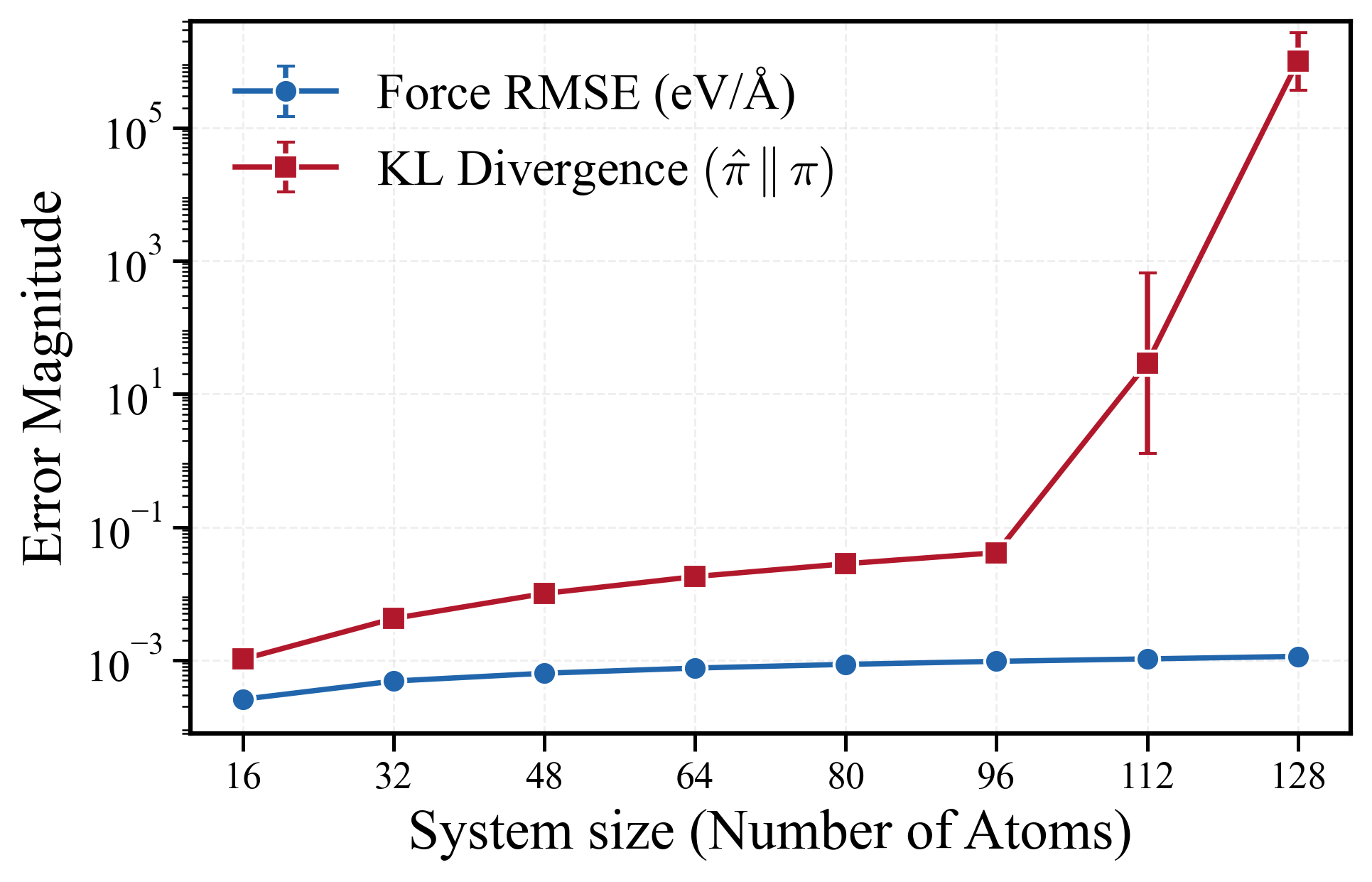}
\caption{Sampling error grows with dimension}
\label{fig:argon-dimension-sweep}
\end{subfigure}
\hfill
\begin{subfigure}[t]{0.48\textwidth}
\centering
\includegraphics[width=0.9\linewidth]{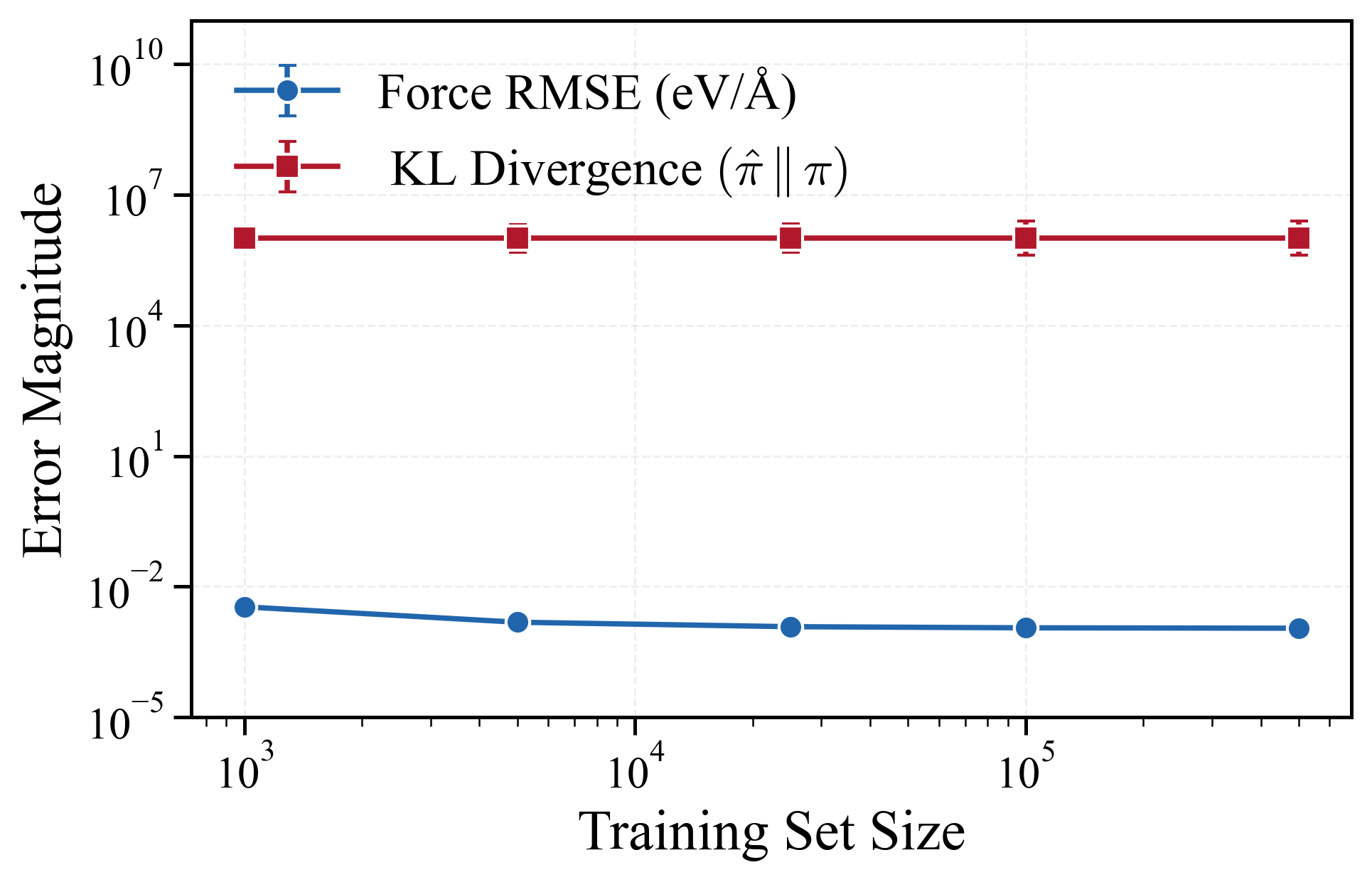}
\caption{Additional data does not improve sampling}
\label{fig:argon-data-sweep}
\end{subfigure}

\caption{Molecular dynamics of the harmonic argon crystal. Error bars show the standard deviation across four random seeds. (a) As the system size increases, the held-out force RMSE remains of order $10^{-3}$ or smaller, while the finite-trajectory Gaussian KL diagnostic increases with dimension. At $N=128$, trajectories from all four seeds leave the crystal basin. (b) At fixed system size $N=128$, increasing the training-set size does not substantially reduce the sampling discrepancy under the tested training protocol.}
\label{fig:argon-sampling-errors}
\vspace{-5mm}
\end{figure}

\paragraph{Sampling error with increasing system size:}
\Cref{fig:argon-dimension-sweep} shows a sharp separation between predictive
and dynamical accuracy. The held-out force error remains of order $10^{-3}$
or smaller throughout the dimension sweep, while the sampling discrepancy
increases by several orders of magnitude. At $N=128$, all four learned
trajectories eventually leave the harmonic crystal basin. This occurs even
though the target distribution is Gaussian and strongly log-concave on the
vibrational subspace.

The experiment therefore illustrates that small held-out force error need not
translate into accurate finite-time sampling as the number of degrees of
freedom increases. Although our asymptotic results apply only beyond a fixed
dimension $d_0$, the same qualitative behavior appears here at system sizes
routinely accessible in molecular simulation.

Increasing the amount of training data does not substantially change this
behavior. As shown in \Cref{fig:argon-data-sweep}, the sampling discrepancy at
$N=128$ remains large as the training set is increased, even though the
surrogate continues to predict forces accurately on held-out equilibrium
configurations. The escaped trajectories enter regions that carry negligible
probability under $\ptar$, where the training objective provides little
control over the learned energy landscape. This behavior is consistent with
the off-distribution freedom used in our theoretical constructions. Related behavior was reported by \citet{fu2023forces}, who observed that, for certain architectures, increasing the training-set size from 1,000 to 90,000 configurations reduced force errors but did not eliminate simulation failures. 

\paragraph{Coercive baselines:}
To examine whether simple global control of the learned potential can mitigate
these failures, we consider two proof-of-concept coercive baselines, C1 and
C2, motivated by \Cref{thm:sampling-guarantees}. In both cases, the learned
potential takes the form
\[
\widehat U(u)=U_0(u)+f_\theta(u),
\]
where $U_0$ is a fixed confining potential. The quadratic or quartic growth of
$U_0$ ensures that $\widehat U$ is confining and has a finite partition
function. These constructions do not, however, guarantee the stronger global
comparison $\widehat U\ge V-\delta$ required by
\Cref{thm:sampling-guarantees}.

For C1, we use an isotropic harmonic prior
\[
U_0(u)=\frac12\kappa\lVert u\rVert^2,
\]
where $\kappa$ is the mean force constant, and train the network on the
residual $V(u)-U_0(u)$.

For C2, we instead use a repulsive boundary potential
\[
U_0(u)=\varepsilon
\left[
\mathrm{ReLU}\left(
\frac{u^\top\Sigma^{-1}u}{k}-c
\right)
\right]^2,
\]
where $\Sigma$ is the empirical covariance of the training configurations.
The prior vanishes inside the ellipsoid
$u^\top\Sigma^{-1}u/k<c$ and therefore leaves the learned potential unchanged
near the training distribution. Outside this region, it grows quartically
with displacement.

\begin{table}[!t]
\centering
\small
\setlength{\tabcolsep}{3.5pt}
\caption{Debye--Waller inflation, finite-trajectory Gaussian KL diagnostic, and phonon-based quantum harmonic heat capacity as functions of system size. C1 reduces the large-$N$ discrepancy, while C2 keeps the displacement statistics close to the target and substantially reduces the KL diagnostic. $C_v$ is computed from the phonon spectrum and averaged over seeds that remain within the harmonic basin. A dash indicates that all seeds left the basin, so a basin-based phonon spectrum is not reported.}
\label{tab:argon-scaling-results}
\begin{tabular}{@{} l ccc ccc cccc @{}}
\toprule
& \multicolumn{3}{c}{$\langle u^2 \rangle / \langle u^2 \rangle_{\mathrm{t}}$}
& \multicolumn{3}{c}{$\widehat{\mathrm{KL}}$}
& \multicolumn{4}{c}{$C_v$ ($k_B$/atom)} \\
\cmidrule(lr){2-4} \cmidrule(lr){5-7} \cmidrule(lr){8-11}
$N$
& Unconst. & C1 & C2
& Unconst. & C1 & C2
& Analytic & Unconst. & C1 & C2 \\
\midrule
16  & 1.00 & 1.00 & 1.00 & 0.001             & 0.001 & 0.001 & 1.901 & 1.901 & 1.901 & 1.901 \\
32  & 1.00 & 1.00 & 1.00 & 0.004             & 0.004 & 0.004 & 1.980 & 1.980 & 1.980 & 1.980 \\
48  & 1.00 & 1.00 & 1.00 & 0.010             & 0.010 & 0.010 & 2.004 & 2.005 & 2.004 & 2.005 \\
64  & 1.00 & 1.00 & 1.00 & 0.018             & 0.018 & 0.018 & 2.019 & 2.020 & 2.019 & 2.020 \\
80  & 1.00 & 1.00 & 1.00 & 0.028             & 0.027 & 0.028 & 2.028 & 2.030 & 2.028 & 2.030 \\
96  & 1.01 & 1.00 & 1.01 & 0.041             & 0.040 & 0.041 & 2.034 & 2.037 & 2.034 & 2.037 \\
112 & 8.90 & 3.30 & 1.01 & 29.1              & 7.20  & 0.057 & 2.039 & 2.041 & 2.033 & 2.041 \\
128 & 641  & 2.25 & 1.02 & $1.5{\times}10^6$ & 5.70  & 0.590 & 2.042 & -- & 2.085 & 2.048 \\
\bottomrule
\end{tabular}
\end{table}

The results in \Cref{tab:argon-scaling-results} suggest that even simple
confining tails can substantially change the behavior of the learned dynamics.
C1 reduces the large-$N$ discrepancy but still shows appreciable displacement
inflation. C2 keeps the mean-square displacement close to the target across
the dimension sweep and produces substantially smaller KL diagnostics at the
largest system sizes. These results are consistent with the role of global
energy control in \Cref{thm:sampling-guarantees}, although neither baseline
certifies the assumptions of the theorem.

\paragraph{Implementation:}
The experiments were implemented in PyTorch, NumPy, and ASE using the
Lennard--Jones calculator. Each run used a single NVIDIA A100 GPU. The
computational cost was dominated by the $N=128$ simulations, for which
propagating $256$ walkers for $2.5\times10^6$ integration steps required
approximately one hour per run.

\subsection{Limitations}

These experiments are restricted to controlled harmonic systems
and unconstrained neural networks. They therefore do not establish how often
analogous failures occur in practical MLIP training or deployment. The
sampling discrepancies reported above are Gaussian moment-based diagnostics
estimated from finite and temporally correlated trajectories. They need not 
measure the full distributional divergence. In particular, the unconstrained neural network need
not define a normalizable Boltzmann distribution, so trajectory escape should
not be interpreted as a direct numerical verification of
\Cref{thm:equilibrium-mismatch}.

The training-set-size experiment likewise describes the behavior of the
specific architecture and optimization procedure studied here rather than a
general impossibility result about additional data. Similarly, the coercive
baselines demonstrate the effect of introducing confining tails but do not
guarantee the global energy comparison required by
\Cref{thm:sampling-guarantees}. The experiments should therefore be viewed as
controlled illustrations of the mechanisms suggested by the theory. The
formal impossibility and sampling guarantees follow from the theoretical
results themselves.
\begin{figure}[!htbp]
    \centering
    \includegraphics[width=0.7\linewidth]{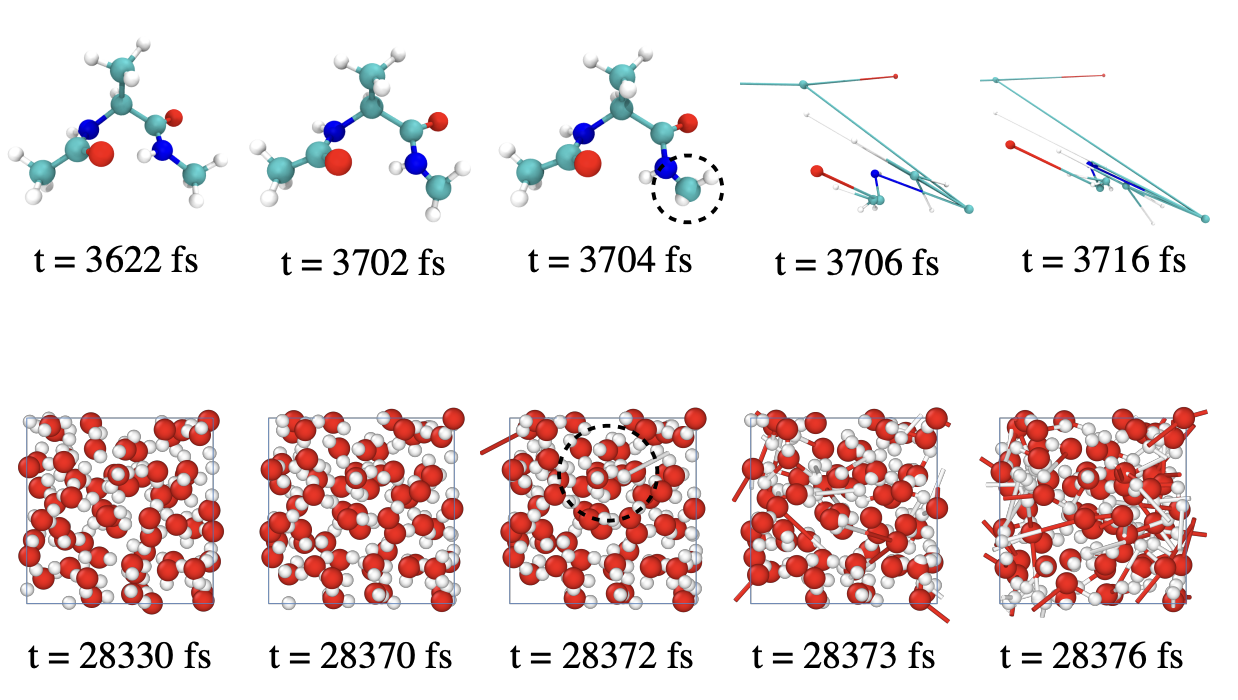}
    \caption{Figure adapted from \citet{fu2023forces} (Figure 6) depicting simulation collapse when applying (a) NequIP to alanine
dipeptide and (b) GemNet-T to water.}
    \label{fig:forces}
\end{figure}
\section{Technical preliminaries}
We first introduce several key technical preliminaries, beginning with two concentration-of-measure results in high dimensions.
\label{app:technical-preliminaries}

\subsection{Concentration of measure}
\label{app:concentration}

\begin{lemma}[Gaussian norm concentration; \citealp{vershynin2018}]
\label{lem:gaussian-norm-concentration}
There are universal constants $c,C>0$ such that for $X\sim\N(0,\Id)$ and all $t>0$,
\[
\Prob\big(\,\abs{\norm X-\sqrt d}\ge t\sqrt d\,\big)\ \le\ Ce^{-ct^2d}.
\]
\end{lemma}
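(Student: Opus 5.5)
The plan is to derive the bound from the Gaussian concentration inequality for Lipschitz functions, applied to $f(x)=\norm{x}$, and then to control the gap between $\E\norm X$ and $\sqrt d$. The map $x\mapsto\norm x$ is $1$-Lipschitz, so the Gaussian concentration theorem (Tsirelson--Ibragimov--Sudakov, or equivalently a consequence of the Gaussian log-Sobolev inequality via Herbst's argument) gives
\[
\Prob\big(\abs{\norm X-\E\norm X}\ge s\big)\le 2e^{-s^2/2}\qquad\text{for all } s>0.
\]

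Next I will compare $\E\norm X$ with $\sqrt d$. By Jensen, $\E\norm X\le\sqrt{\E\norm X^2}=\sqrt d$. For the reverse direction, the concentration bound above gives $\mathrm{Var}(\norm X)\le C_0$ for a universal $C_0$ (alternatively, use the Gaussian Poincaré inequality directly). Hence $(\E\norm X)^2=d-\mathrm{Var}(\norm X)\ge d-C_0$, and therefore
\[
\sqrt d-\E\norm X=\frac{d-(\E\norm X)^2}{\sqrt d+\E\norm X}\le \frac{C_0}{\sqrt d}\le C_0 .
\]
The important point is that this gap is $O(1)$ uniformly in $d$.

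To combine the two steps, fix $t>0$. If $t\sqrt d\ge 2C_0$, then $\abs{\norm X-\sqrt d}\ge t\sqrt d$ forces $\abs{\norm X-\E\norm X}\ge t\sqrt d-C_0\ge t\sqrt d/2$. The probability of this is at most $2e^{-t^2d/8}$. If instead $t\sqrt d<2C_0$, then $t^2d<4C_0^2$, and the bound holds trivially once $C\ge e^{c\cdot 4C_0^2}$, since the right-hand side then exceeds $1$. Taking $c=1/8$ and $C=\max(2,e^{C_0^2/2})$ gives the claim.

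The only step needing care is this uniform $O(1)$ control of $\sqrt d-\E\norm X$, because it is what lets the small-deviation regime be absorbed into the constant $C$. The variance/Jensen argument handles it cleanly.

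An alternative route, matching \citet{vershynin2018}, avoids the Lipschitz concentration theorem. One applies Bernstein's inequality to the sub-exponential sum $\norm X^2-d=\sum_i(X_i^2-1)$, and then uses the elementary implication $\abs{z-1}\ge\delta\Rightarrow\abs{z^2-1}\ge\max(\delta,\delta^2)$ with $z=\norm X/\sqrt d$. Both tails of Bernstein's bound, $\delta^2 d$ and $\delta d$, then yield $e^{-c\min(\max(\delta,\delta^2)^2,\max(\delta,\delta^2))d}=e^{-c\delta^2 d}$.
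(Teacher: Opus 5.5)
Your proof is correct. The paper gives no proof of this lemma and only cites \citet{vershynin2018}. The argument there (Theorem~3.1.1) is exactly the route you sketch at the end: Bernstein's inequality for the sub-exponential sum $\norm X^2-d=\sum_i(X_i^2-1)$, combined with $\abs{z-1}\ge\delta\Rightarrow\abs{z^2-1}\ge\max(\delta,\delta^2)$ for $z\ge0$.

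Your main route is genuinely different. It replaces Bernstein by Gaussian concentration for the $1$-Lipschitz map $x\mapsto\norm x$. It then pays for recentering from $\E\norm X$ to $\sqrt d$ with Jensen plus a variance bound. The Gaussian Poincar\'e inequality gives $\mathrm{Var}\norm X\le1$, so you can take $C_0=1$, hence $c=1/8$ and $C=\max(2,e^{1/2})$.

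The trade-off is as follows. The Bernstein route is elementary and covers any vector with independent, unit-variance sub-Gaussian coordinates, but it relies on coordinate independence in an essential way. Your route uses a heavier theorem but never uses independence. It needs only dimension-free Lipschitz concentration and a Poincar\'e-type variance bound, so it transfers directly to strongly log-concave laws, centering at $\sqrt{\E\norm X^2}$ and with $c$ proportional to the convexity constant. That is exactly the generality the paper later relies on in \Cref{lem:lipschitz-concentration} and \Cref{rem:inverse-temperature-scaling}, so your argument fits how the lemma is actually used.
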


\begin{lemma}[Lipschitz concentration; \citealp{bakry2014analysis}]
\label{lem:lipschitz-concentration}
Let $f:\R^d\to\R$ be $L$-Lipschitz. If $X\sim\N(0,\Id)$ then
$\Prob\big(\abs{f(X)-\E f(X)}\ge t\big)\le2e^{-t^2/(2L^2)}$ for all $t>0$. More generally, if the
law of $X$ is $\alpha$-strongly log-concave the same bound holds with $e^{-\alpha t^2/(2L^2)}$ in
place of $e^{-t^2/(2L^2)}$; this yields the strongly-log-concave extension of \Cref{rem:inverse-temperature-scaling}.
\end{lemma}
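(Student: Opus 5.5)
The plan is to use the entropy (Herbst) method. Both claims follow from a single log-Sobolev inequality (LSI) for the law $\mu$ of $X$, in the form
\[
\mathrm{Ent}_\mu(g^2)\le \frac{2}{\alpha}\,\E_\mu\norm{\grad g}^2
\]
for smooth $g$, with $\alpha=1$ in the Gaussian case. For $\N(0,\Id)$ this is Gross's inequality. For an $\alpha$-strongly log-concave law $\mu\propto e^{-W}$ with $\grad^2W\succeq\alpha\Id$, it is the Bakry--\'Emery criterion: the curvature-dimension condition $CD(\alpha,\infty)$ holds for the generator $\Delta-\grad W\cdot\grad$ and implies the LSI with constant $\alpha$. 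I will take both inequalities as known from \citet{bakry2014analysis} and not reprove them. The Gaussian statement is then the special case $\alpha=1$.

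\textbf{Reduction to smooth $f$.} First assume $f$ is smooth and $L$-Lipschitz, so $\norm{\grad f}\le L$ everywhere, and that $f$ has a finite exponential moment under $\mu$. The moment condition holds because $\abs{f(x)}\le\abs{f(0)}+L\norm x$ and $\mu$ has Gaussian tails. For general Lipschitz $f$, I will mollify: $f_\eps=f*\phi_\eps$ is smooth, $L$-Lipschitz, and satisfies $\norm{f_\eps-f}_\infty\le L\eps$. The tail bound for $f_\eps$, applied at level $t$, then transfers to $f$ by letting $\eps\to0$, since $\E f_\eps\to\E f$ and the events converge.

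\textbf{Herbst argument.} For smooth $f$ with $\E_\mu f=0$, define $Z(\lambda)=\E_\mu e^{\lambda f}$ and apply the LSI to $g=e^{\lambda f/2}$. Since $\norm{\grad g}^2=\tfrac{\lambda^2}{4}\norm{\grad f}^2e^{\lambda f}\le\tfrac{\lambda^2L^2}{4}e^{\lambda f}$, the LSI gives
\[
\lambda Z'(\lambda)-Z(\lambda)\log Z(\lambda)\le \frac{\lambda^2L^2}{2\alpha}Z(\lambda).
\]
In terms of $K(\lambda)=\lambda^{-1}\log Z(\lambda)$, this says $K'(\lambda)\le L^2/(2\alpha)$. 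Since $K(0^+)=\E f=0$, integrating gives $\log Z(\lambda)\le\lambda^2L^2/(2\alpha)$ for all $\lambda>0$. Differentiating under the expectation is justified by the exponential-moment bound.

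\textbf{Chernoff bound and conclusion.} Markov's inequality gives $\Prob(f-\E f\ge t)\le e^{-\lambda t+\lambda^2L^2/(2\alpha)}$, and optimizing at $\lambda=\alpha t/L^2$ yields $e^{-\alpha t^2/(2L^2)}$. Applying the same bound to $-f$ and taking a union bound produces the factor $2$.

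I expect the main obstacle to be technical rather than conceptual: justifying the boundary value $K(0^+)=0$ and the interchange of derivative and expectation. The cleanest route is to run the argument first for bounded smooth Lipschitz $f$, for example $f$ composed with a smooth truncation, and then remove the truncation by monotone convergence. The strongly log-concave extension adds no new difficulty once the Bakry--\'Emery LSI is granted; the only change is carrying $\alpha$ through the exponent.
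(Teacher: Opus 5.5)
Your proposal is correct and follows the standard argument behind the paper's citation; the paper gives no proof of its own and defers to Bakry--Gentil--Ledoux, where the result is obtained from the log-Sobolev inequality with constant $\alpha$ (Gross for $\alpha=1$, Bakry--\'Emery in general), the Herbst argument, and a Chernoff bound, with mollification or truncation handling the technical points. The one implicit assumption is that you read ``$\alpha$-strongly log-concave'' as $\nabla^2 W\succeq \alpha I_d$ for a $C^2$ potential, which matches the paper's smooth Boltzmann setting; a nonsmooth convex potential would need one further approximation step.
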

We note two consequences that will be used later in our analysis.

First, if $x_0 \sim \mathcal{N}(0, I_d)$, then \Cref{lem:gaussian-norm-concentration} with $t = 0.1$ gives
\[
    \mathbb{P} \big( \lVert x_0 \rVert \le 1.1\sqrt{d} \big) \ge 1 - C e^{-cd},
\]
and likewise for $v_0 \sim \mathcal{N}(0, I_d)$. 

Second, let $\ptar = \mathcal{N}(\mu, \mathrm{Id})$ with $\lVert \mu \rVert = \xi\sqrt{d}$ for $\xi > 5$, and fix $t \in (0, \xi - 1 - b_\xi)$. By \Cref{lem:gaussian-norm-concentration}, we have
\[
    \lVert x - \mu \rVert \le (1+t)\sqrt{d} \qquad \text{with probability } 1 - e^{-\Omega(d)},
\]
and on this event,
\[
    \lVert x \rVert \ge \lVert \mu \rVert - \lVert x - \mu \rVert \ge (\xi - 1 - t)\sqrt{d} > b_\xi \sqrt{d}.
\]
Hence, we obtain the bounds
\[
    \ptar \big( \lVert x \rVert \ge b_\xi \sqrt{d} \big) \ge 1 - e^{-\Omega(d)} \qquad \text{and, in particular} \qquad \ptar \big( \lVert x \rVert \ge 4\sqrt{d} \big) \ge 1 - e^{-\Omega(d)}.
\]

\subsection{Bump functions}
\label{app:bump-functions}
\begin{lemma}
\label{lem:smooth-bump-function}
There is a universal constant $C_0$ with the following property. For all $0 < a < b$ and all $d \ge 1$, there is a radial $\psi_{a,b} \in C^\infty(\mathbb{R}^d; [0,1])$ with $\psi_{a,b} \equiv 1$ on $\{\lVert y \rVert \le a\}$, $\psi_{a,b} \equiv 0$ on $\{\lVert y \rVert \ge b\}$, and
\begin{equation}
    \lVert \nabla \psi_{a,b} \rVert_\infty \le \frac{C_0}{b-a}, \qquad \lVert \nabla^2 \psi_{a,b} \rVert_{\mathrm{op},\infty} \le \frac{C_0}{(b-a)^2} + \frac{C_0}{a(b-a)}.
\end{equation}
In particular, these bounds are independent of $d$.
\end{lemma}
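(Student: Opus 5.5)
We construct $\psi_{a,b}$ explicitly as a composition $\psi_{a,b}(y)=\phi\big((\norm{y}-a)/(b-a)\big)$. Here $\phi\in C^\infty(\R;[0,1])$ is a fixed one-dimensional cutoff with $\phi\equiv1$ on $(-\infty,0]$ and $\phi\equiv0$ on $[1,\infty)$. For concreteness, take $\phi(s)=g(1-s)/(g(1-s)+g(s))$ with $g(s)=e^{-1/s}\mathbf 1_{s>0}$. The denominator never vanishes, so $\phi$ is smooth with values in $[0,1]$. Set $M_1=\sup\abs{\phi'}$ and $M_2=\sup\abs{\phi''}$; these are universal constants, and we take $C_0=\max(M_1,M_2,1)$. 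Since $\phi$ is constant for $s\le0$, the composition equals $1$ on the ball $\{\norm y\le a\}$. This neighborhood contains the origin, where $\norm{y}$ fails to be smooth, so $\psi_{a,b}$ is $C^\infty$ everywhere. It is also radial and has the stated support properties.

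The derivative bounds follow from the chain rule on the annulus $a<\norm y<b$, where $y\neq0$. Write $r=\norm y$ and $u=y/r$. Then
\[
\grad\psi=\frac{\phi'}{b-a}\,u,\qquad \grad^2\psi=\frac{\phi''}{(b-a)^2}\,uu^\top+\frac{\phi'}{(b-a)\,r}\,\big(\Id-uu^\top\big).
\]
This uses $\grad r=u$ and $\grad^2 r=(\Id-uu^\top)/r$. The first identity gives $\norm{\grad\psi}\le M_1/(b-a)$. In the second, $uu^\top$ and $\Id-uu^\top$ are orthogonal projections onto complementary subspaces. The operator norm is therefore the maximum of the two coefficients, and hence at most $M_2/(b-a)^2+M_1/((b-a)a)$, since $r>a$ on the annulus. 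Outside the annulus both derivatives vanish. Neither bound involves $d$, because the projections have operator norm at most $1$ in every dimension.

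The only point needing care is the Hessian term $\grad^2 r$. It blows up like $1/r$ near the origin, and this is why the $1/(a(b-a))$ term appears. It is harmless precisely because $\phi'$ vanishes for $r\le a$. Everything else is routine.
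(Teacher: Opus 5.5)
Your proposal is correct and follows the paper's proof essentially line for line. It uses the same $e^{-1/u}$-based one-dimensional cutoff rescaled to $[a,b]$ and composed with $\norm{y}$, the same gradient and Hessian formulas, and the same observation that the Hessian's eigenvalues are $\phi''/(b-a)^2$ and $\phi'/((b-a)r)$ with $r\ge a$ wherever $\phi'\neq0$, which gives the $d$-independent bounds.
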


\begin{proof}
Let 
\[
    f(u) = e^{-1/u} \mathbf{1}_{\{u>0\}} \qquad \text{and} \qquad g_0(u) = \frac{f(1-u)}{f(1-u) + f(u)}.
\]
The denominator is strictly positive for every $u \in \mathbb{R}$, so $g_0 \in C^\infty(\mathbb{R}; [0,1])$ with $g_0 \equiv 1$ on $(-\infty, 0]$, $g_0 \equiv 0$ on $[1, \infty)$, and universal bounds
\[
    \lVert g_0' \rVert_\infty, \, \lVert g_0'' \rVert_\infty \le C_0.
\]
Put
\[
    g(u) := g_0 \left( \frac{u-a}{b-a} \right) \qquad \text{and} \qquad \psi_{a,b}(y) := g(\lVert y \rVert).
\]
Since $\psi_{a,b} \equiv 1$ on a neighborhood of the origin and $y \mapsto \lVert y \rVert$ is smooth on $\mathbb{R}^d \setminus \{0\}$, we have $\psi_{a,b} \in C^\infty(\mathbb{R}^d)$. Writing $r = \lVert y \rVert$ and $\hat{y} = y/r$,
\begin{equation}
    \nabla \psi_{a,b}(y) = g'(r) \, \hat{y}, \qquad \nabla^2 \psi_{a,b}(y) = g''(r) \, \hat{y} \hat{y}^\top + \frac{g'(r)}{r} \big( \mathrm{Id} - \hat{y} \hat{y}^\top \big),
\end{equation}
and $g', g''$ vanish outside $\{a \le r \le b\}$, where $r \ge a$. The bounds follow directly from
\[
    \lVert g' \rVert_\infty \le \frac{C_0}{b-a} \qquad \text{and} \qquad \lVert g'' \rVert_\infty \le \frac{C_0}{(b-a)^2}.
\]
The eigenvalues of $\nabla^2 \psi_{a,b}$ are $g''(r)$ and $g'(r)/r$ irrespective of dimension, so no estimate depends on $d$.
\end{proof}

\Cref{thm:kinetic-trapping} uses $\psi_{4\sqrt d,\,b_\xi\sqrt d}$ with $b_\xi:=\tfrac12(3+\xi)$, which
satisfies $4<b_\xi<\xi-1$ for every $\xi>5$. Since $b_\xi-4=\tfrac12(\xi-5)$, the gradient and
Hessian bounds are $C_\xi/\sqrt d$ and $C_\xi/d$, with $C_\xi$ depending only on $\xi$ (of order
$(\xi-5)^{-1}$ and $(\xi-5)^{-2}$ respectively). \Cref{thm:data-interpolation} uses
$\psi_{0.15\sqrt d,\,0.16\sqrt d}(\,\cdot-x_i)$, with bounds $O(1/\sqrt d)$ and $O(1/d)$, and
\Cref{thm:equilibrium-mismatch} uses $\psi_{\theta\sqrt d,\,2\theta\sqrt d}(\,\cdot-x_\star)$, with bounds
$C_0/(\theta\sqrt d)$ and $2C_0/(\theta^2 d)$.

\section{The kinetic exit-time lemma}
\label{app:kinetic-exit}
The following lemma quantifies how long a stiff underdamped oscillator remains inside a high-dimensional ball. It is the dynamical input used to turn the localized potential constructions in \Cref{app:kinetic-trapping-proof,app:data-initialization-proof} into exponentially long trapping statements.

\begin{lemma}\label{lem:kinetic-exit-time}
Fix $\gamma>0$ and constants $r>\rho_x\ge0$, $\rho_v\ge0$, and write $m:=r-\rho_x$. Let
$(x_t,v_t)_{t\ge0}$ solve the linear SDE
\[
\dd x_t=v_t\,\dd t,\qquad \dd v_t=(-\alpha x_t-\gamma v_t)\,\dd t+\sqrt{2\gamma}\,\dd B_t,
\]
where $B_t$ is a standard $d$-dimensional Brownian motion and the deterministic initial condition
satisfies $\norm{x_0}\le\rho_x\sqrt d$ and $\norm{v_0}\le\rho_v\sqrt d$. Let
$\tau=\inf\{t\ge0:\norm{x_t}\ge r\sqrt d\}$. There is a universal constant $C'$ such that, setting
\[
\alpha_\star\ :=\ \max\Big\{\frac{\gamma^2}{2},\ \frac{C'\big(1+\gamma^2\rho_x^2+\rho_v^2\big)}{m^2}\Big\},
\qquad
c_\star\ :=\ \frac{\alpha_\star m^2}{64},
\]
the following holds: there are $C,K_\star>0$, depending only on $\gamma,r,\rho_x,\rho_v$, such that
for all $\alpha\ge\alpha_\star$, $d\ge K_\star$, and $T\le e^{c_\star d}$,
\[
\Prob(\tau\le T)\ \le\ Ce^{-c_\star d}.
\]
The bound is uniform over all deterministic initial data satisfying the stated norm constraints.
\end{lemma}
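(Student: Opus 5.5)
Since the SDE is linear and decouples across coordinates, we split $x_t = m_t + z_t$. Here $m_t$ is the deterministic solution started from $(x_0,v_0)$ with no noise, and $z_t$ is the Gaussian fluctuation started from $0$. With $\alpha \ge \gamma^2/2$ the system is underdamped or critically damped. We bound $\norm{m_t}$ through the quadratic Lyapunov function $E(x,v)=\tfrac12\norm{v}^2+\tfrac{\alpha}{2}\norm{x}^2+\tfrac{\gamma}{2}\inner{x}{v}+\tfrac{\gamma^2}{4}\norm{x}^2$, or an equivalent one. It is nonincreasing along the deterministic flow and equivalent to $\norm v^2+\alpha\norm x^2$. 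This gives
\[
\sup_t \norm{m_t}\ \le\ \norm{x_0}+C\,\frac{\gamma\norm{x_0}+\norm{v_0}}{\sqrt\alpha}\ \le\ \Big(\rho_x+C\frac{\gamma\rho_x+\rho_v}{\sqrt{\alpha}}\Big)\sqrt d .
\]
For $\alpha\ge \alpha_\star$ with $C'$ large, the second term is at most $m\sqrt d/4$. This is exactly where the factor $\rho_v/\sqrt\alpha$ appears, and it is why $\rho_x$ enters alone while $\gamma\rho_x$ and $\rho_v$ enter through $\alpha$. The exit event $\{\tau\le T\}$ is therefore contained in $\{\sup_{t\le T}\norm{z_t}\ge \tfrac34 m\sqrt d\}$.

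For the fluctuation, the stationary law of $(z,\dot z)$ has position covariance $\alpha^{-1}\Id$. The covariance of $z_t$ started at $0$ is bounded by $\alpha^{-1}\Id$ for all $t$, which follows from monotonicity of the Lyapunov equation from the zero initial condition. Hence $\norm{z_t}$ has Gaussian concentration with variance proxy $1/\alpha$ and mean at most $\sqrt{d/\alpha}$. By \Cref{lem:lipschitz-concentration},
\[
\Prob\big(\norm{z_t}\ge \tfrac12 m\sqrt d\big)\le 2\exp\!\big(-\alpha (m/2-1/\sqrt\alpha)^2 d/2\big)\le 2e^{-\alpha m^2 d/32}
\]
once $\alpha m^2\ge C'$ absorbs the mean. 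To pass from fixed times to the supremum over $[0,T]$, we discretize time on a grid of mesh $h$, with $h$ a constant depending on $\gamma$ and $\alpha$. This gives $T/h+1$ points, and a union bound over them costs the factor $Te^{O(1)}\le e^{c_\star d}$. Between grid points we need a modulus-of-continuity bound: $\sup_{s\in[t,t+h]}\norm{z_s-z_t}\le h\sup\norm{\dot z_s}$. Here $\dot z$ also has stationary-dominated covariance $\Id$, plus a Brownian increment over time $h$. Applying Gaussian concentration again, with Doob's maximal inequality for the stochastic integral piece, the oscillation exceeds $\tfrac14 m\sqrt d$ with probability $e^{-\Omega(m^2 d/h^2)}$. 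We choose $h$ small in terms of $\gamma$ and $\alpha$, and enlarge $\alpha_\star$ if needed, so that this exponent also dominates $2c_\star d$.

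Combining the pieces, we obtain $\Prob(\tau\le T)\le (T/h+1)\,C e^{-\alpha m^2 d/32}\le C e^{c_\star d}e^{-2c_\star d}$ for $\alpha\ge\alpha_\star$, since $\alpha m^2/32\ge 2\alpha_\star m^2/64$. The condition $d\ge K_\star$ is used to absorb polynomial prefactors and the mean term. Uniformity over the initial data holds because only the norms of $x_0$ and $v_0$ entered. The main obstacle is the sup-over-time control: the Lyapunov bound must be dimension-free and carry the $1/\sqrt\alpha$ scaling in the velocity term, and the time-discretization must be done so that its cost does not eat the exponent. I would first try the grid argument above. If the constants become awkward, I would fall back to an exponential supermartingale for $E(z_t,\dot z_t)$ with the Lyapunov function, combined with a Doob-type maximal inequality on unit time blocks.
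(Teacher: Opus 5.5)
Your proof is correct in outline, but it takes a different route from the paper on both halves.

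\textbf{Deterministic part.} The paper writes the solution through the underdamped Green's function $G(u)=\omega^{-1}e^{-\gamma u/2}\sin\omega u$ and bounds the responses $h_0,h_1$ explicitly; the same $G$ is then reused for the increment metric in the chaining step. Your energy $E$ does this job without formulas, since $\tfrac{\dd}{\dd t}E=-\tfrac{\gamma}{2}(\|v\|^2+\alpha\|x\|^2)$ and $E\ge\tfrac{\alpha}{2}\|x\|^2$. However, ``or an equivalent one'' is too loose. What matters is that this particular $E$ puts the coefficient $\sqrt{1+\gamma^2/(4\alpha)}=1+O(\gamma/\sqrt\alpha)$ in front of $\|x_0\|$. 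A generic norm-equivalence constant larger than one there would break the lemma when $m\lesssim\rho_x$. This happens in \Cref{thm:data-interpolation}, where $m=\rho_x=0.05$.

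\textbf{Fluctuation part.} The paper treats $\langle X_t,u\rangle$ as a Gaussian process on $[0,1]\times\mathbb{S}^{d-1}$. It bounds the expected supremum by Dudley chaining and applies Borell--TIS on each unit window, so direction and time are handled at once. You get the sphere for free from Lipschitz concentration of $\|z_t\|$, whose mean is at most $\sqrt{d/\alpha}$ by Jensen. You handle time with a grid and a velocity-based modulus of continuity. This is more elementary and avoids covering numbers and the remainder $R(\alpha,\gamma)$. The price is a second concentration estimate for the velocity on short windows.

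That estimate needs more care than you give it, on three points.

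\begin{enumerate}
\item $\sup_{s\in[t,t+h]}\|w_s\|$ is not just $\|w_t\|$ plus a Brownian increment. The difference $w_s-w_t$ also contains $-\int_t^s(\alpha z_u+\gamma w_u)\,\dd u$, and $\alpha\|z_u\|\approx\sqrt{\alpha d}$ is large. You need $h=c_0/\sqrt\alpha$, and you close the bound by Gronwall on $\sqrt\alpha\|z_u-z_t\|+\|w_u-w_t\|$. Then $h\sup\|w\|$ is $c_0/\sqrt\alpha$ times a sum of terms with $O(1)$ variance proxy and $O(\sqrt d)$ mean, namely $\|w_t\|$, $\sqrt\alpha\|z_t\|$ and the Brownian supremum. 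This yields the tail $e^{-\Omega(\alpha m^2 d/c_0^2)}$.

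\item The usual $L^2$ Doob inequality only gives polynomial tails, so you need a Gaussian version. For example, $\sup_{u\le h}\|B_u\|$ is a $\sqrt h$-Lipschitz functional of the Wiener path.

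\item The grid now has about $\sqrt\alpha\,(T+1)/c_0$ points, not $Te^{O(1)}$. To keep $C$ independent of $\alpha$, absorb the $\sqrt\alpha$ into the surplus $e^{-(\alpha-\alpha_\star)m^2d/32}$. Enlarging $\alpha_\star$ does not make the oscillation exponent beat $2c_\star d$, because $c_\star$ grows with $\alpha_\star$. It is the small constant $c_0$ that achieves this.
\end{enumerate}

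With these repairs, your split of the margin $\tfrac34 m=\tfrac12 m+\tfrac14 m$ closes the argument.
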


\begin{proof}
Throughout, we write
\begin{equation}
\label{eq:exit-margin}
    m := r - \rho_x > 0
\end{equation}
for the available margin. Consider one Cartesian coordinate; the $d$ coordinates are independent and identically distributed. The scalar process solves $\ddot{x} = -\alpha x - \gamma \dot{x} + \sqrt{2\gamma}\,\dot{B}$. 

In the underdamped regime $\alpha > \gamma^2/4$, we set $\omega = \sqrt{\alpha - \gamma^2/4}$. The Green's function of $\ddot{G} + \gamma \dot{G} + \alpha G = \delta$ is given and bounded by
\begin{align}
\label{eq:langevin-green-function}
    G(u) &= \frac{1}{\omega} e^{-\gamma u/2} \sin(\omega u) \quad (u \ge 0), \nonumber \\
    \lvert G(u) \rvert &\le \frac{1}{\omega} e^{-\gamma u/2}
\end{align}
Solving by variation of parameters, and writing $h_0, h_1$ for the position responses to a unit initial position and velocity, we have
\begin{align}
    x_t &= h_0(t)x_0 + h_1(t)v_0 + \sqrt{2\gamma} \underbrace{ \int_0^t G(t-s) \, \mathrm{d}B_s }_{=: W_t}, \label{eq:langevin-position-solution} \\
    h_0(t) &= e^{-\gamma t/2} \left( \cos \omega t + \frac{\gamma}{2\omega} \sin \omega t \right), \nonumber \\
    h_1(t) &= G(t), \nonumber
\end{align}
where $W_t \in \mathbb{R}^d$ has i.i.d.\ coordinates.

\paragraph{Stationary Scale:}
Using $\gamma^2 + 4\omega^2 = 4\alpha$ and the identity $\sin^2(\theta) = \frac{1}{2}(1 - \cos 2\theta)$, we can explicitly compute the integral:
\begin{align}
\label{eq:green-function-variance}
    \int_0^\infty G(r)^2 \, \mathrm{d}r 
    &= \frac{1}{2\omega^2} \left( \frac{1}{\gamma} - \frac{\gamma}{\gamma^2 + 4\omega^2} \right) \nonumber \\
    &= \frac{1}{2\omega^2} \cdot \frac{4\omega^2}{4\alpha\gamma} \nonumber \\
    &= \frac{1}{2\alpha\gamma}
\end{align}
Therefore, each coordinate of $\sqrt{2\gamma} W_t$ has variance $2\gamma \int_0^t G^2 \le 1/\alpha$, uniformly in $t$. Equivalently, applying the equipartition theorem $\frac{1}{2}\alpha \langle x^2 \rangle = \frac{1}{2}$ yields $\langle x^2 \rangle = 1/\alpha$.

\paragraph{Homogeneous Part:}
Assume $\alpha \ge \gamma^2/2$, so that $\omega \ge \sqrt{\alpha/2}$, which implies $\frac{\gamma}{2\omega} \le \frac{\gamma}{\sqrt{2\alpha}}$ and $\frac{1}{\omega} \le \sqrt{2/\alpha}$. 

With the initial bounds $\lVert x_0 \rVert \le \rho_x \sqrt{d}$, $\lVert v_0 \rVert \le \rho_v \sqrt{d}$, and $e^{-\gamma t/2} \le 1$, we obtain the following bound on the homogeneous evolution:
\begin{align}
\label{eq:homogeneous-component-bound}
    \lVert h_0(t)x_0 + h_1(t)v_0 \rVert 
    &\le \left( 1 + \frac{\gamma}{\sqrt{2\alpha}} \right) \rho_x \sqrt{d} + \sqrt{\frac{2}{\alpha}} \, \rho_v \sqrt{d} \nonumber \\
    &= (\rho_x + \delta_\alpha) \sqrt{d},
\end{align}
where we have defined
\begin{equation*}
    \delta_\alpha := \frac{\gamma \rho_x}{\sqrt{2\alpha}} + \frac{\sqrt{2} \, \rho_v}{\sqrt{\alpha}}.
\end{equation*}
Since $\delta_\alpha \to 0$ as $\alpha \to \infty$, we may fix a stiffness threshold $\alpha_1$:
\begin{align}
\label{eq:initial-stiffness-threshold}
    \alpha_1 &:= \frac{16}{m^2} \left( \frac{\gamma \rho_x}{\sqrt{2}} + \sqrt{2} \, \rho_v \right)^2, \\
    \text{so that} \qquad \delta_\alpha &\le \frac{m}{4} \qquad \text{for all } \alpha \ge \alpha_1. \nonumber
\end{align}
The homogeneous part is then at most $(\rho_x + \frac{m}{4})\sqrt{d}$ for every $t \ge 0$. Note that the velocity enters only through $\rho_v / \sqrt{\alpha}$. A large initial velocity is admissible provided $\alpha$ is taken correspondingly large, which is precisely the mechanism behind the choice of \eqref{eq:initial-stiffness-threshold}.

\paragraph{Stochastic Part:}
Write $X_t := \sqrt{2\gamma} W_t$ and $Z_{t,u} := \langle X_t, u \rangle$ for $u \in \mathbb{S}^{d-1}$. This defines a centered Gaussian process on $\mathcal{T} := [0,T] \times \mathbb{S}^{d-1}$ with variance bounded by
\begin{equation}
\label{eq:stochastic-variance-bound}
    \sigma^2 := \sup_{t,u} \mathbb{E} Z_{t,u}^2 \le \frac{1}{\alpha}
\end{equation}
by \eqref{eq:green-function-variance}, and with $\sup_{t \le T} \lVert X_t \rVert = \sup_{\mathcal{T}} Z_{t,u}$. We bound the supremum over a unit window $\mathcal{T}_1 := [0,1] \times \mathbb{S}^{d-1}$ and then tile the space.

For $0 \le t_1 \le t_2$ with $h := t_2 - t_1 \le 1$ and $u_1, u_2 \in \mathbb{S}^{d-1}$, It\^o's isometry gives
\begin{align}
    \mathbb{E} \big( Z_{t_1, u_1} - Z_{t_2, u_2} \big)^2 
    &= 2\gamma \int_0^{t_1} \lVert G(t_1 - s)u_1 - G(t_2 - s)u_2 \rVert^2 \, \mathrm{d}s + 2\gamma \int_{t_1}^{t_2} G(t_2 - s)^2 \, \mathrm{d}s \nonumber \\
    &\le \underbrace{ 4\gamma \int_0^\infty G^2 }_{= 2/\alpha} \lVert u_1 - u_2 \rVert^2 + 4\gamma \int_0^\infty \big( G(\rho) - G(\rho + h) \big)^2 \, \mathrm{d}\rho + 2\gamma \int_0^h G^2. \label{eq:gaussian-process-metric}
\end{align}
Since $G'(u) = e^{-\gamma u/2} \left( \cos \omega u - \frac{\gamma}{2\omega} \sin \omega u \right)$ satisfies $\lvert G'(u) \rvert \le 2 e^{-\gamma u/2}$ for $\alpha \ge \gamma^2/2$, we have $\int_0^\infty (G')^2 \le 4/\gamma$ and hence, by Cauchy--Schwarz and Fubini, $\int_0^\infty (G(\rho) - G(\rho + h))^2 \, \mathrm{d}\rho \le h^2 \int_0^\infty (G')^2 \le 4h^2/\gamma$. Moreover, $\int_0^h G^2 \le h/\omega^2 \le 2h/\alpha$. Substituting these into \eqref{eq:gaussian-process-metric} yields the metric bounds:
\begin{align}
\label{eq:gaussian-process-metric-bound}
    d \big( (t_1, u_1), (t_2, u_2) \big)^2 &\le \frac{2}{\alpha} \lVert u_1 - u_2 \rVert^2 + 16h^2 + \frac{4\gamma h}{\alpha}, \\
    \text{and globally,} \qquad d &\le \frac{2}{\sqrt{\alpha}} \nonumber
\end{align}
by \eqref{eq:stochastic-variance-bound}. Consequently, $d \le \varepsilon$ holds whenever $\lVert u_1 - u_2 \rVert \le \varepsilon\sqrt{\alpha}/\sqrt{6}$ and $h \le c \min\{\varepsilon, \varepsilon^2\alpha/\gamma\}$. 

So with $\mathrm{diam}(\mathcal{T}_1) \le 2/\sqrt{\alpha}$, the covering numbers obey
\begin{align}
\label{eq:covering-number-bound}
    N \big( \mathcal{T}_1, d, \varepsilon \big) 
    &\le N \big( \mathbb{S}^{d-1}, \lVert \cdot \rVert, c\varepsilon\sqrt{\alpha} \big)
    \cdot N \big( [0,1], \lvert \cdot \rvert, c \min \{ \varepsilon, \varepsilon^2\alpha/\gamma \} \big) \nonumber \\
    &\le \left( \frac{C}{\varepsilon\sqrt{\alpha}} \right)^d
    \left(1+\frac{C}{\varepsilon}+\frac{C\gamma}{\varepsilon^2\alpha}\right).
\end{align}
The second factor is the contribution of the time direction. Unlike the sphere, it carries no exponential factor in $d$.

Dudley's chaining with the substitution $\varepsilon = w/\sqrt{\alpha}$ gives
\begin{align}
\label{eq:expected-supremum-bound}
    \mathbb{E} \sup_{\mathcal{T}_1} Z_{t,u} 
    &\le C \int_0^{2/\sqrt{\alpha}} \sqrt{ \log N(\mathcal{T}_1, d, \varepsilon) } \, \mathrm{d}\varepsilon \nonumber \\
    &\le \frac{C}{\sqrt{\alpha}} \int_0^2
    \left(
    \sqrt{ d \log \frac{C}{w} }
    +
    \sqrt{
    \log\left(
    1+\frac{C\sqrt{\alpha}}{w}+\frac{C\gamma}{w^2}
    \right)}
    \right)\mathrm{d}w \nonumber \\
    &\le \frac{C_D\sqrt d}{\sqrt{\alpha}} + R(\alpha,\gamma),
\end{align}
where
\[
R(\alpha,\gamma)
:=
\frac{C}{\sqrt{\alpha}}
\int_0^2
\sqrt{
\log\left(
1+\frac{C\sqrt{\alpha}}{w}+\frac{C\gamma}{w^2}
\right)}
\,\mathrm{d}w.
\]
An elementary bound gives
\[
R(\alpha,\gamma)
\le
\frac{C}{\sqrt{\alpha}}
\left(
1+\sqrt{\log(1+\alpha)}+\sqrt{\log(1+\gamma)}
\right),
\]
so $R(\alpha,\gamma)\to0$ as $\alpha\to\infty$.

We fix a second stiffness threshold:
\begin{align}
\label{eq:stochastic-stiffness-threshold}
    \alpha_2 &:= \frac{64 C_D^2}{m^2}, \\
    \text{so that} \qquad \frac{C_D}{\sqrt{\alpha}} &\le \frac{m}{8} \qquad \text{for all } \alpha \ge \alpha_2. \nonumber
\end{align}
Let
\[
R_\star
:=
\sup_{\alpha\ge\max\{\gamma^2/2,\alpha_2\}}
R(\alpha,\gamma)
<\infty,
\qquad
K_\star
:=
\max\left\{1,\frac{64R_\star^2}{m^2}\right\}.
\]
Then for every $\alpha\ge\max\{\gamma^2/2,\alpha_2\}$ and $d\ge K_\star$,
\[
\mathbb{E}\sup_{\mathcal{T}_1}Z_{t,u}
\le
\frac{m}{4}\sqrt d.
\]
By the translation invariance of \eqref{eq:gaussian-process-metric-bound} in $t$ and the uniform variance bound \eqref{eq:stochastic-variance-bound}, the same bound holds with $[0,1]$ replaced by any unit window $[k,k+1]$.

The Borell--TIS inequality \citep{adler2007}, applied on each window with $\sigma^2 \le 1/\alpha$ and deviation $s = \frac{m}{4}\sqrt{d}$, gives
\begin{equation}
    \mathbb{P} \left( \sup_{[k, k+1] \times \mathbb{S}^{d-1}} Z_{t,u} \ge \frac{m}{2}\sqrt{d} \right) \le \exp \left( -\frac{\alpha m^2 d}{32} \right).
\end{equation}

\paragraph{Conclusion:}
Set the overall required stiffness and corresponding exponential rate as:
\begin{align}
\label{eq:combined-stiffness-threshold}
    \alpha_\star &:= \max \left\{ \frac{\gamma^2}{2}, \, \alpha_1, \, \alpha_2 \right\} 
    = \max \left\{ \frac{\gamma^2}{2}, \, \frac{C(1 + \gamma^2 \rho_x^2 + \rho_v^2)}{(r - \rho_x)^2} \right\}, \\
    c_\star &:= \frac{\alpha_\star m^2}{64}. \nonumber
\end{align}
Let $\alpha \ge \alpha_\star$, $d \ge K_\star$, and $T \le e^{c_\star d}$. A union bound over the $\lceil T \rceil \le e^{c_\star d} + 1$ unit windows bounds the stochastic term:
\begin{align}
    \mathbb{P} \left( \sup_{t \le T} \lVert X_t \rVert \ge \frac{m}{2}\sqrt{d} \right) 
    &\le (e^{c_\star d} + 1) e^{-\alpha m^2 d/32} \nonumber \\
    &\le 2e^{c_\star d} e^{-2c_\star d} \nonumber \\
    &= 2e^{-c_\star d},
\end{align}
using the fact that $\alpha m^2 / 32 \ge \alpha_\star m^2 / 32 = 2c_\star$. 

On the complementary event, combining this stochastic bound with the homogeneous bound from \eqref{eq:homogeneous-component-bound} and \eqref{eq:initial-stiffness-threshold}, we find that the total position is strictly bounded:
\begin{align}
    \sup_{t \le T} \lVert x_t \rVert 
    &\le \left( \rho_x + \frac{m}{4} \right) \sqrt{d} + \frac{m}{2}\sqrt{d} \nonumber \\
    &= \left( \rho_x + \frac{3m}{4} \right) \sqrt{d} \nonumber \\
    &< (\rho_x + m)\sqrt{d} \nonumber \\
    &= r\sqrt{d}.
\end{align}
Because the trajectory strictly remains inside radius $r\sqrt{d}$, we have $\tau > T$. Hence, $\mathbb{P}(\tau \le T) \le C e^{-c_\star d}$.
\end{proof}

\begin{remark}
\label{rem:exit-radius-scaling}
Whenever the second term in \eqref{eq:combined-stiffness-threshold} attains the maximum, $\alpha_\star$ scales as $(r - \rho_x)^{-2}$, while $c_\star = \alpha_\star(r - \rho_x)^2/64$ does not depend on the radii: at the threshold stiffness, $c_\star$ is a constant determined by $\gamma\rho_x$ and $\rho_v$. The exponential time horizon $e^{c_\star d}$ therefore does not deteriorate as the exit radius shrinks, provided $\alpha$ is increased accordingly. This is what allows the lemma to be applied at $r=4$ in \Cref{thm:kinetic-trapping} and at $r=0.10$ in \Cref{thm:data-interpolation} without losing the exponential-in-$d$ time horizon.
\end{remark}

\section{Proof of the kinetic-trapping theorem}
\label{app:kinetic-trapping-proof}

\begin{proof}[Proof of \Cref{thm:kinetic-trapping}]

Recall $V(x) = \frac{1}{2} \lVert x - \mu \rVert^2$ with $\lVert \mu \rVert = \xi\sqrt{d}$ and $\xi > 5$, so that $\ptar = \mathcal{N}(\mu, I_d)$. We set
\begin{equation}
\label{eq:shift-radius}
    b_\xi := \frac{1}{2}(3 + \xi), \qquad \text{so that} \qquad 4 < b_\xi < \xi - 1,
\end{equation}
both inequalities being equivalent to $\xi > 5$. We fix
\begin{equation}
\label{eq:trap-stiffness}
    \alpha \ \ge\ \max\big\{ \alpha_\star(\gamma, 4, 1.1, 1.1), \ \tfrac{1}{16}(b_\xi + \xi)^2 \big\},
\end{equation}
with $\alpha_\star$ as in \Cref{lem:kinetic-exit-time}; both terms are constants depending only on $\gamma$ and $\xi$. The second is used only in \Cref{app:sufficient-conditions-proof}.

\paragraph{Construction:}
Let $\chi := \psi_{4\sqrt{d}, \, b_\xi\sqrt{d}}$ be the radial cutoff of \Cref{lem:smooth-bump-function}, so that $\chi \equiv 1$ on $\{\lVert x \rVert \le 4\sqrt{d}\}$ and $\chi \equiv 0$ on $\{\lVert x \rVert \ge b_\xi\sqrt{d}\}$. We put
\begin{equation}
\label{eq:kinetic-trap-potential}
    U(x) := \frac{\alpha}{2} \lVert x \rVert^2, \qquad \Vhat(x) := V(x) + \chi(x) \big( U(x) - V(x) \big), \qquad \Fhat := -\nabla \Vhat.
\end{equation}
Then $\Vhat \in C^\infty(\mathbb{R}^d)$ and $\Fhat$ is conservative by construction. Moreover, $\Fhat(x) = -\alpha x$ on the region $\{\lVert x \rVert \le 4\sqrt{d}\}$, while $\Vhat = V$ and $\Fhat = F$ on $\{\lVert x \rVert \ge b_\xi\sqrt{d}\}$.

\paragraph{Regularity and well-posedness:}
On the set $B := \{\lVert x \rVert \le b_\xi\sqrt{d}\}$, we have $\lVert x - \mu \rVert \le (b_\xi + \xi)\sqrt{d}$. Hence, the pointwise gaps are bounded by
\begin{equation}
\label{eq:kinetic-trap-potential-regions}
    \lvert U(x) - V(x) \rvert = O_\xi(\alpha d), \qquad \lVert \nabla U(x) - \nabla V(x) \rVert = O_\xi(\alpha\sqrt{d}) \qquad \text{on } B,
\end{equation}
while by \Cref{lem:smooth-bump-function} we have $\lVert \nabla \chi \rVert = O_\xi(1/\sqrt{d})$ and $\lVert \nabla^2 \chi \rVert_{\mathrm{op}} = O_\xi(1/d)$. Differentiating \eqref{eq:kinetic-trap-potential} twice yields
\begin{align*}
    \nabla^2 \Vhat 
    &= \nabla^2 V + \chi \big( \nabla^2 U - \nabla^2 V \big) \\
    &\quad + \nabla \chi \, (\nabla U - \nabla V)^\top + (\nabla U - \nabla V) \nabla \chi^\top \\
    &\quad + (U - V) \nabla^2 \chi.
\end{align*}
Each term in this expansion is $O_\xi(\alpha)$ by \eqref{eq:kinetic-trap-potential-regions}. Hence, $\sup_x \lVert \nabla^2 \Vhat(x) \rVert_{\mathrm{op}} \le H$ for a constant $H = O_\xi(\alpha)$ that is independent of $d$. Consequently, $\Fhat$ is globally Lipschitz, and \eqref{eq:kinetic-langevin} has a unique strong solution by \citet[Thm.~5.2.1]{oksendal2013}.

\paragraph{Target mass of the modified region:}
Write $x = \mu + Z$ with $Z \sim \mathcal{N}(0, I_d)$. If $\lVert x \rVert \le b_\xi\sqrt{d}$, then
\[
    \lVert Z \rVert \ge \lVert \mu \rVert - \lVert x \rVert \ge (\xi - b_\xi)\sqrt{d},
\]
and we know $\xi - b_\xi = \frac{1}{2}(\xi - 3) > 1$ by \eqref{eq:shift-radius}. Applying \Cref{lem:gaussian-norm-concentration} with $t = \frac{1}{2}(\xi - 5) > 0$, we bound the mass of the modified region:
\begin{align}
\label{eq:modified-region-target-mass}
   \ptar(B) 
    &\le \mathbb{P} \left( \lVert Z \rVert \ge \left( 1 + \frac{\xi - 5}{2} \right) \sqrt{d} \right) \nonumber \\
    &\le C e^{-c(\xi - 5)^2 d / 4} \nonumber \\
    &= e^{-\Omega(d)}
\end{align}
The exact same argument, but with $4$ in place of $b_\xi$ and using the fact that $\xi - 4 > 1$, gives
\begin{equation}
\label{eq:trap-region-target-mass}
    \ptar(A) \ge 1 - e^{-\Omega(d)}, \qquad \text{where} \qquad A := \{ \lVert x \rVert \ge 4\sqrt{d} \}.
\end{equation}

\paragraph{Population $L^p$ errors:}
Both $\Vhat - V$ and $\Fhat - F$ vanish outside of $B$. On $B$, they are bounded by \eqref{eq:kinetic-trap-potential-regions} together with the $\nabla \chi$ contribution $\lvert U - V \rvert \lVert \nabla \chi \rVert = O_\xi(\alpha\sqrt{d})$. Hence, for any fixed $p \ge 1$, we can bound the population error for the forces:
\begin{align}
    \mathbb{E}_{x \sim \ptar} \big[ \lVert \Fhat(x) - F(x) \rVert^p \big]^{1/p} 
    &\le O_\xi(\alpha\sqrt{d}) \, \ptar(B)^{1/p} \nonumber \\
    &\le O_\xi(\alpha\sqrt{d}) \, e^{-\Omega(d)/p} \nonumber \\
    &= e^{-\Omega(d)}
\end{align}
Identically, we find $\mathbb{E}_{x \sim \ptar} \big[ \lvert \Vhat(x) - V(x) \rvert^p \big]^{1/p} \le O_\xi(\alpha d) \, e^{-\Omega(d)/p} = e^{-\Omega(d)}$. In both cases, the bounds hold for all $d \ge d_0(p, \xi)$, as the polynomial prefactors are naturally absorbed once $d$ is large relative to $p$. This concludes part (i).

\paragraph{Total-variation lower bound:}
Define the high-probability initialization event
\[
    G_0 := \big\{ \lVert x_0 \rVert \le 1.1\sqrt{d}, \ \lVert v_0 \rVert \le 1.1\sqrt{d} \big\}.
\]
By \Cref{lem:gaussian-norm-concentration}, $\mathbb{P}(G_0) \ge 1 - e^{-\Omega(d)}$. While $\lVert x_t \rVert \le 4\sqrt{d}$, we have $\Fhat(x_t) = -\alpha x_t$. Therefore, up to the exit time $\tau = \inf \{ t : \lVert x_t \rVert \ge 4\sqrt{d} \}$, the process coincides in law with the linear kinetic Langevin dynamics analyzed in \Cref{lem:kinetic-exit-time}. 

That bound holds uniformly over any deterministic initial data within the ball of \eqref{eq:homogeneous-component-bound}. Hence, it also holds after conditioning on $G_0$ and integrating over $(x_0, v_0)$. Applying \Cref{lem:kinetic-exit-time} with $r = 4$ and $\rho_x = \rho_v = 1.1$ gives
\[
    \mathbb{P}(\tau \le T \mid G_0) \le C e^{-c_\star d} \qquad \text{for every } T \le e^{c_\star d},
\]
and therefore $\mathbb{P}(x_T \in A \mid G_0) \le C e^{-c_\star d}$. Consequently, we can bound the probability of the process being in $A$ at time $T$:
\begin{align}
    \pihat_T(A) 
    &\le \mathbb{P}(x_T \in A \mid G_0) + \mathbb{P}(G_0^c) \nonumber \\
    &\le e^{-\Omega(d)}.
\end{align}
Combining this with \eqref{eq:trap-region-target-mass}, we obtain the total variation bound for every $T \le e^{c_\star d}$:
\begin{align}
    \mathrm{TV} \big( \pihat_T, \ptar \big) 
    &\ge \ptar(A) - \pihat_T(A) \nonumber \\
    &\ge 1 - e^{-\Omega(d)},
\end{align}
which completes part (ii). 
\end{proof}

\begin{corollary}
\label{cor:monte-carlo-initialization}
For the potential of \Cref{thm:kinetic-trapping}, the conclusion of part (ii) also holds for any
deterministic initialization with $\norm{x_0},\norm{v_0}\le1.1\sqrt d$. Consequently, the mixing time of Langevin dynamics with
$\Fhat$ to any distribution within constant $\TV$ of $\ptar$ is at least $e^{cd}$.
\end{corollary}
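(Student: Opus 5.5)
The corollary has two parts: extending part (ii) of \Cref{thm:kinetic-trapping} to deterministic initial data, and deducing a mixing-time lower bound. I will keep the potential $\Vhat$ built in \Cref{app:kinetic-trapping-proof} and rerun only the dynamical part of that proof.

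\textbf{First claim.} Fix a deterministic $(x_0,v_0)$ with $\norm{x_0},\norm{v_0}\le1.1\sqrt d$. Let $\tau$ be the first time $\norm{x_t}\ge4\sqrt d$. Before $\tau$, $\Fhat(x_t)=-\alpha x_t$, so up to $\tau$ the process coincides pathwise (driven by the same Brownian motion) with the linear kinetic Langevin dynamics. The stiffness $\alpha$ in \eqref{eq:trap-stiffness} satisfies $\alpha\ge\alpha_\star(\gamma,4,1.1,1.1)$. Hence \Cref{lem:kinetic-exit-time}, with $r=4$ and $\rho_x=\rho_v=1.1$, applies directly to this deterministic initial condition; the lemma is already stated uniformly over such data. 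It gives $\Prob(\tau\le T)\le Ce^{-c_\star d}$ for all $T\le e^{c_\star d}$ and $d\ge K_\star$.

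Therefore $\pihat_T(A)\le Ce^{-c_\star d}$, where $A=\{\norm x\ge4\sqrt d\}$. Combining this with \eqref{eq:trap-region-target-mass} yields $\TV(\pihat_T,\ptar)\ge\ptar(A)-\pihat_T(A)\ge1-e^{-\Omega(d)}$. No $\Prob(G_0^c)$ term arises, because the initial data lie in $G_0$ surely. The temperature normalization is unchanged: the theorem uses $\beta=1$, matching the lemma's noise $\sqrt{2\gamma}$.

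\textbf{Mixing-time claim.} Let $\nu$ be any distribution with $\TV(\nu,\ptar)\le\epsilon_0$ for a constant $\epsilon_0<1$. By the triangle inequality, for every $T\le e^{c_\star d}$,
\[
\TV(\pihat_T,\nu)\ge\TV(\pihat_T,\ptar)-\epsilon_0\ge1-\epsilon_0-e^{-\Omega(d)}.
\]
For large $d$ this exceeds any fixed threshold $\epsilon<1-\epsilon_0$. So the time for the law started from such data to come within $\epsilon$ of $\nu$ is at least $e^{c_\star d}$, and the mixing time is at least $e^{cd}$ with $c=c_\star$.

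The mixing-time claim is stated for the law of $(x_t,v_t)$. TV distance between joint laws dominates TV distance between their position marginals. A lower bound on the marginal TV, after projecting $\nu$ to its position marginal, therefore transfers to the joint TV. The bound also holds for the worst-case initialization, which is what mixing time refers to.

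\textbf{Main obstacle.} There is little real difficulty. The only step that needs care is confirming that the constants $c_\star$, $C$ and $K_\star$ from \Cref{lem:kinetic-exit-time} do not depend on the particular initial point. They depend only on $(\gamma,r,\rho_x,\rho_v)$, so the bound is uniform over all admissible deterministic data.
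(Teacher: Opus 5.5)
Your proposal is correct and follows the paper's argument. The paper notes that the proof of \Cref{thm:kinetic-trapping}(ii) depends on the initialization only through the event $G_0$, and that \Cref{lem:kinetic-exit-time} holds uniformly over deterministic data with $\norm{x_0},\norm{v_0}\le1.1\sqrt d$, which is exactly how you use it. Your triangle-inequality derivation of the mixing-time bound, including the observation that total variation on phase space dominates total variation between position marginals, fills in a step the paper leaves implicit.
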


The proof is immediate because the argument depends on the initialization only through the event $G_0$, and \Cref{lem:kinetic-exit-time} applies to any deterministic initialization $(x_0, v_0)$ satisfying $\lVert x_0 \rVert, \lVert v_0 \rVert \le 1.1\sqrt{d}$. 

\begin{remark}
\label{rem:overdamped}
The proof differs from the overdamped setting of \citet{cao2026} because inertia changes the short-time dynamics. In overdamped Langevin, the trapped process is an Ornstein-Uhlenbeck process, so the position is Gaussian and can be controlled directly. In kinetic Langevin, by contrast, the position is the time integral of an OU-type velocity process with Green's function
$$G(u) = \omega^{-1} e^{-\frac{\gamma u}{2}} \sin(\omega u), \qquad \omega = \sqrt{\alpha - \tfrac{\gamma^2}{4}},$$
and large initial velocities can produce ballistic excursions before friction dominates. The proof is therefore based on a hypoelliptic exit-time bound.
\end{remark}

\begin{remark}
\label{rem:inverse-temperature-scaling}
The construction extends verbatim to strongly log-concave targets
$\ptar\propto e^{-V}$ with $\grad V$ Lipschitz,
using concentration of Lipschitz functions of log-concave measures
\citep{bakry2014analysis}. General $\beta$ follows by rescaling $x\mapsto\sqrt\beta\,x$,
$t\mapsto t$, which maps \eqref{eq:kinetic-langevin} to the $\beta=1$ equation with a rescaled potential.
\end{remark}

\begin{remark}
Since $\Fhat$ is Lipschitz, by Girsanov's theorem any discretization $\hat x_T$ of the SDE (e.g.\ BAOAB with suitable step size; \citealp{leimkuhler2015}) satisfying
\[
\TV\bigl(\Law(\hat x_T),\Law(x_T)\bigr)\le c
\]
inherits $\TV(\Law(\hat x_T),\ptar)\ge1-c-e^{-\Omega(d)}$ by the triangle inequality.
\end{remark}

\section{Proof of the data-interpolation theorem}
\label{app:data-initialization-proof}

\begin{proof}[Proof of \Cref{thm:data-interpolation}]

Throughout, we write $z_i := x_i - \mu$ for the centered training samples and $Z := x - \mu$ for a centered draw $x \sim \ptar$, so that
\[
    z_1, \dots, z_n, Z \stackrel{\mathrm{iid}}{\sim} \mathcal{N}(0, I_d).
\]
Recall the definitions of the unperturbed energy and force:
\[
    V(x) = \frac{1}{2} \lVert x - \mu \rVert^2 \qquad \text{and} \qquad F(x) = -(x - \mu).
\]
Fix the stiffness parameter to be sufficiently large:
\begin{equation}
\label{eq:data-trap-stiffness}
    \alpha \ge \alpha_0 := \max \big\{ 40, \, \alpha_\star(\gamma, 0.10, 0.05, 1.1) \big\},
\end{equation}
with $\alpha_\star$ as in \Cref{lem:kinetic-exit-time}; note that $\alpha_0$ is a constant independent of $d$ and $n$.

Let $\psi \in C^\infty(\mathbb{R})$ satisfy $\psi \equiv 1$ on $(-\infty, 0.0225]$, $\psi \equiv 0$ on $[0.0256, \infty)$, and $0 \le \psi \le 1$. We set
\[
    \chi_i(x) := \psi \left( \frac{\lVert x - x_i \rVert^2}{d} \right),
\]
so that the cutoff function satisfies
\[
    \chi_i \equiv 1 \quad \text{on } \{\lVert x - x_i \rVert \le 0.15\sqrt{d}\} \qquad \text{and} \qquad \chi_i \equiv 0 \quad \text{on } \{\lVert x - x_i \rVert \ge 0.16\sqrt{d}\}.
\]
Define the well centers and offsets
\begin{equation}
\label{eq:data-trap-center}
    w_i := x_i - \frac{1}{\alpha} z_i, \qquad c_i := \frac{1}{2} \left( 1 - \frac{1}{\alpha} \right) \lVert z_i \rVert^2, \qquad U_i(x) := \frac{\alpha}{2} \lVert x - w_i \rVert^2 + c_i,
\end{equation}
and set the modified global energy and force as
\begin{equation}
\label{eq:data-interpolating-potential}
    \Vhat(x) := V(x) + \sum_{i=1}^n \chi_i(x) \big( U_i(x) - V(x) \big), \qquad \Fhat := -\nabla \Vhat.
\end{equation}
The offset $w_i$ displaces the well center from $x_i$ toward $\mu$ by a $1/\alpha$ fraction, and $c_i$ is the unique constant making $\Vhat$ agree with $V$ at $x_i$; these two choices are what give the exact fit asserted in part (i).

\paragraph{General-position event:}
For $i \ne j$, we have $z_i - z_j \sim \mathcal{N}(0, 2I_d)$, so $\mathbb{E} \lVert z_i - z_j \rVert = (1 + o(1))\sqrt{2d}$. By \Cref{lem:gaussian-norm-concentration}, we obtain the high-probability bounds:
\begin{align*}
    \lVert x_i - x_j \rVert &= \lVert z_i - z_j \rVert \ge 0.4\sqrt{d} \qquad \text{with probability } 1 - e^{-\Omega(d)}, \\
    \lVert z_i \rVert &\in [0.5\sqrt{d}, 2\sqrt{d}] \qquad \text{with probability } 1 - e^{-\Omega(d)}.
\end{align*}
A union bound over the $\binom{n}{2} + n = \mathrm{poly}(d)$ events gives, with probability $1 - e^{-\Omega(d)}$,
\begin{equation}
\label{eq:training-general-position-event}
    \min_{i \ne j} \lVert x_i - x_j \rVert \ge 0.4\sqrt{d} \qquad \text{and} \qquad 0.5\sqrt{d} \le \lVert z_i \rVert \le 2\sqrt{d} \quad (1 \le i \le n).
\end{equation}
Since $0.4\sqrt{d} > 2 \cdot 0.16\sqrt{d}$, the balls $B_i := \{\lVert x - x_i \rVert \le 0.16\sqrt{d}\}$ are pairwise disjoint on this event, so at most one summand in \eqref{eq:data-interpolating-potential} is nonzero at any given $x$. In particular, $\Vhat \in C^\infty(\mathbb{R}^d)$ and $\Fhat$ is conservative by construction. We work on the event \eqref{eq:training-general-position-event} for the remainder of the proof and record the consequence
\begin{equation}
\label{eq:training-offset-bound}
    \lVert x_i - w_i \rVert = \frac{1}{\alpha} \lVert z_i \rVert \le \frac{2}{\alpha}\sqrt{d} \le 0.05\sqrt{d},
\end{equation}
where the last inequality uses $\alpha \ge 40$.

\paragraph{Interpolation:}
Fix $i$. Since $\chi_i(x_i) = 1$ and $\chi_j(x_i) = 0$ for $j \ne i$, we have $\Vhat = U_i$ and $\Fhat = -\nabla U_i$ in a neighborhood of $x_i$, whence
\[
    \Fhat(x_i) = -\alpha(x_i - w_i) = -\alpha \cdot \frac{1}{\alpha} z_i = -z_i = F(x_i),
\]
and, using $\lVert x_i - w_i \rVert^2 = \lVert z_i \rVert^2 / \alpha^2$, we can compute the energy match directly:
\begin{align*}
    \Vhat(x_i) 
    &= \frac{\alpha}{2} \cdot \frac{\lVert z_i \rVert^2}{\alpha^2} + c_i \\
    &= \frac{1}{2\alpha} \lVert z_i \rVert^2 + \frac{1}{2} \left( 1 - \frac{1}{\alpha} \right) \lVert z_i \rVert^2 \\
    &= \frac{1}{2} \lVert z_i \rVert^2 = V(x_i).
\end{align*}
The empirical energy-and-force matching loss of $\Vhat$ therefore vanishes identically, which is part (i).

\paragraph{Regularity and well-posedness:}
On $B_i$, we have the distance bounds
\[
    \lVert x - \mu \rVert \le \lVert x - x_i \rVert + \lVert z_i \rVert \le 2.16\sqrt{d} \qquad \text{and, by \eqref{eq:training-offset-bound},} \qquad \lVert x - w_i \rVert \le 0.21\sqrt{d},
\]
so that the point-wise gaps are bounded by
\begin{equation}
\label{eq:data-potential-regions}
    \lvert U_i(x) - V(x) \rvert = O(\alpha d), \qquad \lVert \nabla U_i(x) - \nabla V(x) \rVert = O(\alpha\sqrt{d}) \qquad \text{on } B_i.
\end{equation}
Moreover, since $\chi_i$ is a fixed smooth function of $\lVert x - x_i \rVert^2/d$ evaluated where $\lVert x - x_i \rVert = O(\sqrt{d})$, we have
\[
    \lVert \nabla \chi_i \rVert = O(1/\sqrt{d}) \qquad \text{and} \qquad \lVert \nabla^2 \chi_i \rVert_{\mathrm{op}} = O(1/d).
\]
Differentiating \eqref{eq:data-interpolating-potential} twice and combining with \eqref{eq:data-potential-regions}, every resulting term is $O(\alpha)$. Hence,
\[
\sup_x \lVert \nabla^2 \Vhat(x) \rVert_{\mathrm{op}} \le H
\]
for a constant $H=O(\alpha)$ independent of $d$ and $n$. In particular, $\Fhat$ is globally Lipschitz and \eqref{eq:kinetic-langevin} admits a unique strong solution.

\paragraph{Target mass of the modified region:}
Let $A := \{x : \min_i \lVert x - x_i \rVert \ge 0.16\sqrt{d}\}$, so that $A^c = \bigcup_i B_i$ and $\Vhat = V$ on $A$. Fix $i$ and consider $g(Z) := \langle Z, z_i \rangle / \lVert z_i \rVert$, which is $\mathcal{N}(0, 1)$ under $\ptar$. If $x \in B_i$, then
\[
    \lvert g(Z) - \lVert z_i \rVert \rvert = \frac{\lvert \langle Z - z_i, z_i \rangle \rvert}{\lVert z_i \rVert} \le \lVert Z - z_i \rVert \le 0.16\sqrt{d}.
\]
So by \eqref{eq:training-general-position-event}, we are bounded away from the center:
\[
    g(Z) \ge \lVert z_i \rVert - 0.16\sqrt{d} \ge 0.34\sqrt{d}.
\]
The standard Gaussian tail bound therefore gives
\[
    \ptar(B_i) \le \mathbb{P} \big( \mathcal{N}(0, 1) \ge 0.34\sqrt{d} \big) \le e^{-0.05 d},
\]
and a union bound over the $n = \mathrm{poly}(d)$ balls yields
\begin{equation}
\label{eq:data-modified-region-mass}
    \ptar(A^c) \le n e^{-0.05 d} = e^{-\Omega(d)}, \qquad \text{equivalently,} \qquad \ptar(A) \ge 1 - e^{-\Omega(d)}.
\end{equation}

\paragraph{Population $L^p$ errors:}
Both $\Vhat - V$ and $\Fhat - F$ vanish on $A$. On $A^c$, they are bounded by \eqref{eq:data-potential-regions} together with the corresponding bounds on the annuli, which are of the same order. Hence, for fixed $p \ge 1$, we obtain the bounds
\begin{align*}
    \mathbb{E}_{x \sim \ptar} \big[ \lVert \Fhat(x) - F(x) \rVert^p \big]^{1/p} &\le O(\alpha\sqrt{d}) \cdot \ptar(A^c)^{1/p} \le O(\alpha\sqrt{d}) \, e^{-\Omega(d)/p} = e^{-\Omega(d)}, \\
    \mathbb{E}_{x \sim \ptar} \big[ \lvert \Vhat(x) - V(x) \rvert^p \big]^{1/p} &\le O(\alpha d) \, e^{-\Omega(d)/p} = e^{-\Omega(d)}.
\end{align*}
In both cases, these hold for all $d \ge d_0(p)$, as the polynomial prefactors are absorbed once $d$ is large relative to $p$. This is part (ii).

\paragraph{Total-variation lower bound:}
Condition on $x_0 = x_i$ and write $\Pi_i$ for the corresponding law. Since $v_0 \sim \mathcal{N}(0, I_d)$, \Cref{lem:gaussian-norm-concentration} gives
\[
    \lVert v_0 \rVert \le 1.1\sqrt{d} \qquad \text{with probability } 1 - e^{-\Omega(d)},
\]
and we work on this event. In the shifted coordinates $y_t := x_t - w_i$, we have $\Fhat(x_t) = -\alpha y_t$ for as long as $x_t \in \{\lVert x - x_i \rVert \le 0.15\sqrt{d}\}$. Thus, $(y_t, v_t)$ solves the linear SDE of \Cref{lem:kinetic-exit-time} with the confining center at the origin. By \eqref{eq:training-offset-bound},
\[
    \lVert y_0 \rVert = \lVert x_i - w_i \rVert \le 0.05\sqrt{d}, \qquad \text{and} \qquad \big\{ \lVert y \rVert \le 0.10\sqrt{d} \big\} \subseteq \big\{ \lVert x - x_i \rVert \le 0.15\sqrt{d} \big\},
\]
so \Cref{lem:kinetic-exit-time} applies with $(r, \rho_x, \rho_v) = (0.10, 0.05, 1.1)$, which is legitimate by the choice \eqref{eq:data-trap-stiffness} of $\alpha$. The exit time $\tau = \inf \{ t : \lVert y_t \rVert \ge 0.10\sqrt{d} \}$ therefore satisfies
\[
    \mathbb{P}_{\Pi_i}(\tau \le T) \le C e^{-c_\star d} \qquad \text{for every } T \le e^{c_\star d}.
\]
Since $\{\lVert x - x_i \rVert \le 0.15\sqrt{d}\} \subseteq B_i \subseteq A^c$, we obtain
\[
    \mathbb{P}_{\Pi_i}(x_T \in A) \le C e^{-c_\star d} + e^{-\Omega(d)} = e^{-\Omega(d)} \qquad \text{uniformly in } i,
\]
where the second term accounts for the velocity event. Averaging over the initialization $x_0 \sim \frac{1}{n} \sum_i \delta_{x_i}$ gives
\[
    \pihat_T(A) = \frac{1}{n} \sum_{i=1}^n \mathbb{P}_{\Pi_i}(x_T \in A) \le e^{-\Omega(d)}.
\]
Combining this with \eqref{eq:data-modified-region-mass}, we conclude
\begin{align*}
    \mathrm{TV} \big( \pihat_T, \ptar \big) 
    &\ge \ptar(A) - \pihat_T(A) \\
    &\ge 1 - e^{-\Omega(d)}
\end{align*}
for all $T \le e^{c_\star d}$ and all $d$ larger than a suitable constant, which completes part (iii).
\end{proof}

\begin{remark}
The quadratic form in \Cref{thm:data-interpolation} allows for an explicit exit-time calculation. A smooth annular-barrier construction can likewise coincide with $V$ in a neighbourhood of every training configuration preserving all local derivatives while exhibiting exponentially long metastable confinement.
\end{remark}

\begin{remark}
The trapping mechanism extends beyond the target training distribution. Suppose a training distribution $\nu$ assigns mass $e^{-\Omega(d)}$ to an admissible region $B_d$, and $\ptar(B_d)=e^{-\Omega(d)}$. A perturbation supported on $B_d$ then has exponentially small energy and force errors under $\nu$. If the learned dynamics is initialized inside a trap placed in $B_d$, the exit-time argument of \Cref{lem:kinetic-exit-time} yields
\(
\TV(\pihat_T,\ptar)\ge 1-e^{-\Omega(d)}
\)
for $T\le e^{\Omega(d)}$.
\end{remark}

\section{Proof of the equilibrium-mismatch theorem}
\label{app:equilibrium-mismatch-proof}

\begin{assumption}[Confinement]
\label{ass:confinement}
$V\in C^\infty(\R^d)$, $\sup_x\norm{\grad^2V(x)}_{\rm op}\le L$, and there exist $a\in(0,1)$ and
$b\ge0$ such that
\[
\tfrac12\inner{x}{\grad V(x)}\ \ge\ aV(x)+\tfrac{\gamma^2a(2-a)}{8(1-a)}\norm{x}^2-b
\]
for all $x\in\R^d$.
\end{assumption}

\begin{assumption}[Target concentration and extensivity]
\label{ass:target-concentration}
There exist $\bar x\in\R^d$ and constants $R\ge\theta>0$, $\kappa,\Lambda>0$, all independent of
$d$, such that
\[
\ptar\big(B(\bar x,R\sqrt d)\big)\ \ge\ 1-e^{-\kappa d},
\qquad
\sup\big\{V(x):\ \norm{x-\bar x}\le(R+3\theta)\sqrt d\big\}\ \le\ \Lambda d.
\]
\end{assumption}

Under \Cref{ass:confinement,ass:target-concentration}, we place an extensive but curvature-controlled well just outside the target's concentration region. We first establish its geometry and regularity, then show that it is invisible to target-weighted errors but captures nearly all learned equilibrium mass, and finally verify ergodicity and the arbitrary-initialization limit.

\begin{proof}[Proof of \Cref{thm:equilibrium-mismatch}]
Recall the normalization $\inf V = 0$, and write $Z := \int_{\mathbb{R}^d} e^{-\beta V}$, so that $\ptar = Z^{-1} e^{-\beta V}$. Let $\omega_d$ denote the volume of the unit ball in $\mathbb{R}^d$. We assume throughout that $d$ is large enough that $e^{-\kappa d} \le \frac{1}{2}$.

\paragraph{Construction:}
Let $\bar{x}, R, \theta, \kappa, \Lambda$ be as in \Cref{ass:target-concentration}. Choose any $x_\star \in \mathbb{R}^d$ with $\lVert x_\star - \bar{x} \rVert = (R + 2\theta)\sqrt{d}$ and set
\begin{equation}
\label{eq:off-support-pocket}
    B_{\mathrm{in}} := B\big(x_\star, \theta\sqrt{d}\big), \qquad B_{\mathrm{out}} := B\big(x_\star, 2\theta\sqrt{d}\big), \qquad \chi := \psi_{\theta\sqrt{d}, \, 2\theta\sqrt{d}}(\cdot - x_\star),
\end{equation}
with $\psi$ the radial cutoff of \Cref{lem:smooth-bump-function}, so that $\chi \equiv 1$ on $B_{\mathrm{in}}$ and $\chi \equiv 0$ off $B_{\mathrm{out}}$. Define the well depth and the learned potential by
\begin{equation}
\label{eq:equilibrium-mismatch-potential}
    \Delta := \frac{(\kappa_1 + \kappa_2)d + \log 2}{\beta}, \qquad \Vhat := V - \Delta\chi, \qquad \Fhat := -\nabla \Vhat,
\end{equation}
where
\begin{equation}
\label{eq:pocket-mass-exponent}
    \kappa_1 := \log(R/\theta) + \beta\Lambda \ge 0
\end{equation}
is a constant independent of $d$ (nonnegative because $\theta \le R$). Then $\Vhat \in C^\infty(\mathbb{R}^d)$, the field $\Fhat$ is conservative, $\Vhat = V$ off $B_{\mathrm{out}}$, and $\lVert \Vhat - V \rVert_\infty = \Delta = O(d/\beta)$.

\paragraph{Geometry of the pocket:}
If $\lVert x - x_\star \rVert \le 2\theta\sqrt{d}$, then $\lVert x - \bar{x} \rVert \ge \lVert x_\star - \bar{x} \rVert - 2\theta\sqrt{d} = R\sqrt{d}$, so
\begin{equation}
\label{eq:pocket-target-mass}
    B_{\mathrm{out}} \subseteq \big\{ x : \lVert x - \bar{x} \rVert \ge R\sqrt{d} \big\} \qquad \Longrightarrow \qquad \ptar(B_{\mathrm{out}}) \le e^{-\kappa d}
\end{equation}
by \Cref{ass:target-concentration}. Conversely, every $x \in B_{\mathrm{in}}$ satisfies $\lVert x - \bar{x} \rVert \le (R + 3\theta)\sqrt{d}$, so $V \le \Lambda d$ on $B_{\mathrm{in}}$, again by \Cref{ass:target-concentration}.

\paragraph{Regularity and well-posedness:}
By \Cref{lem:smooth-bump-function} applied with $(a,b) = (\theta\sqrt{d}, 2\theta\sqrt{d})$, we have
\[
    \lVert \nabla \chi \rVert_\infty \le \frac{C_0}{\theta\sqrt{d}} \qquad \text{and} \qquad \lVert \nabla^2 \chi \rVert_{\mathrm{op},\infty} \le \frac{C_0}{\theta^2 d} + \frac{C_0}{\theta^2 d} = \frac{2C_0}{\theta^2 d}.
\]
Hence, $\nabla^2 \Vhat = \nabla^2 V - \Delta \, \nabla^2 \chi$ obeys
\begin{align}
\label{eq:equilibrium-mismatch-hessian}
    \sup_x \lVert \nabla^2 \Vhat(x) \rVert_{\mathrm{op}} 
    &\le L + \Delta \cdot \frac{2C_0}{\theta^2 d} \nonumber \\
    &\le L + \underbrace{\frac{2C_0(\kappa_1 + \kappa_2 + 1)}{\beta\theta^2}}_{=: \, H'},
\end{align}
which is independent of $d$. In particular, $\Fhat$ is globally Lipschitz, so \eqref{eq:kinetic-langevin} driven by $\Fhat$ has a unique strong solution \citep[Thm.~5.2.1]{oksendal2013}. This is the step at which the width $\Theta(\sqrt{d})$ of the pocket is essential: it converts the extensive depth $\Delta = \Theta(d/\beta)$ into an intensive curvature.

\paragraph{Population errors (Part (i)):}
Both $\Vhat - V = -\Delta\chi$ and $\Fhat - F = \Delta \, \nabla \chi$ vanish off $B_{\mathrm{out}}$, and on $B_{\mathrm{out}}$ are bounded in magnitude by $\Delta$ and $C_0\Delta/(\theta\sqrt{d})$, respectively. Combined with \eqref{eq:pocket-target-mass}, we obtain
\begin{align*}
    \mathbb{E}_{x \sim \ptar} \big[ \lvert \Vhat(x) - V(x) \rvert^p \big]^{1/p} 
    &\le \Delta \, \ptar(B_{\mathrm{out}})^{1/p} \le O(d) \, e^{-\kappa d/p} = e^{-\Omega(d)}, \\
    \mathbb{E}_{x \sim \ptar} \big[ \lVert \Fhat(x) - F(x) \rVert^p \big]^{1/p} 
    &\le \frac{C_0\Delta}{\theta\sqrt{d}} \, \ptar(B_{\mathrm{out}})^{1/p} \le O(\sqrt{d}) \, e^{-\kappa d/p} = e^{-\Omega(d)},
\end{align*}
for all $d \ge d_0(p)$, as the polynomial prefactors are absorbed once $d$ is large relative to $p$.

Since $\ptar\big(B(\bar{x}, R\sqrt{d})\big) \ge \frac{1}{2}$ and $V \ge 0$, we can bound $Z$ from above:
\[
    Z \le 2 \int_{B(\bar{x}, R\sqrt{d})} e^{-\beta V} \le 2 \, \mathrm{vol}\big(B(\bar{x}, R\sqrt{d})\big) = 2 \, \omega_d (R\sqrt{d})^d.
\]
Meanwhile, $V \le \Lambda d$ on $B_{\mathrm{in}}$ gives $\int_{B_{\mathrm{in}}} e^{-\beta V} \ge \omega_d (\theta\sqrt{d})^d e^{-\beta\Lambda d}$. Therefore,
\begin{equation}
\label{eq:pocket-learned-mass}
    \ptar(B_{\mathrm{in}}) = \frac{1}{Z} \int_{B_{\mathrm{in}}} e^{-\beta V} \ge \frac{1}{2} \left( \frac{\theta}{R} \right)^d e^{-\beta\Lambda d} = \frac{1}{2} e^{-\kappa_1 d},
\end{equation}
with $\kappa_1$ as defined in \eqref{eq:pocket-mass-exponent}.

\paragraph{Stationary mismatch (Part (ii)):}
Write $\widehat{Z} := \int e^{-\beta\Vhat}$. Since $\Vhat = V - \Delta$ on $B_{\mathrm{in}}$,
\[
    \widehat{Z} \ge \int_{B_{\mathrm{in}}} e^{-\beta\Vhat} = e^{\beta\Delta} \int_{B_{\mathrm{in}}} e^{-\beta V} = e^{\beta\Delta} Z \, \ptar(B_{\mathrm{in}}).
\]
Since $\Vhat = V$ off $B_{\mathrm{out}}$ and $B_{\mathrm{out}}^c \subseteq B_{\mathrm{in}}^c$, we have
\begin{align*}
    \pihat(B_{\mathrm{out}}^c) 
    &= \frac{1}{\widehat{Z}} \int_{B_{\mathrm{out}}^c} e^{-\beta V} \\
    &= \frac{Z \, \ptar(B_{\mathrm{out}}^c)}{\widehat{Z}} \\
    &\le \frac{1}{e^{\beta\Delta} \, \ptar(B_{\mathrm{in}})} \\
    &\le \frac{2 e^{\kappa_1 d}}{e^{(\kappa_1 + \kappa_2)d + \log 2}}\\ 
    &= e^{-\kappa_2 d},
\end{align*}
by \eqref{eq:pocket-learned-mass} and the choice \eqref{eq:equilibrium-mismatch-potential} of $\Delta$. Combining this with $\ptar(B_{\mathrm{out}}^c) \ge 1 - e^{-\kappa d}$ from \eqref{eq:pocket-target-mass}, we obtain
\begin{equation}
\label{eq:equilibrium-total-variation}
    \mathrm{TV}(\pihat, \ptar) \ge \ptar(B_{\mathrm{out}}^c) - \pihat(B_{\mathrm{out}}^c) \ge 1 - e^{-\kappa d} - e^{-\kappa_2 d} = 1 - e^{-\Omega(d)}.
\end{equation}

It remains to identify $\pihat$ as the position marginal of the unique stationary law. The diffusion matrix of \eqref{eq:kinetic-langevin} is degenerate, acting only on velocities, so ellipticity is unavailable; the drift and diffusion fields do, however, satisfy H{\"o}rmander's bracket condition, so the generator is hypoelliptic \citep{hormander1967}. It therefore suffices to verify the Lyapunov condition of \Cref{ass:confinement} for $\Vhat$.

First, $\Vhat \ge V - \Delta \ge -\Delta$ is bounded below, and $e^{-\beta\Vhat} \le e^{\beta\Delta} e^{-\beta V}$ is integrable. Second, since $\chi \ge 0$, we have $\Vhat \le V$, hence $aV \ge a\Vhat$, and therefore
\begin{align*}
    \frac{1}{2} \langle x, \nabla \Vhat(x) \rangle 
    &= \frac{1}{2} \langle x, \nabla V(x) \rangle - \frac{\Delta}{2} \langle x, \nabla \chi(x) \rangle \\
    &\ge a \Vhat(x) + \frac{\gamma^2 a(2-a)}{8(1-a)} \lVert x \rVert^2 - b - \frac{\Delta M}{2},
\end{align*}
where $M := \sup_{x \in B_{\mathrm{out}}} \lVert x \rVert \lVert \nabla \chi \rVert_\infty < \infty$ because $\chi$ has compact support. Thus, \Cref{ass:confinement} holds for $\Vhat$ with the same parameter $a$ and with $b$ replaced by $b + \frac{1}{2}\Delta M + a\Delta$ (the last term absorbing the shift needed to renormalize $\inf \Vhat$ to zero). 

By \citet[Thm.~3.2]{mattingly2002}, the process is geometrically ergodic with unique invariant measure
\[
    \pihat(\mathrm{d}x \, \mathrm{d}v) \propto e^{-\beta(\Vhat(x) + \frac{1}{2}\lVert v \rVert^2)} \, \mathrm{d}x \, \mathrm{d}v,
\]
whose position marginal is $\pihat \propto e^{-\beta\Vhat}$. Moreover, there exist constants $C, \lambda > 0$ and a Lyapunov function $\mathcal{L}(x,v) \asymp \Vhat(x) + \lVert x \rVert^2 + \lVert v \rVert^2$ such that
\begin{equation}
\label{eq:geometric-ergodicity}
    \mathrm{TV}\big( \mathrm{Law}(x_t, v_t), \pihat \big) \le C \big( 1 + \mathbb{E} \, \mathcal{L}(x_0, v_0) \big) e^{-\lambda t}.
\end{equation}

\paragraph{Arbitrary initialization (Part (iii)):}
Fix any initial law with $\mathbb{E}[V(x_0) + \lVert x_0 \rVert^2 + \lVert v_0 \rVert^2] < \infty$. Since $\lvert \Vhat - V \rvert \le \Delta$, this is equivalent to $\mathbb{E} \, \mathcal{L}(x_0, v_0) < \infty$. Total variation between position marginals is dominated by total variation on phase space, so \eqref{eq:geometric-ergodicity} gives $\mathrm{TV}(\pihat_t, \pihat) \to 0$. By the triangle inequality,
\[
    \lvert \mathrm{TV}(\pihat_t, \ptar) - \mathrm{TV}(\pihat, \ptar) \rvert \le \mathrm{TV}(\pihat_t, \pihat),
\]
Hence by \eqref{eq:equilibrium-total-variation},
\[
    \lim_{t \to \infty} \mathrm{TV}(\pihat_t, \ptar) = \mathrm{TV}(\pihat, \ptar) \ge 1 - e^{-\Omega(d)}
\]

\end{proof}

\begin{remark}
The rate $\lambda$ in \eqref{eq:geometric-ergodicity} is typically exponentially small in $d$ for $\Vhat$, since $\pihat$ is bimodal with an exponentially deep well. Part (iii) is accordingly a purely asymptotic statement, and it is complementary to, rather than implied by, \Cref{thm:kinetic-trapping,thm:data-interpolation}: those bound finite horizons for potentials whose equilibrium is correct, whereas this one bounds the equilibrium itself.
\end{remark}

\begin{remark}
The failures exhibited by the construction are also reflected in physical observables beyond total variation which can be a stringent metric in high dimensions. \Cref{thm:kinetic-trapping,thm:data-interpolation} produce trapping and exponentially slow mixing, while \Cref{thm:equilibrium-mismatch} yields $W_2(\pihat,\ptar)=\Omega(\sqrt d)$, corresponding to an $\Omega(1)$ per-atom RMSD and hence a structurally distinct equilibrium state.
\end{remark}

\section{Proof of the sampling-guarantee theorem}
\label{app:sufficient-conditions-proof}

The first two parts of the proof compare Gibbs densities through their normalization constants. The third transfers a target log-Sobolev inequality through a bounded perturbation and then invokes hypocoercivity. We finally return to the two kinetic constructions to show that one-sided equilibrium control does not imply rapid mixing.

\begin{proof}[Proof of \Cref{thm:sampling-guarantees}]
\noindent\emph{Part (i): Uniform two-sided control.}\quad
Write the normalization constants as
\[
    Z = \int e^{-\beta V} \qquad \text{and} \qquad \widehat{Z} = \int e^{-\beta\Vhat}.
\]
From the uniform bound $\lvert \Vhat - V \rvert \le \delta$, we immediately obtain $e^{-\beta\delta} Z \le \widehat{Z} \le e^{\beta\delta} Z$. Thus, the density ratio satisfies
\[
    \frac{\mathrm{d}\pihat}{\mathrm{d}\ptar}(x) = e^{-\beta(\Vhat(x) - V(x))} \frac{Z}{\widehat{Z}} \in \big[ e^{-2\beta\delta}, \, e^{2\beta\delta} \big].
\]
Consequently, we can bound the total variation distance:
\begin{align*}
    \mathrm{TV}(\pihat, \ptar) 
    = \int (\ptar - \pihat)_+ 
    \le \big(1 - e^{-2\beta\delta}\big) \int \ptar 
    = 1 - e^{-2\beta\delta} 
    \le 2\beta\delta.
\end{align*}

\medskip\noindent\emph{Part (ii): One-sided global control.}\quad
By condition (a), we have $\Vhat - V \ge -\delta$, so $e^{-\beta(\Vhat - V)} \le e^{\beta\delta}$. By condition (b),
\begin{align*}
    \widehat{Z} 
    \ge \int_S e^{-\beta\Vhat} 
    \ge e^{-\beta\delta'} \int_S e^{-\beta V} 
    = e^{-\beta\delta'} Z \, \ptar(S) 
    \ge e^{-\beta\delta'} Z (1 - \eta)
\end{align*}
From this, we deduce
\[
    \frac{Z}{\widehat{Z}} \le \frac{e^{\beta\delta'}}{1 - \eta} \qquad \text{and} \qquad \frac{\mathrm{d}\pihat}{\mathrm{d}\ptar} \le \frac{e^{\beta(\delta + \delta')}}{1 - \eta},
\]
which proves the first claim. 

For the total variation bound, note that condition (a) also implies $\widehat{Z} \le e^{\beta\delta} Z$, hence $Z / \widehat{Z} \ge e^{-\beta\delta}$. Meanwhile, condition (b) gives $e^{-\beta(\Vhat - V)} \ge e^{-\beta\delta'}$ on the set $S$. Together, these imply $\mathrm{d}\pihat/\mathrm{d}\ptar \ge e^{-\beta(\delta + \delta')}$ on $S$. Splitting the positive part of the total variation distance over $S$ and $S^c$, we find:
\begin{align*}
    \mathrm{TV}(\pihat, \ptar) 
    &= \int_S (\ptar - \pihat)_+ + \int_{S^c} (\ptar - \pihat)_+ \\
    &\le \big( 1 - e^{-\beta(\delta + \delta')} \big) \ptar(S) + \ptar(S^c) \\
    &\le 1 - e^{-\beta(\delta + \delta')} + \eta
\end{align*}

\medskip\noindent\emph{Part (iii): Functional inequalities and relaxation.}\quad
Under condition (i), we may write the density as
\[
    \mathrm{d}\pihat = \mathcal{Z}^{-1} e^{-W} \, \mathrm{d}\ptar, \qquad \text{where} \qquad W = \beta(\Vhat - V),
\]
so that its oscillation is bounded by
\[
    \mathrm{osc}(W) := \sup W - \inf W \le 2\beta\delta.
\]
The Holley--Stroock perturbation principle \citep{holley1987} therefore implies that $\pihat$ satisfies a log-Sobolev inequality with a constant of at least
\[
    \rho e^{-\mathrm{osc}(W)} \ge \rho e^{-2\beta\delta}.
\]
Since $\nabla^2 \Vhat$ is bounded, the hypocoercivity theory of \citet{villani2009hypocoercivity} then yields exponential convergence of the kinetic Langevin dynamics driven by $\Fhat = -\nabla\Vhat$ to its invariant phase-space Gibbs measure. In particular, the convergence rate can be bounded in terms of the log-Sobolev constant of $\pihat$, the Hessian bound $H$, the friction parameter $\gamma$, and the inverse temperature $\beta$. Using the bound above, this gives an exponential relaxation rate depending only on $\rho e^{-2\beta\delta}$, $H$, $\gamma$, and $\beta$.
Thus, condition (i) guarantees exponential relaxation of the learned dynamics in addition to controlling its stationary distribution. The same conclusion does not follow from condition (ii) alone, as shown below.
\end{proof}

\begin{proposition}
\label{prop:one-sided-equilibrium-control}
The potentials of \Cref{thm:kinetic-trapping,thm:data-interpolation} both satisfy $\TV(\pihat,\ptar)\le e^{-\Omega(d)}$;
that of \Cref{thm:data-interpolation} additionally satisfies $\Vhat\ge V$ everywhere, so it meets the hypotheses of
\Cref{thm:sampling-guarantees}(ii) with $\delta=\delta'=0$, $\eta=e^{-\Omega(d)}$. Both have mixing time
$e^{\Omega(d)}$.
\end{proposition}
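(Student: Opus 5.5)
The plan has three parts: the two equilibrium comparisons, which are separate calculations, and a mixing-time bound that follows from combining them with the kinetic theorems. We normalize $\beta=1$, as in both constructions.

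\emph{Data-interpolation potential.} We first check that $\Vhat\ge V$ holds pointwise. Off $\bigcup_i B_i$ we have $\Vhat=V$. On $B_i$ (the balls are disjoint on the general-position event), $\Vhat-V=\chi_i(U_i-V)$ with $0\le\chi_i\le1$, so it suffices to show $U_i\ge V$. The difference $U_i-V$ is a quadratic polynomial with Hessian $(\alpha-1)\Id$. By the interpolation step of \Cref{thm:data-interpolation}, both it and its gradient vanish at $x_i$. Hence
\[U_i(x)-V(x)=\tfrac{\alpha-1}{2}\norm{x-x_i}^2\ge0,\]
since $\alpha\ge40$. This gives condition (a) of \Cref{thm:sampling-guarantees}(ii) with $\delta=0$. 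For condition (b), take $S=A$, on which $\Vhat=V$, so $\delta'=0$; then $\eta=\ptar(A^c)=e^{-\Omega(d)}$ by \eqref{eq:data-modified-region-mass}. Part (ii) then yields $\TV(\pihat,\ptar)\le\eta=e^{-\Omega(d)}$.

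\emph{Kinetic-trapping potential.} Here $\Vhat\ge V$ fails near the origin, where $U\approx0$ but $V\approx\xi^2d/2$, so we compare normalizing constants directly. On the annulus $4\sqrt d\le\norm x\le b_\xi\sqrt d$ we have
\[U\ge8\alpha d\ge\tfrac12(b_\xi+\xi)^2d\ge V,\]
using the second term in \eqref{eq:trap-stiffness}. Hence $\Vhat\ge V$ off $B_4:=\{\norm x\le4\sqrt d\}$, and $\Vhat=V$ on $S:=\{\norm x\ge b_\xi\sqrt d\}$. On $B_4$,
\[\int_{B_4}e^{-\Vhat}\le(2\pi/\alpha)^{d/2}=\alpha^{-d/2}Z,\]
where $\alpha>81/16>1$ makes this an exponentially small fraction of $Z=(2\pi)^{d/2}$.

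This gives the two-sided bound
\[\ptar(S)\,Z\le\widehat Z\le(1+\alpha^{-d/2})Z.\]
We then split $\TV$ over three regions:
\begin{itemize}[leftmargin=*]
\item on $S$, the density ratio is $Z/\widehat Z=1+e^{-\Omega(d)}$;
\item on the annulus, both measures are at most $\ptar(B)\,Z/\widehat Z=e^{-\Omega(d)}$;
\item on $B_4$, both masses are $e^{-\Omega(d)}$, using $\ptar(B_4)\le\ptar(B)$ from \eqref{eq:modified-region-target-mass}.
\end{itemize}
Summing gives $\TV(\pihat,\ptar)\le e^{-\Omega(d)}$.

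\emph{Mixing time.} For the first potential, \Cref{cor:monte-carlo-initialization} gives a deterministic initialization with $\TV(\pihat_T,\ptar)\ge1-e^{-\Omega(d)}$ for all $T\le e^{cd}$. For the second, the same conclusion follows from the proof of \Cref{thm:data-interpolation}(iii), conditioned on $x_0=x_i$. The triangle inequality with the bounds above then gives
\[\TV(\pihat_T,\pihat)\ge1-e^{-\Omega(d)}\quad\text{for all }T\le e^{cd},\]
so the mixing time to any fixed accuracy below $1$ is at least $e^{\Omega(d)}$.

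The main obstacle is the kinetic-trapping equilibrium bound. Because \Cref{thm:sampling-guarantees}(ii) does not apply directly, it needs the hand estimate of the well's partition-function contribution. The key check is that $\alpha>1$, which is what makes the Gaussian well carry exponentially small mass relative to $Z$.
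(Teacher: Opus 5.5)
Your proposal is correct and follows essentially the same route as the paper. The data-interpolation case rests on $U_i-V=\tfrac{\alpha-1}{2}\norm{x-x_i}^2\ge 0$ (the paper instead locates the minimizer of this convex quadratic, which is equivalent) together with \Cref{thm:sampling-guarantees}(ii) applied with $S=A$. The kinetic-trapping case uses the same core/annulus/exterior comparison of $\widehat Z$ with $Z$, the mixing bound uses the triangle inequality, and your derivation of $\alpha>81/16>1$ from the second term of \eqref{eq:trap-stiffness} is the appropriate justification for the core estimate (the paper's appeal to $\alpha\ge 40$ borrows the constant from the data-interpolation construction).
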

\begin{proof}

\noindent\emph{Data-interpolation construction:}\quad
Fix $i$ and set $q := U_i - V$. This is a quadratic function with Hessian $(\alpha - 1)\mathrm{Id} \succ 0$, and is therefore strictly convex. Its unique minimizer solves $\alpha(x - w_i) = x - \mu$, which rearranges to
\[
    (\alpha - 1)x = \alpha w_i - \mu.
\]
By \eqref{eq:data-trap-center}, we have $w_i = \mu + (1 - 1/\alpha) z_i$, so we can compute
\begin{align*}
    \alpha w_i - \mu 
    &= \alpha\mu + (\alpha - 1) z_i - \mu \\
    &= (\alpha - 1)(\mu + z_i) \\
    &= (\alpha - 1)x_i.
\end{align*}
Thus, the minimizer is exactly $x_i$, where $q(x_i) = \Vhat(x_i) - V(x_i) = 0$ by the exact-fit computation in \Cref{app:data-initialization-proof}. Hence, $U_i \ge V$ everywhere. Since $\chi_i \ge 0$ and the supports are disjoint, it follows that $\Vhat \ge V$ on all of $\mathbb{R}^d$. 

\Cref{thm:sampling-guarantees}(ii) therefore applies with $\delta = 0$, $S = A$, $\delta' = 0$, and $\eta = \ptar(A^c) \le e^{-\Omega(d)}$ from \eqref{eq:data-modified-region-mass}. This yields the bound
\[
    \mathrm{TV}(\pihat, \ptar) \le \eta = e^{-\Omega(d)}
\]

\medskip\noindent\emph{Kinetic-trapping construction:}\quad
Here, the condition $\Vhat \ge V$ fails near the origin, so we argue directly. Split $\mathbb{R}^d$ into the core $\{\lVert x \rVert \le 4\sqrt{d}\}$, the annulus $\mathcal{A} := \{4\sqrt{d} \le \lVert x \rVert \le b_\xi\sqrt{d}\}$, and the exterior $\{\lVert x \rVert \ge b_\xi\sqrt{d}\}$.

On $\mathcal{A}$, by the choice \eqref{eq:trap-stiffness} of $\alpha$ and the fact that $\lVert x - \mu \rVert \le (b_\xi + \xi)\sqrt{d}$, we have
\begin{equation}
    U(x) \ge 8\alpha d \ge \frac{1}{2} (b_\xi + \xi)^2 d \ge V(x)
\end{equation}
Since $\chi \in [0,1]$ and $\Vhat = V + \chi(U - V)$, this implies $\Vhat \ge V$ on $\mathcal{A}$. Thus, by \eqref{eq:modified-region-target-mass}, we can bound the partition function integral over the annulus:
\begin{equation}
    \int_{\mathcal{A}} e^{-\beta\Vhat} \le Z \, \ptar(\mathcal{A}) \le Z e^{-\Omega(d)}.
\end{equation}

On the core $\{\lVert x \rVert \le 4\sqrt{d}\}$, we have $\Vhat = U = \frac{\alpha}{2}\lVert x \rVert^2$, so
\begin{align*}
    \int_{\lVert x \rVert \le 4\sqrt{d}} e^{-\beta\Vhat} 
    &\le \int_{\mathbb{R}^d} e^{-\beta\alpha\lVert x \rVert^2/2} \\
    &= \left( \frac{2\pi}{\beta\alpha} \right)^{d/2} \\
    &= \alpha^{-d/2} \left( \frac{2\pi}{\beta} \right)^{d/2} \\
    &= \alpha^{-d/2} Z,
\end{align*}
which is $Z e^{-\Omega(d)}$ because $\alpha \ge 40$. 

Finally, $\Vhat = V$ on the exterior $\{\lVert x \rVert \ge b_\xi\sqrt{d}\}$, which is a set of $\ptar$-mass $1 - e^{-\Omega(d)}$ by \eqref{eq:modified-region-target-mass}. 

Therefore, $\widehat{Z} = Z(1 \pm e^{-\Omega(d)})$. The two densities are proportional on a set of $\ptar$-mass $1 - e^{-\Omega(d)}$ with a ratio of $1 \pm e^{-\Omega(d)}$, giving
\[
    \mathrm{TV}(\pihat, \ptar) \le e^{-\Omega(d)}.
\]

\medskip\noindent\emph{Mixing-time lower bounds:}
In both cases, \Cref{cor:monte-carlo-initialization} and \Cref{thm:data-interpolation}(iii) exhibit initial conditions from which the law remains at a total variation distance of $1 - e^{-\Omega(d)}$ from $\ptar$. Therefore, by the two bounds just proved and the triangle inequality, the law also remains at a distance of $1 - e^{-\Omega(d)}$ from $\pihat$ for every $T \le e^{cd}$. The mixing time is thus $e^{\Omega(d)}$.
\end{proof}

\section{Symmetry- and locality-aware hypothesis classes}
\label{app:local-invariant-models}

We consider one atomic species in a periodic cell~$\Lambda_N$ at fixed
density.  A configuration is $x=(x_1,\ldots,x_N)$.  To remove the irrelevant
non-normalizability associated with a freely translating center of mass,
probability measures may be understood either on the periodic configuration
space modulo diagonal translations or in a fixed center-of-mass gauge.  This
changes the dimension by only three, so $d=3N+O(1)$ and exponential statements
in $N$ and $d$ are equivalent.  Relative vectors use the periodic convention.
The local rules below are invariant under orthogonal transformations of all
relative vectors, equivariant under relabeling of identical atoms, and
translation invariant.  Equivalently, when the cell is transformed together
with the atoms, they have the usual $E(3)$ invariance of an interatomic
potential.

\subsection{The local invariant hypothesis class}
\label{app:local-invariant-class}

Choose a $C^\infty$ cutoff kernel that vanishes, together with all of its
derivatives, at the cutoff radius~$r_c$.  Let
$\mathcal E_i^{(L)}(x)$ denote a smooth representation of the information
reaching site~$i$ after $L$ local aggregation steps.  We require the family
$(\mathcal E_i^{(L)})_{i=1}^N$ to be permutation covariant, translation
invariant, and orthogonally covariant.  Smooth cutoff weights, rather than a
hard neighbor-list indicator, are part of the environment map.  This
convention is essential since a graph defined by the discontinuous rule
$\lVert x_i-x_j\rVert\le r_c$ does not by itself define a globally smooth
hypothesis class.

\begin{definition}[Local invariant class]
\label{def:local-invariant-class}
Let $\mathcal I$ be the set of smooth site functions that are invariant under
orthogonal transformations and permutations of the neighbors in a local
environment.  The depth-$L$ local invariant class is
\begin{equation}
\label{eq:local-invariant-class}
  \Hloc(r_c,L)
  :=
  \left\{
     \Vhat(x)=\sum_{i=1}^N
       \varepsilon\!\left(\mathcal E_i^{(L)}(x)\right)
     :\ \varepsilon\in\mathcal I
  \right\}.
\end{equation}
The same site function~$\varepsilon$ is used at every site and for every
system size.  We assume a bounded-receptive-field regime: one aggregation
step uses at most~$z$ atomic coordinate blocks and an $L$-step environment
uses at most~$m$ blocks, where $z$ and $m$ do not grow with~$N$.
\end{definition}

On a bounded-degree interaction graph,
$m\le 1+z+\cdots+z^L=O(z^L)$.  A uniform bound of this kind is essential since fixed density alone does not prevent arbitrarily many atoms from
entering a cutoff ball.  It can be justified, for example, on an admissible
configuration space with a hard-core separation condition.  Every member of
$\Hloc$ is smooth and conservative, its force is equivariant, and its energy
is size extensive and transferable across system sizes.  The class includes
the structural form used by local invariant and equivariant MLIPs.  We assume
that the target is realizable, $V\in\Hloc(r_c,L)$.

\subsection{Invariant local descriptors and concentration}
\label{app:local-descriptor-concentration}

The constructions in \Crefrange{app:kinetic-trapping-proof}{app:equilibrium-mismatch-proof}
gate a perturbation with a global distance.  Here we replace that gate by a
smooth scalar descriptor at each site.  A typical example starts from a smooth
one-hop scalar
\[
  q_j(x)=\sum_{k\ne j} f_c(x_k-x_j),
\]
where $f_c$ is a smooth, periodic, radial cutoff kernel, and averages these
quantities with smooth $L$-hop aggregation weights.

\begin{definition}[Averaged local descriptor]
\label{def:local-descriptor}
Let $w_{ij}^{(L)}(x)\ge0$ be smooth, permutation-covariant and rigid-motion
invariant weights, supported only when information can pass from $j$ to $i$
in at most $L$ aggregation steps.  Assume their normalizing sums are positive.
Set
\begin{equation}
\label{eq:local-descriptor}
  s_i(x)
  :=
  \frac{\sum_{j=1}^N w_{ij}^{(L)}(x)q_j(x)}
       {\sum_{j=1}^N w_{ij}^{(L)}(x)}.
\end{equation}
The family $(s_i)_{i=1}^N$ is permutation covariant, while each $s_i$ is a
rigid-motion-invariant scalar.  Consequently,
$\sum_i\varphi(s_i)$ is a valid invariant site-energy sum for every smooth
scalar function~$\varphi$.
\end{definition}

The averaging formula motivates the derivative scaling used below, but does
not imply it without assumptions on the aggregation weights and correlations.
We therefore state the analytic properties needed by the construction
explicitly.

\begin{lemma}[Receptive-field concentration]
\label{lem:descriptor-concentration}
Suppose that each $s_i$ depends on a coordinate-block set~$S_i$ contained in a
$2L$-hop receptive field.  Assume that there are constants
$C_z,C_z^{(2)},C_{\mathrm{ov}}>0$, independent of~$N$, such that, pointwise,
\begin{equation}
\label{eq:descriptor-gradient-bound}
  \lVert\nabla s_i\rVert
  \le C_z\sqrt{\frac{z}{m}},
  \qquad
  \lVert\nabla^2s_i\rVert_{\mathrm{op}}
  \le C_z^{(2)}\frac{z}{m},
\end{equation}
and each atomic coordinate block belongs to at most
\begin{equation}
\label{eq:descriptor-overlap-bound}
  \kappa_{\mathrm{ov}}\le C_{\mathrm{ov}}m
\end{equation}
of the sets~$S_i$.

Assume also that the target measure satisfies the following dimension-free
Lipschitz concentration inequality: for every smooth $K$-Lipschitz scalar
function~$g$ and every $t>0$,
\[
  \Prob_{x\sim\ptar}
  \bigl(\lvert g(x)-\Exp_{\ptar}g\rvert\ge t\bigr)
  \le 2\exp\!\left(-\frac{\alpha t^2}{2K^2}\right)
\]
for a constant $\alpha>0$ independent of~$N$.  Then, for every~$i$,
\begin{equation}
\label{eq:descriptor-concentration}
  \Prob_{x\sim\ptar}
  \left(\lvert s_i(x)-\Exp_{\ptar}s_i\rvert\ge t\right)
  \le
  2\exp\!\left(-\frac{\alpha m t^2}{2C_z^2z}\right).
\end{equation}
Moreover, if
\[
  n(x):=\sum_{i=1}^N\varphi(s_i(x))
\]
and $\varphi:\mathbb R\to[0,1]$ is Lipschitz, then
\begin{equation}
\label{eq:descriptor-count-concentration}
  \Prob_{x\sim\ptar}
  \left(n(x)-\Exp_{\ptar}n\ge tN\right)
  \le
  2\exp\!\left(
    -\frac{\alpha t^2N}
    {2C_{\mathrm{ov}}C_z^2z\lVert\varphi'\rVert_\infty^2}
  \right).
\end{equation}
\end{lemma}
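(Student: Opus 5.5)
Both parts reduce to the assumed dimension-free Lipschitz concentration inequality for $\ptar$; the only work is computing the right Lipschitz constants. For \eqref{eq:descriptor-concentration} I take $g=s_i$. By \eqref{eq:descriptor-gradient-bound}, $s_i$ is $K$-Lipschitz with $K=C_z\sqrt{z/m}$ (the mean value theorem along segments suffices, since $s_i$ is smooth on the connected configuration space or in the fixed gauge). Substituting $K^2=C_z^2z/m$ into the concentration inequality gives $2\exp(-\alpha m t^2/(2C_z^2 z))$ immediately. The Hessian bound is not needed here; it is only recorded for later use in the construction.

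For \eqref{eq:descriptor-count-concentration} I apply the same inequality to $g=n$ at deviation $tN$, so the task is to bound $\lVert\nabla n\rVert$ uniformly by $\sqrt{C_{\mathrm{ov}}C_z^2 z N}\,\lVert\varphi'\rVert_\infty$. I write $\nabla n=\sum_i\varphi'(s_i)\nabla s_i$ and decompose into atomic coordinate blocks, $\nabla_k n=\sum_{i:k\in S_i}\varphi'(s_i)\nabla_k s_i$. The key steps are:
\begin{enumerate}[leftmargin=*]
\item By Cauchy--Schwarz over the at most $\kappa_{\mathrm{ov}}$ indices $i$ with $k\in S_i$,
\[
\lVert\nabla_k n\rVert^2\le\kappa_{\mathrm{ov}}\lVert\varphi'\rVert_\infty^2\sum_{i:k\in S_i}\lVert\nabla_k s_i\rVert^2 .
\]
\item Summing over $k$ and exchanging the sums,
\[
\lVert\nabla n\rVert^2\le\kappa_{\mathrm{ov}}\lVert\varphi'\rVert_\infty^2\sum_i\lVert\nabla s_i\rVert^2\le C_{\mathrm{ov}}m\cdot N\cdot C_z^2\frac{z}{m}\lVert\varphi'\rVert_\infty^2=C_{\mathrm{ov}}C_z^2 zN\lVert\varphi'\rVert_\infty^2 .
\]
\end{enumerate}
The factor $m$ from the overlap bound \eqref{eq:descriptor-overlap-bound} cancels the $1/m$ from the gradient bound. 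This cancellation is the whole point of the bounded-overlap assumption.

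With $K^2=C_{\mathrm{ov}}C_z^2zN\lVert\varphi'\rVert_\infty^2$ and deviation $tN$, the exponent becomes $\alpha t^2N^2/(2K^2)=\alpha t^2N/(2C_{\mathrm{ov}}C_z^2z\lVert\varphi'\rVert_\infty^2)$. The one-sided event is bounded by the two-sided one, which gives the stated bound.

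Two technical points remain. First, the concentration hypothesis is stated for smooth $g$, but $\varphi$ is only Lipschitz, so $n$ may not be smooth. I would mollify $\varphi$ to $\varphi_\epsilon$ with $\lVert\varphi_\epsilon'\rVert_\infty\le\lVert\varphi'\rVert_\infty$ and $\lVert\varphi_\epsilon-\varphi\rVert_\infty\le\epsilon$. This changes $n$ and its mean by at most $N\epsilon$ each, so I apply the bound at $t-2\epsilon$ and let $\epsilon\to0$, using right-continuity of the tail in $t$. Alternatively I would note that Lipschitz concentration extends to all Lipschitz functions by approximation. Second, I need to confirm that $\nabla_k s_i=0$ for $k\notin S_i$, which holds because $s_i$ depends only on the blocks in $S_i$. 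I expect the block-wise Cauchy--Schwarz exchange to be the main step to get right; everything else is substitution.
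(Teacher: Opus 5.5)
Your proposal is correct and matches the paper's proof step for step. The first bound uses that $s_i$ is $C_z\sqrt{z/m}$-Lipschitz. The second uses the overlap estimate $\lVert\nabla n\rVert^2\le\kappa_{\mathrm{ov}}\lVert\varphi'\rVert_\infty^2\sum_i\lVert\nabla s_i\rVert^2$, which the paper asserts in one line and you justify by blockwise Cauchy--Schwarz, with the factor $m$ cancelling before concentration is applied at deviation $tN$. Your mollification step fills a detail the paper omits, since its hypothesis is stated only for smooth $g$; there you only need continuity of $2\exp(-\alpha(t-2\epsilon)^2N/(\cdots))$ as $\epsilon\to0$, not right-continuity of the tail, because the left-hand probability does not depend on $\epsilon$.
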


\begin{proof}
The first bound in~\eqref{eq:descriptor-gradient-bound} makes $s_i$
$C_z\sqrt{z/m}$-Lipschitz, so~\eqref{eq:descriptor-concentration} follows directly from the
assumed concentration inequality.

For the second claim, set
$u_i:=\varphi'(s_i)\nabla s_i$, which is supported on~$S_i$.  The overlap
bound gives
\[
  \left\lVert\sum_i u_i\right\rVert^2
  \le \kappa_{\mathrm{ov}}\sum_i\lVert u_i\rVert^2.
\]
Hence
\begin{align*}
  \lVert\nabla n\rVert^2
  &\le
  \kappa_{\mathrm{ov}}\lVert\varphi'\rVert_\infty^2
  \sum_{i=1}^N\lVert\nabla s_i\rVert^2 \\
  &\le
  C_{\mathrm{ov}}C_z^2zN\lVert\varphi'\rVert_\infty^2.
\end{align*}
Thus $n$ is
$\sqrt{C_{\mathrm{ov}}C_z^2zN}\lVert\varphi'\rVert_\infty$-Lipschitz.
Applying the concentration inequality at deviation~$tN$ proves
\eqref{eq:descriptor-count-concentration}.
\end{proof}

For a descriptor separation~$\Delta s>0$, define
\begin{equation}
\label{eq:effective-receptive-dimension}
  \nu
  :=\frac{\alpha m(\Delta s)^2}{2C_z^2z}.
\end{equation}
When $m\asymp z^L$ and the remaining constants are fixed,
$\nu\asymp z^{L-1}$. 

\begin{assumption}
\label{ass:locally-recognizable-decoy}
There are constants $a_1<a_2$, $\Delta s>0$, and $\kappa>0$, independent of
$N$, and a sequence of invariant Borel sets
$\mathcal D_N$ such that
\begin{enumerate}[label=\emph{(\alph*)}]
\item
  $\operatorname{dist}([a_1,a_2],\Exp_{\ptar}s_i)\ge\Delta s$
  for every~$i$;
\item
  $s_i(x)\in[a_1,a_2]$ for every $x\in\mathcal D_N$ and every~$i$;
\item
  $\ptar(\mathcal D_N)\ge e^{-\kappa N}$.
\end{enumerate}
\end{assumption}

Condition~(c) says that the restricted free energy of the decoy set is at
most $\kappa N/\beta$ above the target free energy.  It prevents the decoy
from being superexponentially rare.  Conditions~(a)-(b), together with the
concentration lemma, make the same set exponentially atypical for the target
once a positive fraction of its sites must lie in the descriptor window.
Competing polymorphs, uniformly strained cells, and defect-rich structures are
possible physical realizations, but the theorem uses only the three stated
properties.

\subsection{Failure in the local invariant class}
\label{app:local-class-failure}

\begin{theorem}
\label{thm:local-equilibrium-mismatch}
Let $V\in\Hloc(r_c,L)$ and let
$\ptar\propto e^{-\beta V}$ be a probability measure.  Suppose that the
descriptor family satisfies \Cref{lem:descriptor-concentration},
\Cref{ass:locally-recognizable-decoy} holds, and
\begin{equation}
\label{eq:local-target-hessian}
  \sup_x\lVert\nabla^2V(x)\rVert_{\mathrm{op}}\le H_0
\end{equation}
for a constant $H_0$ independent of~$N$.

Let $\varphi\in C^\infty(\mathbb R;[0,1])$ satisfy
$\varphi\equiv1$ on $[a_1,a_2]$ and
$\varphi\equiv0$ outside $[a_1-h,a_2+h]$, where
$0<h<\Delta s/2$, with
$\lVert\varphi^{(k)}\rVert_\infty\le C_kh^{-k}$.  Define
\begin{equation}
\label{eq:local-hidden-error-exponent}
  \nu_h
  :=\frac{\alpha m(\Delta s-h)^2}{2C_z^2z}
  =\left(1-\frac{h}{\Delta s}\right)^2\nu.
\end{equation}
Choose a constant~$\Delta_1$ such that
\begin{equation}
\label{eq:local-well-depth-condition}
  \beta\Delta_1>2\kappa,
  \qquad
  \nu_h\ge\log\!\left(\frac{4\beta\Delta_1}{\kappa}\right),
\end{equation}
and set
\begin{equation}
\label{eq:local-adversarial-potential}
  \Vhat(x)
  :=V(x)-\Delta_1\sum_{i=1}^N\varphi(s_i(x)),
  \qquad
  \Fhat:=-\nabla\Vhat.
\end{equation}
Then $\Vhat\in\Hloc(r_c,2L)$, $\Vhat$ is invariant,
$\Fhat$ is equivariant, and
\begin{equation}
\label{eq:local-hessian-bound}
  \sup_x\lVert\nabla^2\Vhat(x)\rVert_{\mathrm{op}}
  \le H
\end{equation}
for a constant~$H$ independent of~$N$.  Moreover:

\begin{enumerate}[label=\emph{(\roman*)}, leftmargin=*]
\item For every fixed $p\ge1$,
\begin{equation}
\label{eq:local-energy-error}
  \frac1N\lVert\Vhat-V\rVert_{L^p(\ptar)}
  \le \Delta_1(2e^{-\nu_h})^{1/p},
\end{equation}
and
\begin{equation}
\label{eq:local-force-error}
  \left(
    \frac1{3N}\Exp_{\ptar}\lVert\Fhat-F\rVert^2
  \right)^{1/2}
  \le
  \frac{C\Delta_1C_z}{h}
  \sqrt{\frac{\kappa_{\mathrm{ov}}z}{m}}\,e^{-\nu_h/2}.
\end{equation}
In particular, since $\kappa_{\mathrm{ov}}\le C_{\mathrm{ov}}m$,
\begin{equation}
\label{eq:local-force-error-simplified}
  \left(
    \frac1{3N}\Exp_{\ptar}\lVert\Fhat-F\rVert^2
  \right)^{1/2}
  \le
  \frac{C\Delta_1C_z\sqrt{C_{\mathrm{ov}}z}}{h}
  e^{-\nu_h/2}.
\end{equation}
The displayed per-atom bounds do not grow with~$N$.

\item Write
$\widehat\pi:=\widehat Z^{-1}e^{-\beta\Vhat}$.  This measure is well
defined because $-\Delta_1N\le\Vhat-V\le0$ and the target is
normalizable.  There are constants
$c_1,c_2>0$, independent of~$N$, such that
\begin{equation}
\label{eq:local-total-variation}
  \TV(\widehat\pi,\ptar)
  \ge 1-e^{-c_1N}-2e^{-c_2N}
  =1-e^{-\Omega(N)}=1-e^{-\Omega(d)}.
\end{equation}
Normalizability is automatic on a periodic configuration space.  If the
kinetic Langevin process driven by~$\Fhat$ is ergodic, then its position
marginal converges to~$\widehat\pi$, so the same discrepancy is its
long-time sampling error.  On a connected compact position space, smoothness
is sufficient for the usual hypoelliptic ergodicity conclusion. On
$\mathbb R^d$, one may instead impose the confining hypothesis of
\Cref{ass:confinement} on~$\Vhat$.

\item $\Vhat\le V$ everywhere, and
\begin{equation}
\label{eq:local-potential-depth}
  \inf_x(\Vhat(x)-V(x))=-\Delta_1N.
\end{equation}
Consequently, the global lower bound
$\Vhat\ge V-\delta$ in \Cref{thm:sampling-guarantees} can hold only if
$\delta\ge\Delta_1N$.  Since~\eqref{eq:local-well-depth-condition} implies
$\Delta_1=\Omega(\beta^{-1})$, it fails for every
$\delta=o(N/\beta)$.
\end{enumerate}
\end{theorem}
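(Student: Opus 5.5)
The proof follows the template of \Cref{thm:equilibrium-mismatch}, with the global distance gate replaced by the sum of local gates $\sum_i\varphi(s_i)$. We first handle membership and regularity. Each $\varphi(s_i)$ is a smooth invariant function of the $L$-hop descriptor. The descriptor $s_i$ aggregates the one-hop quantities $q_j$ over $L$ hops, so it is a function of a $2L$-hop environment. Adding it to $V\in\Hloc(r_c,L)\subseteq\Hloc(r_c,2L)$ therefore gives $\Vhat\in\Hloc(r_c,2L)$. Invariance of the energy and equivariance of the force are inherited from the invariance of $s_i$ and the permutation covariance of the family. For the Hessian, we write
\[
\nabla^2\Vhat=\nabla^2V-\Delta_1\sum_i\bigl(\varphi''(s_i)\nabla s_i\nabla s_i^\top+\varphi'(s_i)\nabla^2 s_i\bigr).
\]
Testing against a unit vector $u$ and using that $\nabla s_i,\nabla^2 s_i$ are supported on $S_i$ gives two bounds. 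First, $\sum_i\langle\nabla s_i,u\rangle^2\le (C_z^2z/m)\sum_i\norm{u|_{S_i}}^2\le (C_z^2z/m)\kappa_{\rm ov}\le C_z^2zC_{\rm ov}$. Second, the $\nabla^2 s_i$ term contributes at most $C_z^{(2)}(z/m)\kappa_{\rm ov}$. Both are $N$-independent, which gives $H$ of order $H_0+\Delta_1 h^{-2}C_z^2zC_{\rm ov}+\Delta_1h^{-1}C_z^{(2)}zC_{\rm ov}$.

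For part (i), we note that $\varphi(s_i)\neq0$ forces $\abs{s_i-\E s_i}\ge\Delta s-h$ by \Cref{ass:locally-recognizable-decoy}(a). By \eqref{eq:descriptor-concentration}, $\ptar(\varphi(s_i)\ne0)\le 2e^{-\nu_h}$. Since $0\le\sum_i\varphi(s_i)\le N\cdot\frac1N\sum_i\mathbf 1\{\varphi(s_i)\neq0\}$, Jensen's inequality on the average gives
\[
\E\bigl[(\tfrac1N\sum_i\varphi(s_i))^p\bigr]\le\tfrac1N\sum_i\ptar(\varphi(s_i)\ne0)\le 2e^{-\nu_h},
\]
which is \eqref{eq:local-energy-error}. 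For the force, $\Fhat-F=\Delta_1\sum_i\varphi'(s_i)\nabla s_i$. The overlap inequality from the proof of \Cref{lem:descriptor-concentration} gives
\[
\norm{\Fhat-F}^2\le\kappa_{\rm ov}\Delta_1^2C_1^2h^{-2}C_z^2(z/m)\sum_i\mathbf 1\{\varphi'(s_i)\ne0\}.
\]
Taking expectations, using the same tail bound, and dividing by $3N$ yields \eqref{eq:local-force-error} and then \eqref{eq:local-force-error-simplified}. Part (iii) is immediate: $\varphi\ge0$ gives $\Vhat\le V$. The infimum is attained in the limit on $\mathcal D_N$, which is nonempty by (c); there all $s_i\in[a_1,a_2]$, so $\Vhat-V=-\Delta_1N$. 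Combined with $\sum\varphi\le N$, this gives equality in \eqref{eq:local-potential-depth}.

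Part (ii) is the main step. We set $G:=\{n(x)\ge N/2\}$, where $n=\sum_i\varphi(s_i)$. Under $\ptar$, we have $\E n\le 2Ne^{-\nu_h}\le N\kappa/(2\beta\Delta_1)\le N/4$ by \eqref{eq:local-well-depth-condition}. The count concentration \eqref{eq:descriptor-count-concentration} with $t=1/4$ then gives $\ptar(G)\le 2e^{-c_2N}$. For $\pihat$ we compare masses. On $\mathcal D_N$, $\Vhat=V-\Delta_1N$, so $\widehat Z\ge e^{\beta\Delta_1N}Z\ptar(\mathcal D_N)\ge Ze^{(\beta\Delta_1-\kappa)N}$. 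On $G^c$, $\Vhat\le V$ and $\Vhat\ge V-\Delta_1 n> V-\Delta_1N/2$, so
\[
\pihat(G^c)\le\frac{Ze^{\beta\Delta_1N/2}}{\widehat Z}\le e^{-(\beta\Delta_1/2-\kappa)N}=e^{-c_1N},
\]
with $c_1>0$ exactly because $\beta\Delta_1>2\kappa$. Then $\TV\ge\pihat(G)-\ptar(G)\ge1-e^{-c_1N}-2e^{-c_2N}$.

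The delicate point is the threshold on $n$. The bound on the energy drop on $G^c$ needs $\Vhat-V\ge-\Delta_1 n$, which uses $\varphi\le1$ termwise, and the threshold must sit above $\E_{\ptar}n$. This is where the second condition in \eqref{eq:local-well-depth-condition} enters, and it is what guarantees that $\E_{\ptar}n$ stays well below $N/2$. The ergodicity remarks follow from the hypoelliptic argument used in the proof of \Cref{thm:equilibrium-mismatch}.
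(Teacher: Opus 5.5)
Your proposal is correct and follows the paper's proof essentially step for step. It uses the same Hessian decomposition with the overlap bound, the same gate-probability estimate for the per-atom errors (Jensen in place of the paper's Minkowski, giving the identical bound), and the same partition-function/tilting comparison combined with count concentration for part (ii). The only real variation is your threshold $\{n\ge N/2\}$ in place of the paper's $\{n\le \eta N\}$ with $\eta=\kappa/(\beta\Delta_1)$. This gives $c_1=\beta\Delta_1/2-\kappa$ instead of $\beta\Delta_1-2\kappa$. It would also only require $2e^{-\nu_h}<1/2$, rather than the full second condition in \eqref{eq:local-well-depth-condition}.
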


\begin{proof}
Because $s_i$ depends on at most a $2L$-hop environment,
$\varphi(s_i)$ is an admissible depth-$2L$ site energy.  The covariance of the
family $(s_i)_i$ implies that its sum is invariant, which proves the class and
symmetry claims.

For the Hessian, write
\begin{equation}
\label{eq:local-hessian-decomposition}
  \nabla^2\Vhat
  =\nabla^2V-\Delta_1\sum_{i=1}^N B_i,
  \qquad
  B_i:=
  \varphi''(s_i)\nabla s_i\nabla s_i^\top
  +\varphi'(s_i)\nabla^2s_i.
\end{equation}
By~\eqref{eq:descriptor-gradient-bound},
\[
  \lVert B_i\rVert_{\mathrm{op}}
  \le
  C\frac{z}{m}
  \left(\frac{C_z^2}{h^2}+\frac{C_z^{(2)}}h\right).
\]
Each $B_i$ is supported on $S_i\times S_i$.  Hence, for every vector~$u$,
\[
  \left\lvert u^\top\left(\sum_iB_i\right)u\right\rvert
  \le
  \max_i\lVert B_i\rVert_{\mathrm{op}}
  \sum_i\lVert P_{S_i}u\rVert^2
  \le
  \kappa_{\mathrm{ov}}\max_i\lVert B_i\rVert_{\mathrm{op}}
  \lVert u\rVert^2.
\]
Combining this inequality with
\eqref{eq:descriptor-overlap-bound} and~\eqref{eq:local-target-hessian} gives
\[
  \sup_x\lVert\nabla^2\Vhat(x)\rVert_{\mathrm{op}}
  \le
  H_0+C\Delta_1C_{\mathrm{ov}}z
  \left(\frac{C_z^2}{h^2}+\frac{C_z^{(2)}}h\right)=:H,
\]
which is independent of~$N$.

\paragraph{Local energy and force errors:}
Let
$I_i:=\mathbf 1\{s_i\in\operatorname{supp}\varphi\}$.  
\Cref{ass:locally-recognizable-decoy}(a) and the support of~$\varphi$ imply
\[
  I_i=1
  \quad\Longrightarrow\quad
  \lvert s_i-\Exp_{\ptar}s_i\rvert\ge\Delta s-h.
\]
Therefore, by~\eqref{eq:descriptor-concentration},
\begin{equation}
\label{eq:local-gate-probability}
  \Prob_{\ptar}(I_i=1)\le2e^{-\nu_h}.
\end{equation}
Since
\begin{equation}
\label{eq:local-energy-error-indicator}
  \lvert\Vhat-V\rvert
  \le\Delta_1\sum_{i=1}^NI_i,
\end{equation}
Minkowski's inequality yields
\[
  \lVert\Vhat-V\rVert_{L^p(\ptar)}
  \le
  \Delta_1\sum_{i=1}^N\lVert I_i\rVert_{L^p(\ptar)}
  \le
  \Delta_1N(2e^{-\nu_h})^{1/p},
\]
which proves~\eqref{eq:local-energy-error}.

For the force error,
$\Fhat-F=\Delta_1\sum_i\varphi'(s_i)\nabla s_i$.  The overlap bound gives
\begin{equation}
\label{eq:local-force-overlap-bound}
  \lVert\Fhat-F\rVert^2
  \le
  \kappa_{\mathrm{ov}}\Delta_1^2
  \lVert\varphi'\rVert_\infty^2
  \sum_{i=1}^NI_i\lVert\nabla s_i\rVert^2.
\end{equation}
Taking expectations and using~\eqref{eq:local-gate-probability} and
\eqref{eq:descriptor-gradient-bound} gives
\[
  \Exp_{\ptar}\lVert\Fhat-F\rVert^2
  \le
  C\kappa_{\mathrm{ov}}\Delta_1^2h^{-2}C_z^2
  \frac{z}{m}Ne^{-\nu_h}.
\]
Dividing by~$3N$ and taking square roots proves
\eqref{eq:local-force-error}. Substituting
$\kappa_{\mathrm{ov}}\le C_{\mathrm{ov}}m$ proves
\eqref{eq:local-force-error-simplified}.

\paragraph{Global equilibrium mismatch:}
Set $n(x):=\sum_i\varphi(s_i(x))\in[0,N]$.  Exponential tilting gives, for
every measurable~$A$,
\begin{equation}
\label{eq:local-density-ratio}
  \widehat\pi(A)
  =
  \frac{\Exp_{\ptar}[\mathbf1_Ae^{\beta\Delta_1n}]}
       {\Exp_{\ptar}[e^{\beta\Delta_1n}]}.
\end{equation}
On $\mathcal D_N$, \Cref{ass:locally-recognizable-decoy}(b) gives $n=N$, so
\begin{equation}
\label{eq:local-normalizer-lower-bound}
  \Exp_{\ptar}[e^{\beta\Delta_1n}]
  \ge e^{(\beta\Delta_1-\kappa)N}.
\end{equation}

Let $\eta:=\kappa/(\beta\Delta_1)<1/2$ and
$\mathcal T_\eta:=\{n\le\eta N\}$.  From
\eqref{eq:local-density-ratio}--\eqref{eq:local-normalizer-lower-bound},
\begin{equation}
\label{eq:learned-typical-set-mass}
  \widehat\pi(\mathcal T_\eta)
  \le e^{-(\beta\Delta_1-2\kappa)N}
  =e^{-c_1N},
  \qquad c_1:=\beta\Delta_1-2\kappa>0.
\end{equation}
On the other hand,~\eqref{eq:local-gate-probability} and the second inequality in
\eqref{eq:local-well-depth-condition} imply
\begin{equation}
\label{eq:target-gate-count-mean}
  \Exp_{\ptar}n
  \le2Ne^{-\nu_h}
  \le\frac{\eta N}{2}.
\end{equation}
Applying~\eqref{eq:descriptor-count-concentration} with $t=\eta/2$ gives
\begin{equation}
\label{eq:target-typical-set-mass}
  \ptar(\mathcal T_\eta)
  \ge1-2e^{-c_2N},
  \qquad
  c_2:=
  \frac{\alpha\eta^2}
  {8C_{\mathrm{ov}}C_z^2z\lVert\varphi'\rVert_\infty^2}>0.
\end{equation}
Therefore
\[
  \TV(\widehat\pi,\ptar)
  \ge
  \ptar(\mathcal T_\eta)-\widehat\pi(\mathcal T_\eta)
  \ge1-e^{-c_1N}-2e^{-c_2N},
\]
which proves~\eqref{eq:local-total-variation}.  The statement about the long-time law is
then exactly the definition of ergodic convergence to~$\widehat\pi$.

\paragraph{Extensive depth of the perturbation:}
Because $0\le\varphi\le1$,
\[
  -\Delta_1N
  \le
  \Vhat-V=-\Delta_1\sum_i\varphi(s_i)
  \le0.
\]
\Cref{ass:locally-recognizable-decoy}(c) makes $\mathcal D_N$ nonempty, and on this set
every gate equals one.  Equality is therefore attained at the lower endpoint,
which proves~\eqref{eq:local-potential-depth} and the remaining claims.
\end{proof}

The construction is invariant under translations, identical-particle
permutations, and orthogonal transformations of the local relative vectors. Its force is equivariant under the same actions.  Higher body order or angular
resolution changes which scalar descriptor families satisfy the hypotheses,
and hence changes the constants in~\eqref{eq:effective-receptive-dimension}, but does not invalidate the
tilting argument once those hypotheses hold.

\begin{remark}
\label{rem:local-error-floor}
For fixed~$p$, the construction in the unconstrained class has errors
$e^{-\Omega(d)}$, whereas~\eqref{eq:local-energy-error} and
\eqref{eq:local-force-error-simplified} give respectively
$e^{-\Omega(\nu_h/p)}$ and $e^{-\Omega(\nu_h)}$.  Since~$\nu_h$ is independent
of~$N$ for a fixed local architecture, locality can potentially impose an $N$-independent
floor on how accurately this particular decoy can be hidden.  The global
equilibrium error nevertheless remains $1-e^{-\Omega(N)}$.
\end{remark}

\begin{remark}
In the regime $m\asymp z^L$ with all other constants fixed, achieving either
displayed per-atom error at most~$\varepsilon$ requires
$\nu_h=\Omega(\log(1/\varepsilon))$, and hence
$L=1+\Omega(\log\log(1/\varepsilon)/\log z)$. 
\end{remark}

\begin{remark}
\label{rem:system-size-transfer}
Because~$\Vhat$ is specified by one site rule, the same model can be
evaluated at different system sizes.  Its per-atom bounds in part~(i) do not
grow with~$N$, whereas the equilibrium discrepancy in part~(ii) approaches
one exponentially fast in~$N$.  Consequently, accurate validation on a small
cell does not by itself certify equilibrium sampling at deployment scale.
\end{remark}

\end{document}